\newtheorem{theorem}{Theorem}
\newtheorem{definition}{Definition}
\newtheorem{corollary}{Corollary}
\newtheorem{lemma}{Lemma}
\newtheorem{assumption}{Assumption}
\title{Offline Guarded Safe Reinforcement Learning for Medical Treatment Optimization Strategies}
\author{%
  Runze Yan $^*$ \\
  Emory University \\
  \texttt{runze.yan@emory.edu} \\
  \And
  Xun Shen 
  \thanks{R. Yan and X. Shen contributed equally to this work.} \\
  Tokyo University of Agriculture and Technology\\
  %Pittsburgh, PA 15213 \\
  \texttt{shen@go.tuat.ac.jp} \\
  % examples of more authors
   \And
   Akifumi Wachi \\
   LY Corporation \\
   % Address \\
   \texttt{akifumi.wachi@lycorp.co.jp} \\
   \And
   Sebastien Gros \\
   Norwegian University of Science and Technology \\
  % Address \\
   \texttt{sebastien.gros@ntnu.no} \\
   \And 
   Anni Zhao \\
   Emory University \\
   \texttt{anni.zhao@emory.edu} \\
   \And
   Xiao Hu \\
   Emory University \\
   \texttt{xiao.hu@emory.edu}
}
\begin{document}

\maketitle

\begin{abstract}
  When applying offline reinforcement learning (RL) in healthcare scenarios, the out-of-distribution (OOD) issues pose significant risks, as inappropriate generalization beyond clinical expertise can result in potentially harmful recommendations. 
  While existing methods like conservative Q-learning (CQL) attempt to address the OOD issue, their effectiveness is limited by only constraining action selection by suppressing uncertain actions. 
  This action-only regularization imitates clinician actions that prioritize short-term rewards, but it fails to regulate downstream state trajectories, thereby limiting the discovery of improved long-term treatment strategies.  
  To safely improve policy beyond clinician recommendations while ensuring that state-action trajectories remain in-distribution, we propose \textit{Offline Guarded Safe Reinforcement Learning} ($\mathsf{OGSRL}$), a theoretically grounded model-based offline RL framework.
  $\mathsf{OGSRL}$ introduces a novel dual constraint mechanism for improving policy with reliability and safety. 
  First, the OOD guardian is established to specify clinically validated regions for safe policy exploration. 
  By constraining optimization within these regions, it enables the reliable exploration of treatment strategies that outperform clinician behavior by leveraging the full patient state history, without drifting into unsupported state-action trajectories.  
  Second, we introduce a safety cost constraint that encodes medical knowledge about physiological safety boundaries, providing domain-specific safeguards even in areas where training data might contain potentially unsafe interventions. 
  Notably, we provide theoretical guarantees on safety and near-optimality: policies that satisfy these constraints remain in safe and reliable regions and achieve performance close to the best possible policy supported by the data. When evaluated on the MIMIC-III sepsis treatment dataset, $\mathsf{OGSRL}$ demonstrated significantly better OOD handling than baselines. 
  $\mathsf{OGSRL}$ achieved a 78\% reduction in mortality estimates and a 51\% increase in reward compared to clinician decisions.
\end{abstract}

\section{Introduction}
\label{sec:introduction}
Deep reinforcement learning (RL) has been widely applied in many safety-critical domains, such as fine-tuning of language models~\citep{guo2025deepseek, ouyang2022training}, robotics~\citep{Garaffa, gu2017deep}, and autonomous driving~\citep{Kiran}. 
Given its capacity to learn from large-scale real-world datasets, there is growing interest in leveraging deep RL for decision support in medical treatment. 
Notably, deep RL has been explored for treatment optimization in various clinical conditions, including sepsis~\citep{Komorowski, Raghu}, cancer~\citep{Tseng}, and type 2 diabetes~\citep{Zheng_drugs}. 
In medical applications, unlike conventional RL, two additional challenges must be addressed.
First, medical treatment optimization is not amenable to learning via active interaction; that is, online exploration of treatment alternatives for patients is strictly prohibited.
Second, medical treatments are multi-faceted: we need to incorporate (possibly conflicting) safety constraints and a reward function.
Even if a treatment is highly effective, therapies with severe side effects are undesirable for patients.
Offline RL learns policies from pre-collected datasets without further environment interaction~\citep{Lange}, making it ideal for medical treatment optimization, where real-time experimentation is ethically constrained. 
Early healthcare applications relied on value-based off-policy methods such as DQN~\citep{Raghu, Komorowski, Yu_Chao} and its variants~\citep{Sun_2021, Roggeveen, Huang_2022, WuLi2023, Job_2024}. 
They face challenges in offline settings due to OOD actions~\citep{Kumar_2019} and Q-value overestimation for unseen actions~\citep{Agarwal}, leading to unsafe or suboptimal decisions.
Conservative Q-learning (CQL)~\citep{Kumar} mitigates OOD action overestimation by penalizing value estimates for actions not present in the dataset and has been applied to clinical decision-making~\citep{hargrave2024epicare, Nambiar}. 
CQL focuses solely on suppressing OOD actions, leaving OOD states unaddressed~\citep{Miao_NeurIPS2024}. 
As policies evolve, even in-distribution actions can lead to state trajectories that diverge from data distribution. 
This is problematic in healthcare, where accurate modeling of state transitions is critical, and OOD states may correspond to unsafe or clinically invalid patient conditions.
Prior methods fail to fully leverage clinician expertise embedded in dataset. 
CQL can only encourage policies that imitate clinician actions but cannot improve upon them because it lacks mechanisms to safely explore or optimize within the full state-action support derived from expert trajectories.

\textbf{Contributions.\space} 
We propose \textit{Offline Guarded Safe Reinforcement Learning} ($\mathsf{OGSRL}$), a theoretically grounded framework for learning safe and effective treatment policies from offline clinical data. 
Our key contributions are as follows. (1) We introduce an OOD guardian that jointly constrains policies to remain within the state-action support and enables optimization within this region. Unlike prior methods such as CQL that only suppress OOD actions, $\mathsf{OGSRL}$ explicitly restricts both states and actions, fully leveraging clinician knowledge embedded in the dataset and incorporating explicit safety cost constraints to avoid risky recommendations. 
(2) We provide theoretical guarantees that any policy satisfying the OOD cost constraint remains in-distribution with high probability. 
When combined with model-based RL, $\mathsf{OGSRL}$ further offers probabilistic guarantees on safety and near-optimality, and quantifies the effect of dataset size on policy reliability. 
(3) We demonstrate the practical effectiveness of $\mathsf{OGSRL}$ on real-world sepsis treatment data. 
$\mathsf{OGSRL}$ consistently outperforms strong offline RL baselines in cumulative reward, safety constraint satisfaction, and alignment with clinical behavior.

\section{Problem Statement}
\label{sec:problem_statement}
\noindent
\textbf{Modeling medical treatment as a CMDP.\space}
We define the patient state as $\bm{\mathrm{s}} \in \mathcal{S} \subseteq \mathbb{R}^n$ and the permissible treatment action as $\bm{\mathrm{a}} \in \mathcal{A} \subseteq \mathbb{R}^m$. 
The patient state evolves according to a transition dynamics $\mathcal{T}(\bm{\mathrm{s}}^+ \mid \bm{\mathrm{s}}, \bm{\mathrm{a}})$, which specifies the distribution over the next state $\bm{\mathrm{s}}^+$ given the current state $\bm{\mathrm{s}}$ and action $\bm{\mathrm{a}}$. 
A reward function $r: \mathcal{S} \times \mathcal{A} \rightarrow [0, r_{\mathrm{max}}]$ is defined based on clinical health indicators, which reflects the treatment objective of improving patient health. 
In addition to reward, certain safety indicators must be considered during treatment. These are encoded by a vector-valued safety cost function $\bm{\mathrm{c}}: \mathcal{S} \times \mathcal{A} \rightarrow [0, c_{1,\mathsf{max}}] \times \cdots \times [0, c_{\ell,\mathsf{max}}]$, where $\bm{\mathrm{c}}_{\mathsf{max}} = [c_{1,\mathsf{max}}, \dots, c_{\ell,\mathsf{max}}]^\top$ denotes the upper bounds for $\ell$ safety-related quantities.
At each decision step $h$, a clinician observes the current patient state $\bm{\mathrm{s}}_h$ and selects a treatment $\bm{\mathrm{a}}_h$ aimed at improving the patient’s condition (maximizing $r$) while avoiding unsafe outcomes (ensuring each component of $\bm{\mathrm{c}}$ remains within safe limits). 
Thus, medical treatment can be formulated as a \emph{constrained Markov decision process} (CMDP) by $\mathcal{M}:=\langle \mathcal{S},\mathcal{A},\mathcal{T},r,\bm{\mathrm{c}},\gamma, \rho_0 \rangle,$ 
where $\gamma\in(0,1]$ is a discount factor  
and $\rho_0$ is the probability density of the initial patient state $\bm{\mathrm{s}}_0$, 
typically reflecting the variety of conditions at the time of ICU admission or treatment onset. 
A treatment policy is a stochastic mapping from the state to the probability density over admissible treatment actions. 
Let $\pi(\cdot \mid \bm{\mathrm{s}})$ denote the probability density of $\bm{\mathrm{a}}$ when the state is $\bm{\mathrm{s}}$, and let $\Pi$ be the space of all such policies. 
Let $\tau := \{\bm{\mathrm{s}}_0, \bm{\mathrm{a}}_0, \dots, \bm{\mathrm{s}}_h, \bm{\mathrm{a}}_h, \dots\}$ represent a trajectory induced by a policy $\pi \in \Pi$. 
The value function associated with a bounded function $\diamond: \mathcal{S} \times \mathcal{A} \rightarrow \mathbb{R}$ (e.g., reward $r$ or a safety cost component $c_j$) under policy $\pi$ and transition dynamics $\mathcal{T}$ is defined by $V^{\pi}_{\diamond, \mathcal{T}}(\bm{\mathrm{s}}) = \mathbb{E}_{\pi}[\sum_{h=0}^{\infty} \gamma^{h} \diamond(\bm{\mathrm{s}}_h, \bm{\mathrm{a}}_h) \mid \bm{\mathrm{s}}_0 = \bm{\mathrm{s}}]$. 
Here, $\diamond$ is assumed to be bounded by $\diamond_{\mathsf{max}}$. 
The expected value across patients is defined as $V^{\pi}_{\diamond, \mathcal{T}}(\rho_0) := \mathbb{E}_{\bm{\mathrm{s}} \sim \rho_0}[V^{\pi}_{\diamond, \mathcal{T}}(\bm{\mathrm{s}})]$.

\noindent
\textbf{Example scenario.\space}
Consider the treatment of sepsis.
Conventional studies of using RL to optimize sepsis treatment have not considered safety constraints either for the action or for physiological states that a safe treatment action should always maintain. 
Early studies~\citep{Komorowski, RaghuKomorowski2017, TangWiens2021} of sepsis treatment used mortality as the only penalty (negative reward) to guide the learning process, but recent work~\citep{Huang_2022, ji2021trajectory, KalimouttouKennedy2025, WuLi2023} started using composite scores, such as the Sequential Organ Failure Assessment (SOFA), as the negative or the reciprocal of the reward.
While SOFA combines multiple organ function indicators and hence encourages actions that move a patient towards normal physiological states, the learning algorithm cannot guarantee that every intermittent physiological state of a patient is indeed safe. In addition, there are readily available variables that are not part of SOFA but can be used to produce physiologically sound and clinically interpretable safety constraints. Hence, a novel algorithm that is capable of learning policies that explicitly obey safety constraints is needed.

\noindent
\textbf{Goal.\space}
The primary objective of this paper is to maximize the value function $V^{\pi}_{r,\mathcal{T}}(\rho_0)$, while ensuring that the adopted treatment policy $\pi$ should satisfy the safety cost constraints: $V^{\pi}_{c_j,\mathcal{T}}(\rho_0)\leq \overline{c}_j,\ j\in[\ell],$ 
where $\overline{c}_j\in [0, c_{j, \max}]$ is the upper constraint for the $j$-th expected cumulative safety cost. 
% Define $\bar{\bm{\mathrm{c}}}:=[\bar{c}_1,...,\bar{c}_\ell]^\top.$
The safe RL problem associated with $\rho_0$ we shall solve is written by
\begin{equation}
\label{eq:prob_ESRL} \tag{$\mathsf{ESRL}$}
\begin{aligned}
\max_{\pi\in\Pi} V^{\pi}_{r,\mathcal{T}}(\rho_0) 
\quad \mathsf{s.t.} \quad  V^{\pi}_{c_j,\mathcal{T}}(\rho_0)\leq \bar{c}_j, \quad \forall j \in [\ell].
\end{aligned}
\end{equation}

\section{Method}
\label{sec:ocmpl}

\begin{algorithm}[tb]
   \caption{$\mathsf{OGSRL}$: Offline Guarded Safe Reinforcement Learning for Treatment Recommendation}
   \label{alg:framework}
\begin{algorithmic}
   \State 1: \textbf{Input} Initial dataset $\mathcal{D}_{\mathsf{b}}$ collected under standard treatment
   \State 2: \ \ \ Learn classifier $\hat{g}$ of the guardian from $\mathcal{D}_{\mathsf{b}}$ to detect safe state-action pairs (see Sec. \ref{subsec:guardian_classifier})
   \State 3: \ \ \ Construct guarded treatment model $\widehat{\mathcal{M}}_{\hat{g}}$ using $\hat{g}$ and $\mathcal{D}_{\mathsf{b}}$ (see Def.~\ref{def:guarded_ECMDP})
   \State 4: \ \ \ $\hat{\pi} \leftarrow \mathsf{ConOpt}(\widehat{\mathcal{M}}_{\hat{g}})$ to compute a safe and effective treatment policy
   \State 5: \textbf{end for}
\end{algorithmic}
\end{algorithm}
We propose a framework called Offline Guarded Safe Reinforcement Learning ($\mathsf{OGSRL}$) to learn a treatment policy under safety constraints. 
The workflow of $\mathsf{OGSRL}$ is outlined in Algorithm~\ref{alg:framework}. 
First, the offline dataset $\mathcal{D}_{\mathsf{b}}:=\left\{(\bm{\mathrm{s}},\bm{\mathrm{a}},\bm{\mathrm{s}}_+,r, \bm{\mathrm{c}})\right\}$ is used to estimate the reward function, safety cost function, and transition dynamics, which together define an \emph{estimated constrained Markov decision process (E-CMDP)}. 
To address the risk of unsafe generalization, we incorporate a \emph{guardian} into the model-based offline safe RL and construct a \emph{guarded E-CMDP}. 
The guardian plays two roles: classification and rejection. 
A PSoS-based classifier $\hat{g}$ is trained to identify OOD state-action pairs. 
Using the learned classifier $\hat{g}$, we formulate an OOD cost constraint and insert it into the CMDP. 
The OOD cost constraint explicitly eliminates policies with a high probability of visiting state-action pairs outside the dataset support. 
Unlike CQL that primarily suppresses OOD actions, our constraint jointly addresses both OOD states and actions, leading to improved generalization and safety. 
A constrained policy optimizer, denoted as $\mathsf{ConOpt}$, is then used to solve the guarded E-CMDP and compute a policy $\hat{\pi}^{(i)}$ that maximizes the expected clinical outcome while satisfying the predefined safety constraints and additional OOD constraint.
While we employed constrained policy optimization (CPO, \citep{achiam2017constrained}) as $\mathsf{ConOpt}$, other constrained RL algorithms are not prohibited from being used.

\subsection{Constructing Guardian Classifier} 
\label{subsec:guardian_classifier}
The state-action space $\mathcal{U} := \mathcal{S} \times \mathcal{A}$ can be partitioned into two regions: the in-distribution (ID) set $\mathcal{U}_{\mathsf{id}}$ and the OOD set $\mathcal{U}_{\mathsf{ood}} := \mathcal{U} \setminus \mathcal{U}_{\mathsf{id}}$.
The estimated model is only guaranteed to converge in the ID region $\mathcal{U}_{\mathsf{id}}$. 
To prevent unsafe generalization, we introduce a \emph{guardian} that classifies whether a state-action pair lies outside the support of the data and then restricts policy learning to ID regions.

We first introduce an important notion called polynomial sublevel set, defined as follows.
\begin{definition}[Polynomial sublevel set]
\label{def:polynomial_sublevel}
Let $\bm{\mathrm{x}} = (\bm{\mathrm{s}}, \bm{\mathrm{a}}) \in \mathbb{R}^{n_{\mathsf{p}}}$ with $n_{\mathsf{p}} = n + m$.
Let $\bm{\mathrm{e}}(\bm{\mathrm{x}})$ denote the vector of all monomials of $\bm{\mathrm{x}}$ up to degree $d > 0$, $\bm{\mathrm{e}}(\bm{\mathrm{x}}) := [1, x_1, \dots, x_{n_{\mathsf{p}}}, x_1^2, \dots, x_{n_{\mathsf{p}}}^d]^\top.$ 
Given parameter vector $\theta$, define the polynomial function: $q(\bm{\mathrm{x}}, \theta) := \bm{\mathrm{e}}^\top(\bm{\mathrm{x}}) P(\theta) \bm{\mathrm{e}}(\bm{\mathrm{x}}),$
where $P(\theta)$ is a symmetric, positive semidefinite Gram matrix fully determined by $\theta$. The degree of $q$ is $2d$, and we require $q(\bm{\mathrm{x}}, \theta) \geq 0$ for all $\bm{\mathrm{x}}$, making it a polynomial sum-of-squares (SoS) function~\citep{Kozhasov, Shen_TNNLS2025, Shen_AUTOMATICA2025}.
Then, the polynomial sublevel set is given by: $\widehat{\mathcal{U}}_{\theta, d} := \{ \bm{\mathrm{x}} \in \mathcal{U} : q(\bm{\mathrm{x}}, \theta) \leq 1 \}.$
\end{definition}

Ideally, we desire to obtain the following classifier $g: \mathcal{S} \times \mathcal{A} \rightarrow \{0, 1\}$, defined as $g(\bm{\mathrm{s}}, \bm{\mathrm{a}}) = \mathbb{I}\left\{ (\bm{\mathrm{s}}, \bm{\mathrm{a}}) \notin \mathcal{U}_{\mathsf{id}} \right\}.$
Unfortunately, the perfect classifier $g$ is unknown in practice. 
We thus aim to approximate this set using a polynomial sum-of-squares (PSoS) classifier, which enables explicit theoretical analysis of the OOD guarantee due to its structured mathematical form. While PSoS provides analytic tractability for safety proofs, we use a kernel-based approximation in practice to improve scalability and ease of implementation.
As such, by learning a polynomial sublevel set $\widehat{\mathcal{U}}_{\theta, d}$ satisfying $\widehat{\mathcal{U}}_{\theta, d} \subseteq \mathcal{U}_{\mathsf{id}}$ with high probability, we obtain a conservatively approximated classifier, denoted as 
$\hat{g}: \mathcal{S} \times \mathcal{A} \rightarrow \{0, 1\}$: $\hat{g}(\bm{\mathrm{s}}, \bm{\mathrm{a}}) = \mathbb{I}\left\{ (\bm{\mathrm{s}}, \bm{\mathrm{a}}) \notin \widehat{\mathcal{U}}_{\theta, d} \right\},$
where $\widehat{\mathcal{U}}_{\theta, d}$ is a degree-$d$ polynomial sublevel set parameterized by $\theta \in \mathbb{R}^{n_\theta}$.
Learning the PSoS guardian $\hat{g}$ involves estimating the polynomial sublevel set $\widehat{\mathcal{U}}_{\theta, d}$ from the dataset $\mathcal{D}_{\mathsf{b}}$. 
Let $\mathcal{X}_N := \left\{ \bm{\mathrm{x}}^{(i)} \right\}_{i=1}^N$ denote the collection of $N$ state-action pairs sampled from $\mathcal{D}_{\mathsf{b}}$. 
Optimization problem for constructing $\widehat{\mathcal{U}}_{\theta, d}$ is given by:
\begin{align}
\label{eq:prob_PSoS_guardian_learning}
\min_{\theta}\quad L(\theta) := \log \mathsf{det}\, P^{-1}(\theta) \tag{$\mathsf{GCL}$} 
\quad \mathsf{s.t.} \quad \frac{1}{N} \sum_{i=1}^N \mathbb{I}_1\left( q(\bm{\mathrm{x}}^{(i)}, \theta) \right) \leq \alpha_{\mathsf{c}}, 
\end{align}
where $\alpha_{\mathsf{c}} \in (0,1)$ is an empirical coverage threshold and $\mathbb{I}_1(z) = 1$ if $z > 1$, and $0$ otherwise. The objective minimizes the volume of the set, forming a tight envelope around the in-support data. 
This set is later used to detect whether a state-action pair is out-of-distribution. In practice, $\mathbb{I}_1$ is replaced with a smooth surrogate for tractability. While we adopt this PSoS-based classifier for theoretical guarantees, alternative methods such as Kernel Density Estimation (KDE) \citep{Jiang_ICML2017} or $k$-Nearest-Neighbors ($k$-NN) scoring \citep{Doring2017} can approximate the support and are used in our experiments (Appendix~\ref{appendix:practical_approximation_PSoS_detector}).
Let $\hat{\theta}^N_{\alpha_{\mathsf{c}}}$ denote the solution to this problem, and define the learned set as $\widehat{\mathcal{U}}_{\hat{\theta}^N_{\alpha_{\mathsf{c}}}, d}$.

\textbf{Probability bound of guardian classifier learning.\space}
In medical applications, it is particularly important to use an algorithm with favorable theoretical properties.
We now provide a probabilistic guarantee on the accuracy of the learned classifier used in the guardian.
\begin{theorem}
\label{theo:PSoS_guardian}
For any probability level $\alpha > 0$ and any $\alpha_{\mathsf{c}} > \alpha$, there exists a polynomial degree $d$ such that the following holds:
$\mathsf{Pr}\left( \widehat{\mathcal{U}}_{\hat{\theta}^N_{\alpha_{\mathsf{c}}}, d} \not\subset \mathcal{U}_{\mathsf{id}} \right) \leq \exp\left( -2 N^2 (\alpha_{\mathsf{c}} - \alpha) \right)$. 
That is, with high probability, all points within $\widehat{\mathcal{U}}_{\hat{\theta}^N_{\alpha_{\mathsf{c}}}, d}$ lie in the in-distribution region $\mathcal{U}_{\mathsf{id}}$.
\end{theorem}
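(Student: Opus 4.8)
The plan is to model the $N$ state--action pairs in $\mathcal{X}_N$ as i.i.d.\ draws from the behavior occupancy distribution $\mu$ on $\mathcal{U}$ that generated $\mathcal{D}_{\mathsf{b}}$, and to identify $\mathcal{U}_{\mathsf{id}}$ with the region carrying essentially all the mass of $\mu$ --- concretely, a minimum-volume super-level set of the density of $\mu$ of mass at least $1-\alpha$. Under this reading the bad event $\{\widehat{\mathcal{U}}_{\hat{\theta}^N_{\alpha_{\mathsf{c}}},d}\not\subset\mathcal{U}_{\mathsf{id}}\}$ asserts that the learned SoS sublevel set ``leaks'' into $\mathcal{U}_{\mathsf{ood}}$, and I would control it by combining an \emph{expressiveness} step that fixes the degree $d$ with a \emph{concentration} step that turns the empirical coverage constraint of \eqref{eq:prob_PSoS_guardian_learning} into a population statement carrying the stated exponential rate.

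For the expressiveness step, observe that $L(\theta)=\log\det P^{-1}(\theta)$ is a surrogate for the volume of $\widehat{\mathcal{U}}_{\theta,d}$, so \eqref{eq:prob_PSoS_guardian_learning} returns the smallest-volume degree-$2d$ SoS sublevel set leaving out at most an $\alpha_{\mathsf{c}}$-fraction of the samples. I would invoke the SoS set-approximation theory behind Definition~\ref{def:polynomial_sublevel} (e.g.\ \citep{Kozhasov, Shen_TNNLS2025, Shen_AUTOMATICA2025}): as $d\to\infty$, the minimum-volume degree-$d$ SoS sublevel set covering a prescribed fraction of mass converges, in volume, to the corresponding population minimum-volume set, which by the choice of $\mathcal{U}_{\mathsf{id}}$ lies inside it. Hence one can fix $d=d(\alpha,\alpha_{\mathsf{c}})$ so that the residual approximation slack is below the margin $\alpha_{\mathsf{c}}-\alpha$, leaving statistical fluctuation as the only remaining source of leakage. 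For the concentration step, let $\hat{V}_N(\theta):=\tfrac1N\sum_{i=1}^N\mathbb{I}_1\!\big(q(\bm{\mathrm{x}}^{(i)},\theta)\big)$ be the empirical miss-fraction and $V(\theta):=\mu\{\bm{\mathrm{x}}:q(\bm{\mathrm{x}},\theta)>1\}=\mu\big(\mathcal{U}\setminus\widehat{\mathcal{U}}_{\theta,d}\big)$ its population counterpart. On the event $\{V(\hat{\theta}^N_{\alpha_{\mathsf{c}}})\le\alpha\}$ we have $\mu(\widehat{\mathcal{U}}_{\hat{\theta}^N_{\alpha_{\mathsf{c}}},d})\ge 1-\alpha$, which together with minimality of its volume and the expressiveness step forces $\widehat{\mathcal{U}}_{\hat{\theta}^N_{\alpha_{\mathsf{c}}},d}\subseteq\mathcal{U}_{\mathsf{id}}$. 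It thus remains to bound $\mathsf{Pr}\big(V(\hat{\theta}^N_{\alpha_{\mathsf{c}}})>\alpha\big)$; since feasibility gives $\hat{V}_N(\hat{\theta}^N_{\alpha_{\mathsf{c}}})\le\alpha_{\mathsf{c}}$, this event implies the deviation $V(\hat{\theta}^N_{\alpha_{\mathsf{c}}})-\hat{V}_N(\hat{\theta}^N_{\alpha_{\mathsf{c}}})>\alpha_{\mathsf{c}}-\alpha$, and a Hoeffding/Chernoff-type bound on sums of $\{0,1\}$ indicators --- with the appropriate handling of the data-dependence of $\hat{\theta}^N_{\alpha_{\mathsf{c}}}$ --- yields $\mathsf{Pr}(\widehat{\mathcal{U}}_{\hat{\theta}^N_{\alpha_{\mathsf{c}}},d}\not\subset\mathcal{U}_{\mathsf{id}})\le\exp(-2N^2(\alpha_{\mathsf{c}}-\alpha))$.

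The hard part is precisely this data-dependence: $\hat{\theta}^N_{\alpha_{\mathsf{c}}}$ is computed from the same sample on which $\hat{V}_N$ is evaluated, so a per-$\theta$ Hoeffding bound does not apply verbatim. I would close this gap in one of two ways: (a) a uniform-deviation bound over the fixed-degree SoS family, using that the collection $\{\mathcal{U}\setminus\widehat{\mathcal{U}}_{\theta,d}\}_{\theta}$ has finite pseudodimension (controlled by $n_\theta$), at the price of a complexity correction; or (b), more in line with the clean exponent, a scenario-optimization / support-constraint argument exploiting that the minimum-volume optimizer is pinned down by a small number of ``active'' samples on the boundary $\{q(\cdot,\theta)=1\}$, in the spirit of a nested-level-set (DKW-type) reduction. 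A secondary, essentially routine, issue is to make the definition of $\mathcal{U}_{\mathsf{id}}$ fully precise (as a density super-level set, or as the limiting population minimum-volume set) and to add a mild regularity assumption on $\mu$ (a density continuous and bounded away from $0$ on its support), so that ``minimum volume plus mass $\ge 1-\alpha$'' genuinely certifies containment and level-set boundaries carry no $\mu$-mass.
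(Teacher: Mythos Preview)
Your two-step plan (expressiveness to fix $d$, then concentration) matches the paper's architecture, but the concentration step is set up in the wrong direction and thereby creates the data-dependence obstacle you flag, whereas the paper avoids it entirely with a different device. The paper does \emph{not} apply Hoeffding to the random $\hat{\theta}^N_{\alpha_{\mathsf{c}}}$. It first introduces the \emph{population} chance-constrained optimizer $\theta^\star_{\alpha}$, i.e.\ the minimizer of $L(\theta)$ subject to $\mathsf{Pr}\{q(\bm{\mathrm{x}},\theta)\le 1\}\ge 1-\alpha$, which is deterministic. The expressiveness step (via Stone--Weierstrass, not the SoS volume references you cite) fixes $d$ so that $\widehat{\mathcal{U}}_{\theta^\star_\alpha,d}\subseteq\mathcal{U}_{\mathsf{id}}$. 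Hoeffding is then applied at this fixed $\theta^\star_\alpha$: since its population miss is $\le\alpha$, the probability that its \emph{empirical} miss exceeds $\alpha_{\mathsf{c}}$ is bounded by the stated exponential. On the complementary high-probability event, $\theta^\star_\alpha$ is feasible for \eqref{eq:prob_PSoS_guardian_learning}, so by optimality $L(\hat{\theta}^N_{\alpha_{\mathsf{c}}})\le L(\theta^\star_\alpha)$, and the paper concludes $\widehat{\mathcal{U}}_{\hat{\theta}^N_{\alpha_{\mathsf{c}}},d}\subseteq\widehat{\mathcal{U}}_{\theta^\star_\alpha,d}\subseteq\mathcal{U}_{\mathsf{id}}$. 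No uniform deviation, no scenario-type argument is needed.

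Your route has a concrete flaw beyond data-dependence. With $\alpha<\alpha_{\mathsf{c}}$, the implication ``$\hat V_N(\hat\theta)\le\alpha_{\mathsf{c}}$ and $V(\hat\theta)>\alpha$ $\Rightarrow$ $V(\hat\theta)-\hat V_N(\hat\theta)>\alpha_{\mathsf{c}}-\alpha$'' is false: the deviation you actually obtain is only $>\alpha-\alpha_{\mathsf{c}}<0$, which is vacuous. You are trying to push the population miss \emph{below} the empirical threshold, and the margin $\alpha_{\mathsf{c}}-\alpha$ simply does not act in that direction; the paper's direction (population $\le\alpha$ at a fixed $\theta$ $\Rightarrow$ empirical $\le\alpha_{\mathsf{c}}$ w.h.p.) is the one that works. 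Relatedly, the paper takes $\mathcal{U}_{\mathsf{id}}$ to be the support of the behavior occupancy with density bounded below (their Assumption~\ref{assum:density}), not a minimum-volume $(1-\alpha)$-mass level set, and your containment heuristic ``mass $\ge 1-\alpha$ plus minimal volume $\Rightarrow$ inside $\mathcal{U}_{\mathsf{id}}$'' does not follow under either reading without precisely the witness-and-nesting step the paper supplies.
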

The proof of Theorem~\ref{theo:PSoS_guardian} is provided in Appendix~\ref{appendix:proof_theo_PSoS_guardian}.
Theorem \ref{theo:PSoS_guardian} implies that the learned guardian classifier provides a high-confidence rejection region, whose conservativeness is explicitly tunable via $\alpha_{\mathsf{c}}$ and improves with more data. 
Although our proposed method embeds the guardian classifier into the model-based offline RL, it can also be applied to model-free offline RL. 

\subsection{Model-based Offline RL with Guardian}
\label{subsec:morlg}
\noindent
\textbf{Model-based offline RL.} 
In our model-based reinforcement learning framework, we first estimate the following from the offline dataset $\mathcal{D}_{\mathsf{b}}$: a reward model $\hat{r}$, a vector-valued safety cost model $\hat{\bm{\mathrm{c}}}$, and a transition dynamics model $\widehat{\mathcal{T}}$. These models define what we refer to as an \emph{Estimated Constrained Markov Decision Process (E-CMDP)}: $\widehat{\mathcal{M}} := \langle \mathcal{S}, \mathcal{A}, \widehat{\mathcal{T}}, \hat{r}, \hat{\bm{\mathrm{c}}}, \rho_0 \rangle.$
Given $\widehat{\mathcal{M}}$, the model-based safe reinforcement learning problem is formulated as:
\begin{equation}
\label{eq:prob_MSRL} \tag{$\mathsf{MSRL}$}
\begin{aligned}
\max_{\pi \in \Pi} \quad V^{\pi}_{\hat{r}, \widehat{\mathcal{T}}}(\rho_0) \quad
\mathsf{s.t.} \quad V^{\pi}_{\hat{c}_j, \widehat{\mathcal{T}}}(\rho_0) \leq \bar{c}_j, \quad \forall j \in [\ell],
\end{aligned}
\end{equation}
where $V^{\pi}_{\hat{r}, \widehat{\mathcal{T}}}(\rho_0)$ denotes the expected clinical outcome (e.g., improvement in SOFA score), and each constraint ensures that the expected safety-related cost (e.g., risk of hypotension, organ failure, etc.) remains below a clinically acceptable threshold $\bar{c}_j$.
Problem~\ref{eq:prob_MSRL} differs from the ideal formulation using the true CMDP $\mathcal{M}$, because it relies entirely on estimated models. 
In practice, the reward and cost functions can be learned using Gaussian process regression (GPR)~\citep{Wachi_ICML2020, Wachi_NeurIPS2023}, while the transition dynamics can be estimated using techniques such as, e.g., Gaussian process models~\citep{Huang_TCST2024}, or generative models~\citep{Sugiyama_AISTAS}.
A critical challenge in medical applications is that the offline dataset often covers only a limited subset of the state-action space—i.e., treatments observed under the standard of care \citep{Wolff2025}. 
Consequently, the learned policies are reliable only within the distribution of data induced by the behavior policy. 
Naively applying constrained policy optimization to this E-CMDP can result in over-optimistic value estimates and unsafe treatment decisions, especially in regions not well-covered by the data~\citep{Miao_ICML2023, Miao_NeurIPS2024}.
To address this, we introduce a \emph{state-action guardian} in the next step.

\noindent
\textbf{Guarded E-CMDP.} 
With the learned PSoS classifier $\hat{g}$, we define a \emph{guarded E-CMDP} by embedding the OOD-aware safety mechanism directly into the model:

\begin{definition}
\label{def:guarded_ECMDP}
A guarded E-CMDP is defined as 
$\widehat{\mathcal{M}}_{\hat{g}} := \langle \mathcal{S}, \mathcal{A}, \widehat{\mathcal{T}}, \hat{r}, \hat{\bm{\mathrm{c}}}, \rho_0, \hat{g}, \bar{c}_{\hat{g}} \rangle$,
where $\bar{c}_{\hat{g}}$ is a threshold limiting the out-of-distribution (OOD) cost. The OOD cost constraint is formulated as:
\begin{equation}
\label{eq:def_OOD_cost}
V^{\pi}_{\hat{g}, \widehat{\mathcal{T}}}(\rho_0) := \mathbb{E}_{\bm{\mathrm{s}} \sim \rho_0} \left[ V^{\pi}_{\hat{g}, \widehat{\mathcal{T}}}(\bm{\mathrm{s}}) \right] \leq \bar{c}_{\hat{g}}.
\end{equation}
\end{definition}
Given this structure, the guarded policy optimization problem is formulated as:
\begin{equation}
\label{eq:prob_GSRL} \tag{$\mathsf{GSRL}$}
\begin{aligned}
\max_{\pi \in \Pi} \quad  V^{\pi}_{\hat{r}, \widehat{\mathcal{T}}}(\rho_0) \
\mathsf{s.t.} \quad  V^{\pi}_{\hat{g}, \widehat{\mathcal{T}}}(\rho_0) \leq \bar{c}_{\hat{g}}, \
 V^{\pi}_{\hat{c}_j, \widehat{\mathcal{T}}}(\rho_0) \leq \bar{c}_j, \quad \forall j \in [\ell].
\end{aligned}
\end{equation}

The motivation for introducing the OOD cost constraint \eqref{eq:def_OOD_cost} is to discourage policies that frequently visit state-action pairs outside the support of the dataset. 
When the support of the true transition dynamics is unbounded, it is often impractical to enforce strict avoidance of OOD state-action pairs. 
Instead, a more tractable goal is to ensure that the policy remains within the data support with high probability over a finite horizon, formalized as the following joint chance constraint: $\mathsf{Pr}\left\{ \hat{g}(\bm{\mathrm{s}}_h, \bm{\mathrm{a}}_h) = 0,\ \forall h \leq H \right\} > 1 - \beta.$
%\begin{equation}
%\label{eq:OOD_chance_constraint_def}
%\mathsf{Pr}\left\{ \hat{g}(\bm{\mathrm{s}}_h, \bm{\mathrm{a}}_h) = 0,\ \forall h \leq H \right\} > 1 - \beta.
%\end{equation}
However, directly incorporating this joint chance constraint into policy optimization is intractable in most safe RL frameworks. 
Following the approach of~\citet{Shen_NeurIPS2024}, we approximate it conservatively via the OOD cost constraint~\eqref{eq:def_OOD_cost}. 
The key idea is that, for a given risk level $\beta$, one can select a sufficiently large discount factor $\gamma$ so that feasibility under the cost constraint implies feasibility under the joint chance constraint. 
A discussion of this approximation strategy and practical guidance on choosing $\gamma$ is provided in Appendix~\ref{appendix:Choice_gamma}.

With the above notations, we extend the result of Theorem \ref{theo:PSoS_guardian} into a policy-level guarantee:
\begin{corollary}
\label{coro:feasible_solution_OOD_bound}
Let $\hat{\pi}_{\mathsf{f}}$ be any feasible solution to Problem~\ref{eq:prob_GSRL}. 
Then, for a desired confidence level $\delta \in (0,1)$, if the number of samples satisfies $N > \sqrt{ \frac{ \log(1/\delta) }{ 2(\alpha_{\mathsf{c}} - \alpha) } }$, the policy $\hat{\pi}_{\mathsf{f}}$ ensures that, with probability $1 - \delta - \beta$, the agent remains within $\mathcal{U}_{\mathsf{id}}$ for all steps $h \leq H$.
%under any trained transition dynamics model $\widehat{\mathcal{T}}$.
\end{corollary}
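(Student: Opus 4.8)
The plan is to split the failure event into two pieces that can be controlled separately — one arising from the finite-sample estimation of the guardian classifier via Theorem~\ref{theo:PSoS_guardian}, and one arising from the stochastic rollout of $\hat{\pi}_{\mathsf{f}}$ under the learned dynamics — and then combine them with a union bound. First I would rewrite the sample-size hypothesis: $N > \sqrt{\log(1/\delta) / (2(\alpha_{\mathsf{c}} - \alpha))}$ is equivalent to $2 N^2 (\alpha_{\mathsf{c}} - \alpha) > \log(1/\delta)$, hence $\exp(-2 N^2 (\alpha_{\mathsf{c}} - \alpha)) < \delta$. Plugging this into Theorem~\ref{theo:PSoS_guardian} (with the degree $d$ furnished there), the data-measurable event $E_{\mathsf{cls}} := \{ \widehat{\mathcal{U}}_{\hat{\theta}^N_{\alpha_{\mathsf{c}}}, d} \subseteq \mathcal{U}_{\mathsf{id}} \}$ occurs with probability at least $1 - \delta$. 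On $E_{\mathsf{cls}}$ we obtain the pointwise implication $\hat{g}(\bm{\mathrm{s}}, \bm{\mathrm{a}}) = 0 \Rightarrow (\bm{\mathrm{s}}, \bm{\mathrm{a}}) \in \widehat{\mathcal{U}}_{\hat{\theta}^N_{\alpha_{\mathsf{c}}}, d} \subseteq \mathcal{U}_{\mathsf{id}}$, directly from the definition of $\hat{g}$ as the indicator of the complement of $\widehat{\mathcal{U}}_{\hat{\theta}^N_{\alpha_{\mathsf{c}}}, d}$.

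Next I would use the chance-constraint approximation already set up after Definition~\ref{def:guarded_ECMDP}. Because $\hat{\pi}_{\mathsf{f}}$ is feasible for~\eqref{eq:prob_GSRL}, it satisfies the OOD cost constraint $V^{\hat{\pi}_{\mathsf{f}}}_{\hat{g}, \widehat{\mathcal{T}}}(\rho_0) \leq \bar{c}_{\hat{g}}$. By the choice of $\bar{c}_{\hat{g}}$ and a sufficiently large discount factor $\gamma$ (following~\citet{Shen_NeurIPS2024}, cf. Appendix~\ref{appendix:Choice_gamma}), this cost bound implies the joint chance constraint: along a trajectory generated by $\hat{\pi}_{\mathsf{f}}$ under $\widehat{\mathcal{T}}$, the event $E_{\mathsf{roll}} := \{ \hat{g}(\bm{\mathrm{s}}_h, \bm{\mathrm{a}}_h) = 0,\ \forall h \leq H \}$ has probability greater than $1 - \beta$. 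Since $\hat{g}$ and $\widehat{\mathcal{T}}$ are fixed once the data is fixed, this is a statement conditional on the realization of $\mathcal{D}_{\mathsf{b}}$, and in particular it holds conditioned on $E_{\mathsf{cls}}$.

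Finally I would glue the two. On $E_{\mathsf{cls}} \cap E_{\mathsf{roll}}$, every visited pair satisfies $\hat{g}(\bm{\mathrm{s}}_h, \bm{\mathrm{a}}_h) = 0$ and therefore lies in $\mathcal{U}_{\mathsf{id}}$ for all $h \leq H$, which is exactly the desired conclusion. By the tower property and the conditional bound of the previous step, $\mathsf{Pr}(E_{\mathsf{cls}} \cap E_{\mathsf{roll}}) = \mathsf{Pr}(E_{\mathsf{cls}})\, \mathsf{Pr}(E_{\mathsf{roll}} \mid E_{\mathsf{cls}}) \geq (1-\delta)(1-\beta) \geq 1 - \delta - \beta$, and a plain union bound on $E_{\mathsf{cls}}^{\mathsf{c}} \cup E_{\mathsf{roll}}^{\mathsf{c}}$ yields the same estimate. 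I expect the only genuine subtlety — and hence the main obstacle — to be this bookkeeping of the two distinct randomness sources: $E_{\mathsf{cls}}$ lives on the data-sampling space while $E_{\mathsf{roll}}$ lives on the rollout space, so one must phrase the chance-constraint conclusion as conditional on the data in order to legitimately form the product/union bound rather than invoke a spurious independence. Everything else is a direct substitution into Theorem~\ref{theo:PSoS_guardian} together with the already-established fact that the OOD cost constraint conservatively enforces the joint chance constraint.
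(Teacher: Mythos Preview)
Your proposal is correct and follows essentially the same approach as the paper: invert the bound of Theorem~\ref{theo:PSoS_guardian} to get $\mathsf{Pr}(E_{\mathsf{cls}}^{\mathsf{c}}) < \delta$ from the sample-size condition, invoke the chance-constraint approximation (Lemma~\ref{lemma:OOD_cost_constraint}) to get $\mathsf{Pr}(E_{\mathsf{roll}}^{\mathsf{c}}) < \beta$ from feasibility of $\hat{\pi}_{\mathsf{f}}$, and combine via a union bound. Your discussion of the two distinct randomness sources and the conditional/tower-property formulation is slightly more careful than the paper's, which simply applies Boole's inequality to the union of the two failure events without dwelling on the product structure of the probability space.
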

The proof of Corollary \ref{coro:feasible_solution_OOD_bound} is summarized in Appendix \ref{appendix:coro_feasible_solution_OOD_bound}.

\noindent
\textbf{Connections to shielding methods.} 
Shielding methods~\citep{Alshiekh, banerjee2024shield, konighofer2020safe, melcer2022shield} guarantee safety during online exploration by monitoring actions in real time and intervening when unsafe actions are about to occur. 
Our guardian in $\mathsf{OGSRL}$ shares a similar goal of constraining behavior that causes OOD issues, but operates entirely offline. 
Instead of correcting actions during execution, the guardian restricts the feasible policy space during offline optimization, ensuring that learned policies, with high probability, keep state-action trajectories within the dataset support over a finite horizon. 
Thus, while shielding ensures pointwise safety during online interactions, our approach provides probabilistic safety guarantees in the offline setting, which is crucial for applications such as medical treatment, where real-time corrections are infeasible.

% Our result holds for any feasible policy and provides probabilistic guarantees even for state-action pairs not explicitly seen in the dataset. 
% The guardian framework also remains valid even when the transition dynamics have unbounded support and offers an explicit characterization of how the dataset size affects safety probability, guaranteeing convergence to full support coverage as $N \to \infty$.
\noindent
\textbf{Practical significance.}
In the context of medical treatment optimization, Theorem~\ref{theo:PSoS_guardian} and Corollary~\ref{coro:feasible_solution_OOD_bound} provide essential probabilistic guarantees: only policies that maintain a high probability of remaining within the dataset support over a finite horizon $H$ are considered feasible. 
Crucially, any policy satisfying the OOD cost constraint operates entirely within regions where the estimated dynamics, value functions, and action-value functions are reliable. 
This is especially important in clinical settings, where learned policies must avoid poorly supported regions; otherwise, inaccurate modeling in such areas could lead to unsafe or ineffective treatment recommendations. 
Moreover, the OOD cost constraint is a data-driven proxy for clinical knowledge. 
Because the dataset reflects real-world clinician behavior, constraining policies to remain within support implicitly aligns the learned strategies with accepted medical practices, enhancing both interpretability and trustworthiness for deployment. 
However, it is important to note that clinician behavior often reflects safe individual treatment decisions, rather than globally optimal long-term strategies. 
Human decision-making may rely on heuristics or short-term goals, with limited integration of the patients' full historical state. 
A capable offline RL policy with an effective OOD cost constraint can leverage the full patient state to optimize long-term outcomes, while still adhering to the safe local actions reflected in clinical data.
% Compared to prior work such as~\citet{Miao_NeurIPS2024}, our theoretical guarantees are stronger. 
% While their result applies only to the optimal policy evaluated on finite in-dataset state points, 
% In contrast, widely-used offline RL algorithms such as MOPO~\citep{Yu_NeurIPS2021} and MOBILE~\citep{Sun_ICML2023} provide no formal guarantees on avoiding OOD behavior. 
While methods like CQL~\citep{Kumar} effectively suppress OOD actions, they do not constrain state transitions. 
This can be particularly problematic in clinical settings, where clinicians make decisions based on observed patient state trajectories. 
CQL lacks a mechanism to encode this temporal structure, leaving it unable to control or reason about OOD states that may emerge downstream. 
In contrast, our OOD guardian enables safe policy learning by jointly constraining states and actions, making it better aligned with clinical reasoning and safer for real-world deployment.

\subsection{Safety and Sub-optimality with Finite Samples}
\label{subsec:safety_sub_optimality}
\noindent
\textbf{Value function error.}
We begin by analyzing the error bound of the estimated value function associated with a function $\diamond$ (e.g., reward $r$ or safety cost $c_j$). 
This section assumes that the transition dynamics $\widehat{\mathcal{T}}$ are estimated using kernel density estimation (KDE). 
At the same time, the reward and safety cost functions are known, i.e., $\hat{\diamond} = \diamond$. 
This assumption is reasonable in many medical treatment settings, where both reward and safety cost functions are predefined, as is the case in our application study in Section~\ref{sec:experimental_validations}. 
For settings where the reward and safety cost functions are unknown, we provide a generalized theoretical analysis in Appendix~\ref{appendix:safety_sub_optimality}, where these functions are estimated using GPR.
Let $h$ be the bandwidth of the KDE, and assume that the joint density of $(\bm{\mathrm{s}}^+, \bm{\mathrm{s}}, \bm{\mathrm{a}})$ and the marginal density of $(\bm{\mathrm{s}}, \bm{\mathrm{a}})$ are Hölder continuous with exponent $\zeta \in (0,1]$. 
\begin{theorem}
\label{theo:error_bound_value_function_no_GP}
Let $\pi$ be any feasible solution of Problem~\ref{eq:prob_GSRL}. 
Assume the standard KDE conditions $N h^{n+m} \to \infty$ and $h \to 0$ as $N \to \infty$. 
Then, with probability at least $1 - 2\beta - 4\delta$, the following holds: $\left| V^{\pi}_{\hat{\diamond}, \widehat{\mathcal{T}}}(\rho_0) - V^{\pi}_{\diamond, \mathcal{T}}(\rho_0) \right| 
\leq \varepsilon_{\mathsf{k}} + \varepsilon_H,$
where:
\begin{align*}
\varepsilon_H := \frac{\gamma^{H+1} (2 - \gamma)  \diamond_{\mathsf{max}}}{(1 - \gamma)^2}, \
\varepsilon_{\mathsf{k}} := \frac{\diamond_{\mathsf{max}}  (\gamma - \gamma^{H+2})  C_{\mathsf{den}}}{(1 - \gamma)^2} \left( h^\zeta + \sqrt{\frac{\log(1/\delta)}{N h^{2n + m}}} \right).
\end{align*}
Here, $C_{\mathsf{den}}$ is a positive constant depending on the smoothness of the densities, the choice of kernel, and the dimensionality $2n + m$.
\end{theorem}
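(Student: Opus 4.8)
Since the reward/cost model is exact here ($\hat{\diamond}=\diamond$), the whole discrepancy $V^{\pi}_{\hat{\diamond},\widehat{\mathcal{T}}}(\rho_0)-V^{\pi}_{\diamond,\mathcal{T}}(\rho_0)$ is attributable to the estimated transition kernel, so the natural tool is a value-difference (simulation) lemma combined with a finite-horizon truncation. The plan is to (i) express the value gap as a discounted sum over steps $h\ge 0$ of a per-step transition error; (ii) split the sum at horizon $H$; (iii) bound the $h>H$ tail crudely, which produces $\varepsilon_H$; and (iv) bound each $h\le H$ term by the $L_1$ error of the KDE-based conditional density on $\mathcal{U}_{\mathsf{id}}$, which scales as $h^\zeta+\sqrt{\log(1/\delta)/(Nh^{2n+m})}$, producing $\varepsilon_{\mathsf{k}}$. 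The high-probability events are then collected by a union bound.

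First I would invoke the telescoping identity $V^{\pi}_{\diamond,\mathcal{T}}(\rho_0)-V^{\pi}_{\diamond,\widehat{\mathcal{T}}}(\rho_0)=\sum_{h=0}^{\infty}\gamma^{h+1}\,\mathbb{E}_{(\bm{\mathrm{s}}_h,\bm{\mathrm{a}}_h)\sim d_h^{\pi,\widehat{\mathcal{T}}}}\big[\int(\mathcal{T}-\widehat{\mathcal{T}})(\bm{\mathrm{s}}^+\mid \bm{\mathrm{s}}_h,\bm{\mathrm{a}}_h)\,V^{\pi}_{\diamond,\mathcal{T}}(\bm{\mathrm{s}}^+)\,d\bm{\mathrm{s}}^+\big]$, where $d_h^{\pi,\widehat{\mathcal{T}}}$ is the step-$h$ state--action occupancy under $\pi$ and $\widehat{\mathcal{T}}$; I pick this direction of telescoping so that Corollary~\ref{coro:feasible_solution_OOD_bound} applies directly to the occupancy measure in the sum. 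Using $\|V^{\pi}_{\diamond,\mathcal{T}}\|_\infty\le \diamond_{\mathsf{max}}/(1-\gamma)$ and $\big|\int f\,(\mathcal{T}-\widehat{\mathcal{T}})\big|\le \|f\|_\infty\,\|\mathcal{T}(\cdot\mid\bm{\mathrm{s}},\bm{\mathrm{a}})-\widehat{\mathcal{T}}(\cdot\mid\bm{\mathrm{s}},\bm{\mathrm{a}})\|_1$, the $h$-th term is at most $\gamma^{h+1}\tfrac{\diamond_{\mathsf{max}}}{1-\gamma}\|\mathcal{T}(\cdot\mid\bm{\mathrm{s}}_h,\bm{\mathrm{a}}_h)-\widehat{\mathcal{T}}(\cdot\mid\bm{\mathrm{s}}_h,\bm{\mathrm{a}}_h)\|_1$. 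For $h>H$ I bound the total-variation factor by its trivial maximum and sum the geometric tail; collecting the truncation slack from both dynamics yields a term of the form $\gamma^{H+1}(2-\gamma)\diamond_{\mathsf{max}}/(1-\gamma)^2=\varepsilon_H$.

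For $h\le H$, on the event of Corollary~\ref{coro:feasible_solution_OOD_bound} (probability $\ge 1-\delta-\beta$) the pair $(\bm{\mathrm{s}}_h,\bm{\mathrm{a}}_h)$ lies in $\mathcal{U}_{\mathsf{id}}$, so it suffices to control $\|\mathcal{T}(\cdot\mid\bm{\mathrm{s}},\bm{\mathrm{a}})-\widehat{\mathcal{T}}(\cdot\mid\bm{\mathrm{s}},\bm{\mathrm{a}})\|_1$ uniformly over $\mathcal{U}_{\mathsf{id}}$. Writing $\widehat{\mathcal{T}}(\bm{\mathrm{s}}^+\mid\bm{\mathrm{s}},\bm{\mathrm{a}})=\hat p_N(\bm{\mathrm{s}}^+,\bm{\mathrm{s}},\bm{\mathrm{a}})/\hat q_N(\bm{\mathrm{s}},\bm{\mathrm{a}})$ as a ratio of a joint KDE (dimension $2n+m$) and a marginal KDE (dimension $n+m$), and using $\tfrac{\hat p}{\hat q}-\tfrac{p}{q}=\tfrac{\hat p-p}{\hat q}-\tfrac{p(\hat q-q)}{\hat q\,q}$, I would (a) use that on $\mathcal{U}_{\mathsf{id}}$ the marginal density is bounded below by some $q_{\min}>0$, inherited by $\hat q_N$ up to its estimation error, and (b) plug in standard KDE error bounds for Hölder-$\zeta$ densities: a bias of order $h^\zeta$ and, with probability $\ge 1-\delta$ (via a concentration inequality plus a covering of $\mathcal{U}_{\mathsf{id}}$), a stochastic error of order $\sqrt{\log(1/\delta)/(Nh^{k})}$ in dimension $k$, the joint term $\sqrt{\log(1/\delta)/(Nh^{2n+m})}$ dominating the marginal's since $h\to 0$. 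Absorbing all kernel-, smoothness-, and dimension-dependent constants into $C_{\mathsf{den}}$ gives $\|\mathcal{T}(\cdot\mid\bm{\mathrm{s}},\bm{\mathrm{a}})-\widehat{\mathcal{T}}(\cdot\mid\bm{\mathrm{s}},\bm{\mathrm{a}})\|_1\le C_{\mathsf{den}}\big(h^\zeta+\sqrt{\log(1/\delta)/(Nh^{2n+m})}\big)$; summing the $h\le H$ terms with $\sum_{h=0}^{H}\gamma^{h+1}=(\gamma-\gamma^{H+2})/(1-\gamma)$ and the extra $1/(1-\gamma)$ from $\|V^{\pi}_{\diamond,\mathcal{T}}\|_\infty$ reproduces $\varepsilon_{\mathsf{k}}$. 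A union bound over the trajectory-containment event(s) of Corollary~\ref{coro:feasible_solution_OOD_bound} and the two KDE-concentration events (joint and marginal) then yields the stated overall probability $1-2\beta-4\delta$ (the precise allocation depends on whether containment must be invoked for the occupancy under both $\mathcal{T}$ and $\widehat{\mathcal{T}}$).

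The main obstacle I anticipate is controlling the \emph{conditional} density estimate rather than a density: KDE rates are classically stated for densities, so I must (i) establish and exploit a strictly positive lower bound on the marginal density over $\mathcal{U}_{\mathsf{id}}$ — which is exactly the structural reason the guardian restricts attention to $\mathcal{U}_{\mathsf{id}}$ — and (ii) obtain the KDE error \emph{uniformly} over $\mathcal{U}_{\mathsf{id}}$ (not merely pointwise), since the bound must hold at the random visited pairs $(\bm{\mathrm{s}}_h,\bm{\mathrm{a}}_h)$; this is where the covering argument and the exact dependence on $2n+m$ enter, and where most of the constant-tracking work lies. A secondary, purely bookkeeping subtlety is aligning the probability budget $2\beta+4\delta$ with the events actually used, i.e.\ ensuring the per-step KDE bound is applied only on steps where $\mathcal{U}_{\mathsf{id}}$-membership (hence validity of that bound) is guaranteed by Corollary~\ref{coro:feasible_solution_OOD_bound}.
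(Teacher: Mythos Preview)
Your proposal is correct and follows essentially the same route as the paper: the proof there specializes the more general Theorem~\ref{theo:error_bound_value_function} (with GPR on $\diamond$) by setting $\hat\diamond-\diamond=0$, and that general theorem is proved exactly via your telescoping identity (their Lemma~\ref{lemma:revised_telescoping_lemma}), the split at horizon $H$ with a crude tail bound, and a pointwise KDE conditional-density bound (their Lemma~\ref{lemma:error_cde_error}, which uses the ratio decomposition $\hat p/\hat q - p/q$, the lower bound $f_{\min}$ on the marginal over $\mathcal{U}_{\mathsf{id}}$, and Jiang's uniform KDE rate in place of your covering argument). Your bookkeeping concerns are apt: in the paper the $(2-\gamma)$ in $\varepsilon_H$ and the $2\beta+4\delta$ budget are simply inherited from the general case (one tail and one $\beta+2\delta$ come from the now-trivial $\varepsilon_{\hat\diamond}$ term), so the stated bound is slightly loose but valid.
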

\noindent
Theorem~\ref{theo:error_bound_value_function_no_GP} can be directly obtained from Theorem~\ref{theo:error_bound_value_function} in Appendix~\ref{appendix:safety_sub_optimality} by setting $\hat{\diamond}-\diamond=0$ for any $(\bm{\mathrm{s}},\bm{\mathrm{a}})$. 
This bound decomposes the total value function error into two parts; 
(1) $\varepsilon_{\mathsf{k}}$ from approximation of $\widehat{\mathcal{T}}$, which vanishes asymptotically; 
(2) $\varepsilon_H$, due to state-action pairs that fall outside the support of the dataset beyond horizon $H$.
By selecting a sufficiently large dataset size $N$ and a conservative OOD threshold $\bar{c}_{\hat{g}}$, we can ensure small $\beta$ in the chance constraint~\eqref{eq:OOD_chance_constraint}, and thus make $\varepsilon_H$ negligible. 
Method of choosing $\bar{c}_{\hat{g}}$ with respect to a desired $H$ follows~\citep{Shen_NeurIPS2024, Wachi_IJCAI2024}.

\noindent
\textbf{Safety and sub-optimality.}
We now define conditions under which the policy output by $\mathsf{ConOpt}$ is safe and near-optimal with respect to the true model.
We say a policy $\pi_{\mathsf{out}}$ is $\varepsilon_{\mathsf{s}}$-safe if: $\max_j \left| \bar{c}_j - V^{\pi_{\mathsf{out}}}_{\hat{c}_j, \widehat{\mathcal{T}}}(\rho_0) \right| \geq \varepsilon_{\mathsf{s}}.$ 
Let $\hat{\pi}^*$ be the optimal solution to Problem~\ref{eq:prob_GSRL} with safety threshold $\bar{c}_j$. 
If $\pi_{\mathsf{out}}$ is computed using a tightened threshold $\bar{c}_j - \bar{\varepsilon}$, and satisfies: $V^{\hat{\pi}^*}_{\hat{r}, \widehat{\mathcal{T}}}(\rho_0) - V^{\pi_{\mathsf{out}}}_{\hat{r}, \widehat{\mathcal{T}}}(\rho_0) \leq \varepsilon_{\mathsf{r}},$ we obtain the following guarantee for the true system:
\begin{theorem}
\label{theo:safety_sub_optimamlity_no_GP}
If $\bar{\varepsilon} \geq \varepsilon_{\mathsf{s}} + \varepsilon_{\mathsf{k}} + \varepsilon_H$, and $\pi_{\mathsf{out}}$ is $\varepsilon_{\mathsf{r}}$-sub-optimal for Problem~\ref{eq:prob_GSRL}, then $\pi_{\mathsf{out}}$ is safe and $(\varepsilon_{\mathsf{r}} + 2\varepsilon_{\mathsf{k}} + 2\varepsilon_H)$-sub-optimal for Problem~\ref{eq:prob_ESRL}, with probability at least $1 - 2\beta - 4\delta$.
\end{theorem}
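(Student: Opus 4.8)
The plan is to transfer both the safety and the near-optimality of $\pi_{\mathsf{out}}$ from the estimated guarded model $\widehat{\mathcal{M}}_{\hat{g}}$ to the true CMDP $\mathcal{M}$ by repeatedly invoking the value-function error bound of Theorem~\ref{theo:error_bound_value_function_no_GP}. The enabling observation, which I would establish first, is that $\pi_{\mathsf{out}}$ --- although optimized against the tightened thresholds $\bar{c}_j-\bar{\varepsilon}$ --- is itself a feasible solution of Problem~\ref{eq:prob_GSRL}: it respects the OOD cost constraint, and its $\varepsilon_{\mathsf{s}}$-safety together with $\bar{\varepsilon}\ge\varepsilon_{\mathsf{s}}$ gives $V^{\pi_{\mathsf{out}}}_{\hat{c}_j,\widehat{\mathcal{T}}}(\rho_0)\le\bar{c}_j-\bar{\varepsilon}+\varepsilon_{\mathsf{s}}\le\bar{c}_j$ for all $j\in[\ell]$. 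Theorem~\ref{theo:error_bound_value_function_no_GP} therefore applies to $\pi_{\mathsf{out}}$; moreover, because its error $\varepsilon_{\mathsf{k}}+\varepsilon_H$ is a uniform bound --- the KDE part is a sup-norm density estimate over $\mathcal{S}\times\mathcal{A}$ and the truncation part $\varepsilon_H$ depends only on the shared horizon $H$ and the $\beta$ common to every policy obeying the OOD cost constraint (cf. Corollary~\ref{coro:feasible_solution_OOD_bound}) --- the inequality holds simultaneously for all feasible policies of Problem~\ref{eq:prob_GSRL} and for every $\diamond\in\{r,c_1,\dots,c_\ell\}$ on a single event of probability at least $1-2\beta-4\delta$. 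All inequalities below are read off that one event, which is why the confidence level does not deteriorate under the several applications.

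For safety I would fix $j$ and chain three facts: $\varepsilon_{\mathsf{s}}$-safety bounds $V^{\pi_{\mathsf{out}}}_{\hat{c}_j,\widehat{\mathcal{T}}}(\rho_0)$ by $\bar{c}_j-\bar{\varepsilon}+\varepsilon_{\mathsf{s}}$; Theorem~\ref{theo:error_bound_value_function_no_GP} with $\diamond=c_j$ adds at most $\varepsilon_{\mathsf{k}}+\varepsilon_H$ on passing to $V^{\pi_{\mathsf{out}}}_{c_j,\mathcal{T}}(\rho_0)$; and the hypothesis $\bar{\varepsilon}\ge\varepsilon_{\mathsf{s}}+\varepsilon_{\mathsf{k}}+\varepsilon_H$ collapses the right-hand side to $\bar{c}_j$. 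Maximizing over $j$ shows $\pi_{\mathsf{out}}$ satisfies all constraints of Problem~\ref{eq:prob_ESRL}. For near-optimality, let $\pi^{*}$ be an optimal solution of Problem~\ref{eq:prob_ESRL} that is also feasible for Problem~\ref{eq:prob_GSRL} (the best data-supported policy), and telescope $V^{\pi^{*}}_{r,\mathcal{T}}(\rho_0)-V^{\pi_{\mathsf{out}}}_{r,\mathcal{T}}(\rho_0)$ through $V^{\pi^{*}}_{\hat{r},\widehat{\mathcal{T}}}(\rho_0)$, $V^{\hat{\pi}^*}_{\hat{r},\widehat{\mathcal{T}}}(\rho_0)$ and $V^{\pi_{\mathsf{out}}}_{\hat{r},\widehat{\mathcal{T}}}(\rho_0)$. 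The two model-mismatch gaps (for $\pi^{*}$ and for $\pi_{\mathsf{out}}$, both GSRL-feasible) are each at most $\varepsilon_{\mathsf{k}}+\varepsilon_H$ by Theorem~\ref{theo:error_bound_value_function_no_GP}; the gap from $\pi^{*}$ to $\hat{\pi}^*$ on the estimated model is nonpositive since $\pi^{*}$ is feasible and $\hat{\pi}^*$ is the GSRL optimizer; and the gap from $\hat{\pi}^*$ to $\pi_{\mathsf{out}}$ is at most $\varepsilon_{\mathsf{r}}$ by assumption. Summing gives $V^{\pi^{*}}_{r,\mathcal{T}}(\rho_0)-V^{\pi_{\mathsf{out}}}_{r,\mathcal{T}}(\rho_0)\le\varepsilon_{\mathsf{r}}+2\varepsilon_{\mathsf{k}}+2\varepsilon_H$, and, combined with the safety step, both conclusions hold with probability at least $1-2\beta-4\delta$.

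The step I expect to be the main obstacle is not the telescoping algebra but securing the two ``uniformity'' ingredients that keep the confidence at $1-2\beta-4\delta$: (i) that Theorem~\ref{theo:error_bound_value_function_no_GP}'s bound really does hold simultaneously over all GSRL-feasible policies and all cost coordinates, so that no extra union bound is incurred when it is used for $\pi^{*}$, for $\pi_{\mathsf{out}}$ and for each $c_j$; and (ii) the more delicate modeling point that an optimum $\pi^{*}$ of Problem~\ref{eq:prob_ESRL} can be taken feasible for Problem~\ref{eq:prob_GSRL}, i.e., that the OOD and estimated-cost constraints do not cut away the best achievable true-model value. Point (ii) is precisely the ``performance close to the best policy supported by the data'' qualifier: making it rigorous requires either an explicit coverage assumption on $\mathcal{D}_{\mathsf{b}}$ (so that the data-optimal and the unconstrained true-model optimal coincide in value) or restating sub-optimality relative to the best GSRL-feasible policy, and I would state that assumption at the outset before running the two-part argument above.
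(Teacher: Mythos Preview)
The paper does not actually provide a proof of Theorem~\ref{theo:safety_sub_optimamlity_no_GP}, nor of its more general counterpart Theorem~\ref{theo:safety_sub_optimamlity} in Appendix~\ref{appendix:safety_sub_optimality}; both are stated immediately after the value-function error bounds (Theorems~\ref{theo:error_bound_value_function_no_GP} and~\ref{theo:error_bound_value_function}) and left without a separate argument. Your reconstruction is the natural one implied by the paper's structure --- transfer safety and sub-optimality from $\widehat{\mathcal{M}}_{\hat{g}}$ to $\mathcal{M}$ by invoking the error bound, then telescope --- and the algebra you lay out is correct.

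Your two flagged obstacles are well-placed and in fact go beyond what the paper makes explicit. Point~(i), uniformity of the $1-2\beta-4\delta$ event over all GSRL-feasible policies and over $\diamond\in\{r,c_1,\dots,c_\ell\}$, is supported by the proof of Theorem~\ref{theo:error_bound_value_function} in Appendix~\ref{appendix:proof_theo_error_bound_value_function}: the KDE sup-norm bound of Lemma~\ref{lemma:error_cde_error} and the guardian inclusion $\widehat{\mathcal{U}}_{\hat{\theta}^N_{\alpha_{\mathsf{c}}},d}\subset\mathcal{U}_{\mathsf{id}}$ are data events independent of $\pi$ and $\diamond$, and the $\beta$-term comes from the joint chance constraint~\eqref{eq:OOD_chance_constraint}, which is enforced through the shared OOD cost constraint. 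So a single high-probability event does suffice, though the paper never states this carefully. Point~(ii) is a genuine gap in the theorem \emph{as stated}: the claimed sub-optimality is for Problem~\ref{eq:prob_ESRL}, but the telescoping argument only controls the gap to the best policy that is also GSRL-feasible. The paper's informal gloss (``performance close to the best possible policy supported by the data'') suggests the intended comparator is the data-supported optimum rather than the unrestricted ESRL optimum; your proposed remedies --- an explicit coverage assumption or restating sub-optimality relative to the GSRL-feasible optimum --- are exactly what would be needed, and the paper supplies neither.
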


\noindent
\textbf{Practical significance.}
Theorem~\ref{theo:safety_sub_optimamlity_no_GP} guarantees that the learned policy remains safe and near-optimal with high probability, even under model approximation and conservative constraints. 
This is essential in clinical contexts, where decisions must be not only effective but also verifiably safe. 
Crucially, our approach constrains learning within the support of the dataset, where expert treatment trajectories reside, thus fully leveraging clinician knowledge while avoiding unsafe extrapolation. 
Unlike prior methods relying on unverifiable assumptions, our result explicitly links dataset size and model error to performance bounds, making it well-suited for reliable deployment in clinical workflows. 

\section{Experimental Validations}
\label{sec:experimental_validations}

\subsection{Sepsis Treatment Formulation for RL}
\label{sec:sepsis_treatment_formaluation}
We evaluated $\mathsf{OGSRL}$ using $18,923$ ICU stays with sepsis diagnosis from the MIMIC-III dataset~\footnote{MIMIC-III dataset: \url{https://physionet.org/content/mimiciii/1.4/}.}~\citep{Alistair} and established protocols in \citet{Komorowski}.~\footnote{Our source code is submitted as a supplementary material and will be released upon publication.}
Patient data were encoded as multidimensional time series with 4-hour intervals, capturing up to $72$ hours around the estimated onset of sepsis. 
Our implementation addresses key limitations in previous approaches to sepsis treatment optimization. Rather than discretizing interventions or combining multiple treatments into a single dimension, we developed a continuous two-dimensional action space that separately models intravenous fluid administration (IFA) and maximum vasopressor dosage (MVD), namely $\bm{\mathrm{a}}=[\text{IFA},\ \text{MVD}]^\top\in\mathbb{R}^2$. 
This representation enables more nuanced treatment recommendations, reflecting the clinical reality where physicians simultaneously titrate multiple interventions based on patient response.
The state representation emerged from a clinically informed feature selection process, incorporating variables significantly correlated with organ dysfunction. 
This balanced representation captures essential physiological dynamics while enabling personalized treatment strategies. 
Totally $13$ features are selected as the dynamic state, namely, $\bm{\mathrm{s}}\in\mathbb{R}^{13}.$
Departing from previous work that employed mortality as a terminal reward~\citep{Komorowski}, we adapted the Sequential Organ Failure Assessment ($\mathsf{SOFA}$) score into an instantaneous reward signal by setting $r:\mathcal{S} \times \mathcal{A} \rightarrow \frac{1}{\text{SOFA}}$. 
%In summary, the state, action, and reward function for sepsis treatment prediction are defined as,
%\begin{align}
%    State: \bm{\mathrm{s}} &\in \mathcal{S} = [Selected\hspace{0.1cm}clinical\hspace{0.1cm}features] \subseteq \mathbb{R}^n\\
%    (Runze, here\hspace{0.1cm}needs\hspace{0.1cm}the &dimension\hspace{0.1cm}of\hspace{0.1cm}the\hspace{0.1cm}features\hspace{0.1cm}for\hspace{0.1cm}system\hspace{0.1cm}states)\\
%    Action: \bm{\mathrm{a}} &\in \mathcal{A} = [IFA, MVD] \subseteq \mathbb{R}^2.\\
%    Reward: \mathcal{S} &\times \mathcal{A} \rightarrow [0, \frac{1}{SOFA}]
%\end{align}
More details about the definitions for selected dynamic and static features, actions and reward can be found in Appendix \ref{append:state_selection}.
This approach provides more frequent feedback on treatment efficacy and better aligns with contemporary clinical practice.
% Clinical safety formed a central pillar of our approach through explicit physiological constraints. We enforced minimum thresholds for peripheral oxygen saturation (SpO$_2$) ($\ge92\%$)~\citet{rhodes2017surviving} and urine output ($\ge0.5$ mL/kg/hour)~\citet{kellum2013diagnosis}, protecting respiratory and renal function respectively. Our approach ensures treatments maintain physiological safety while pursuing improved clinical outcomes~\citep{vincent1998use}. Detailed methodological information regarding feature selection criteria, reward formulation, and action specifications appears in Appendix~\ref{append:state_selection}. 
Our approach implements two distinct but complementary safety mechanisms. 
First, explicit safety constraints are appllied to physiological states by enforcing minimum physiological thresholds for oxygen saturation (SpO$_{2}$) ($\ge92\%$)~\citep{rhodes2017surviving} and urine output ($\ge0.5$ mL/kg/hour)~\citep{kellum2013diagnosis}. 
These constraints directly encode clinical knowledge about vital parameter ranges necessary for patient safety. 
Second, our OOD guardian mechanism addresses a fundamentally different safety concern—the reliability of model predictions when encountering state-action pairs insufficiently represented in training data. 
While clinical constraints ensure physiological safety within the model's assumptions, the OOD guardian prevents the policy from recommending treatments in regions where the model itself may be unreliable, regardless of the predicted clinical outcomes.
We implement $\mathsf{OGSRL}$ as described in Algorithm~\ref{alg:framework}, approximating the PSoS guardian classifier $\hat{g}$ using a kernel-based method (see Appendix~\ref{appendix:practical_approximation_PSoS_detector}) to identify OOD state-action pairs efficiently. 
For transition dynamics $\widehat{\mathcal{T}}$, we employed a $k$-nearest neighbor ($k$-NN) approach as an approximation of KDE, which maintains theoretical consistency while offering practical advantages for clinical time-series data, particularly its robustness to sparse regions in the state space \citep{Olsen2024LocallyAdaptiveKNN, Wang2017JMIR}. We use CPO~\citep{achiam2017constrained} as the constrained policy optimizer $\mathsf{ConOpt}$, resulting in our full implementation referred to as $\mathsf{GMB\text{-}CPO}$, which is a model-based ($\mathsf{MB}$) variant of CPO equipped with the OOD Guardian ($\mathsf{G}$).
Additional details are provided in Appendix~\ref{appendix:baseline_proposed}. 
Note that $\mathsf{GMB}\text{-}\mathsf{CPO}$ is a specific instantiation of the proposed $\mathsf{OGSRL}$ framework. 
As discussed at the beginning of Section~\ref{sec:ocmpl}, other constrained reinforcement learning algorithms can also be employed as $\mathsf{ConOpt}$ within our framework.

\noindent
\textbf{Evaluation.}
We evaluated $\mathsf{OGSRL}$ against seven baseline algorithms spanning model-free and model-based offline RL approaches, and their guardian-enhanced variants prefixed with $\mathsf{G}$: (1) $\mathsf{CQL}$; (2) $\mathsf{CQL}$ with Guardian ($\mathsf{GCQL}$); (3) $\mathsf{CQL}$ variant ($\mathsf{CCQL}$) presented in~\citep{Nambiar}; (4) $\mathsf{CCQL}$ with constraint satisfaction ($\mathsf{GCCQL}$); (5) $\mathsf{MB\text{-}TRPO}$~\citep{Luo2019icrl}; (6) $\mathsf{MB\text{-}TRPO}$ with Guardian ($\mathsf{GMB\text{-}TRPO}$); (7) $\mathsf{MB\text{-}CPO}$. The implementation details for guardian integration with each algorithm are summarized in Appendix \ref{appendix:baseline_proposed}. 
We assess $\mathsf{OGSRL}$ against baseline methods across four critical dimensions that follow a logical progression essential for clinical deployment: (1) OOD state avoidance: establishing whether guardian mechanism effectively constrains policies to remain within the clinical data support; 
(2) clinical alignment: measuring how closely learned policies match clinician decision-making patterns, a prerequisite for interpretability and trust; 
(3) treatment effectiveness: quantifying improvements in patient outcomes compared to standard care; 
(4) physiological safety: verifying that policies maintain vital parameters within safe ranges throughout treatment trajectories.
For quantitative evaluation, we employed four clinically relevant metrics: Model Concordance Rate (MCR) measuring alignment with clinician decisions, Appropriate Intensification Rate (AIR) assessing treatment escalation in response to physiological deterioration, Mortality Estimate (ME) projecting survival outcomes, and Action Change Penalty (ACP) quantifying treatment smoothness over time. 
Detailed definitions and computational methodology for these metrics are provided in Appendix~\ref{appendix:metrics}. 

\subsection{Results and Discussions}
We present results on OOD state avoidance and summarize key insights regarding clinical efficacy, defined here as the ability of learned policies to simultaneously achieve clinician alignment, treatment effectiveness, and physiological safety. This organization allows us to focus on the core technical contribution while providing essential context on its downstream clinical implications, with comprehensive quantitative analyses available in Appendix~\ref{appendix:detailed_validation_results}.

\begin{figure}[!ht]
\centering
% --- First row
\begin{subfigure}[b]{0.24\textwidth}
    \includegraphics[width=\textwidth]{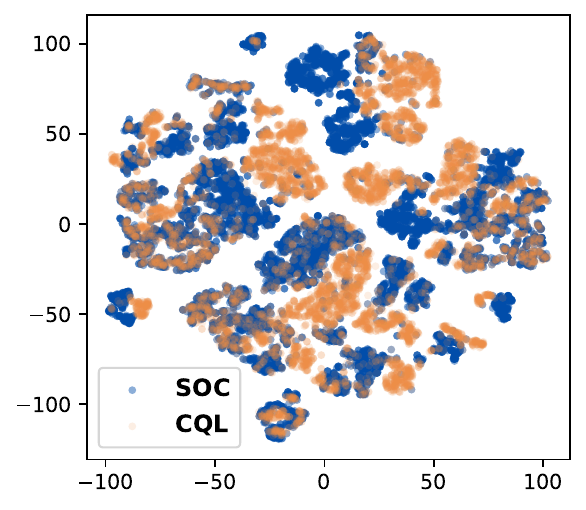}
    \caption{$\mathsf{CQL}$}
    \label{fig:ood_cql_0}
\end{subfigure}%
\begin{subfigure}[b]{0.24\textwidth}
    \includegraphics[width=\textwidth]{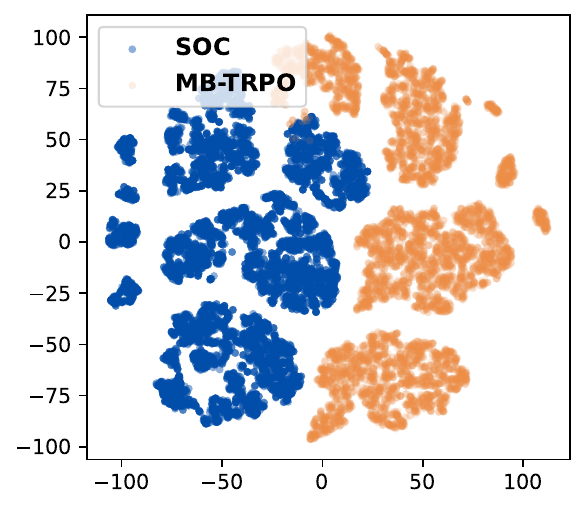}
    \caption{$\mathsf{MB\text{-}TRPO}$}
    \label{fig:ood_trop_0}
\end{subfigure}%
\begin{subfigure}[b]{0.24\textwidth}
    \includegraphics[width=\textwidth]{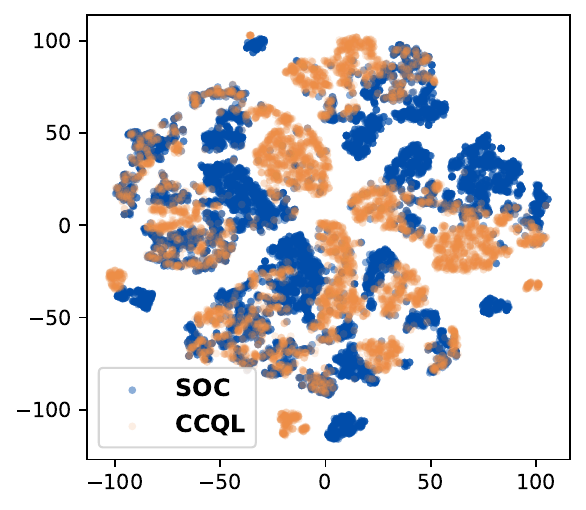}
    \caption{$\mathsf{CCQL}$}
    \label{fig:ood_cql_ts_0}
\end{subfigure}%
\begin{subfigure}[b]{0.24\textwidth}
    \includegraphics[width=\textwidth]{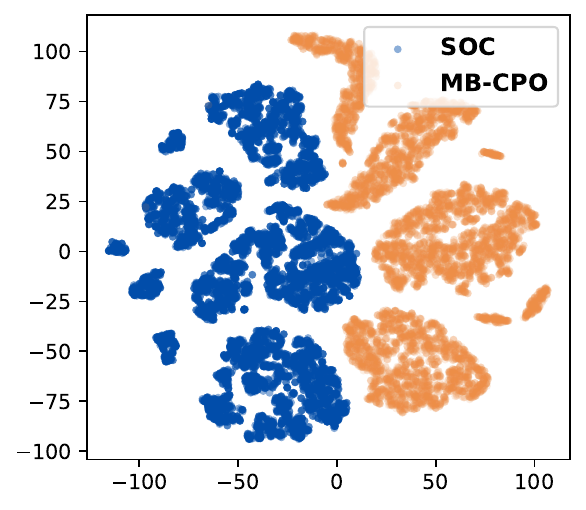}
    \caption{$\mathsf{MB\text{-}CPO}$}
    \label{fig:ood_cpo_0}
\end{subfigure}
\vspace{0.05em} % Optional vertical space between rows

% --- Second row
\begin{subfigure}[b]{0.24\textwidth}
    \includegraphics[width=\textwidth]{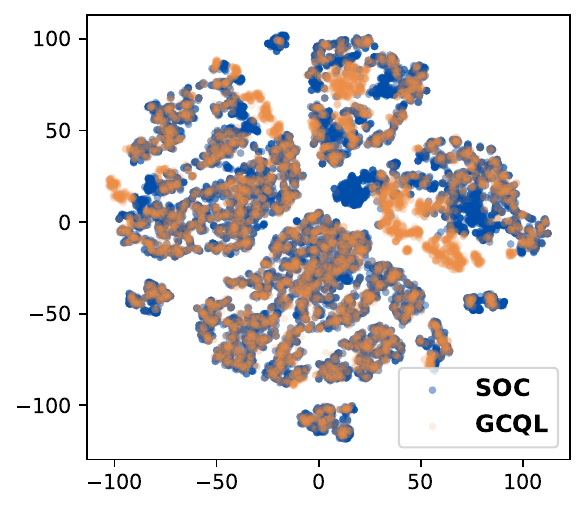}
    \caption{$\mathsf{GCQL}$}
    \label{fig:ood_cql_guard_0}
\end{subfigure}%
\begin{subfigure}[b]{0.24\textwidth}
    \includegraphics[width=\textwidth]{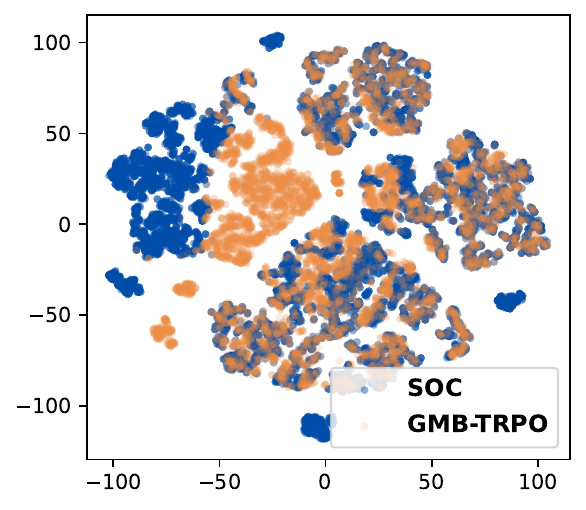}
    \caption{$\mathsf{GMB\text{-}TRPO}$}
    \label{fig:ood_trpo_guard_0}
\end{subfigure}%
\begin{subfigure}[b]{0.24\textwidth}
    \includegraphics[width=\textwidth]{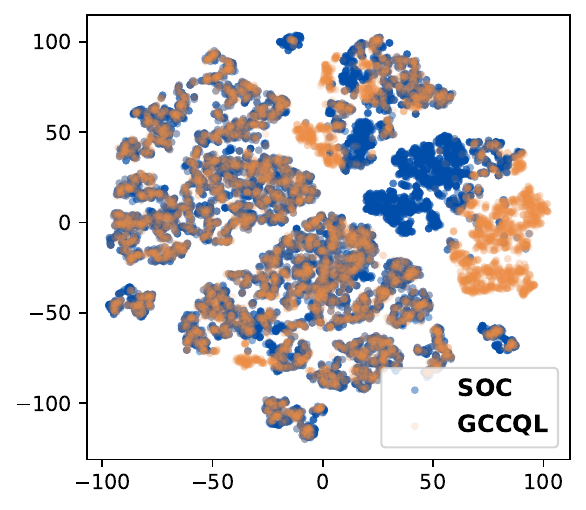}
    \caption{$\mathsf{GCCQL}$}
    \label{fig:ood_cql_guard_ts_0}
\end{subfigure}%
\begin{subfigure}[b]{0.24\textwidth}
    \includegraphics[width=\textwidth]{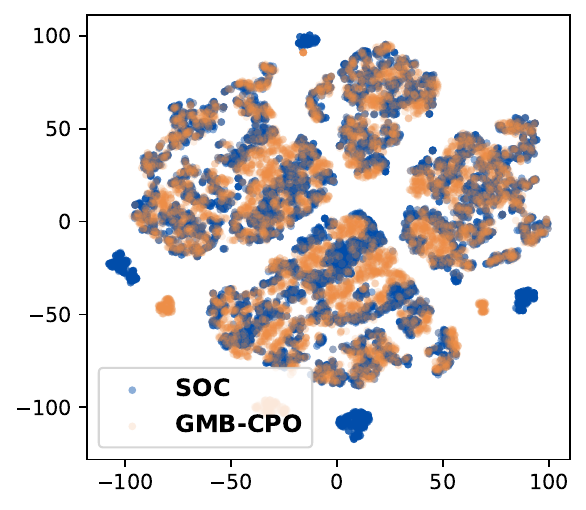}
    \caption{$\mathsf{GMB\text{-}CPO}$}
    \label{fig:ood_cpo_guard_0}
\end{subfigure}
\caption{Results on state distributions by learned policies via different algorithms. Blue points represent the original offline dataset; orange points represent the states visited by the learned policies. 
}
\label{fig:OOD_results_0}
\end{figure}

\begin{table}[t]
\centering
\captionsetup{skip=2pt}
\small
\setlength{\tabcolsep}{4pt}
\renewcommand{\arraystretch}{0.9}
\caption{Performance comparison across methods showing Model Concordance Rate (MCR), Appropriate Intensification Rate (AIR), Mortality Estimate (ME), and Action Change Penalties (ACP) for vasopressor dosage (MVD) and fluid administration (IFA) (mean $\pm$ Standard Deviation (SD)). 
Mean and SD were computed from the results of five different seeds. 
The symbol $\uparrow$ indicates that higher values are better, $\downarrow$ indicates that lower values are better, and $\leftrightarrow$ denotes that closer alignment with the standard of care (SOC) is preferred. 
MCR should align with SOC within a reasonable range. 
% For MCR, values should align with SOC within a reasonable range rather than being strictly higher or lower.
}
\label{table:MCR_AIR_clinical_results}
\begin{tabular}{lccccc}
\toprule
\textbf{Method} & \textbf{MCR}($\uparrow$,$10^{-3}$) & \textbf{AIR}($\uparrow,\ 10^{-2}$) &
\textbf{ME}($\downarrow,\ 10^{-2}$) &
\textbf{ACP: MVD}($\leftrightarrow$) &
\textbf{ACP: IFA}($\leftrightarrow,\ 10^2$) \\
\midrule
$\mathsf{CQL}$        & 789 $\pm$ 5.64 & 13 $\pm$ 0.540 & 4.86 $\pm$ 0.540 & 4.18 $\pm$ 0.129 & 5.43 $\pm$ 0.083 \\
$\mathsf{GCQL}$       & {\color{blue}\textbf{909 $\pm$ 2.52}} & 30.5 $\pm$ 1.17 & 5.53 $\pm$ 0.214 & 3.13 $\pm$ 0.033 & 1.51 $\pm$ 0.034 \\
$\mathsf{CCQL}$       & 827 $\pm$ 3.12 & 3.93 $\pm$ 0.248 & 4.81 $\pm$ 0.339 & 3.74 $\pm$ 0.066 & 4.60 $\pm$ 0.027 \\
$\mathsf{GCCQL}$      & 827 $\pm$ 3.50 & 30.2 $\pm$ 0.930 & 5.17 $\pm$ 0.142 & 3.23 $\pm$ 0.110 & 2.73 $\pm$ 0.011 \\
$\mathsf{MB\text{-}TRPO}$    & 0.04 $\pm$ 0.055  & 2.45 $\pm$ 0.280 & NA $\pm$ -- & 48.1 $\pm$ 0.121 & 1670 $\pm$ 1.78 \\
$\mathsf{GMB\text{-}TRPO}$   & 571 $\pm$ 3.37 & 36.9 $\pm$ 1.19 & 2.32 $\pm$ 0.491 & 1.24 $\pm$ 0.026 & 9.85 $\pm$ 0.063 \\
$\mathsf{MB\text{-}CPO}$     & 0.04 $\pm$ 0.055  & {\color{blue}\textbf{49.6 $\pm$ 0.731}} & NA $\pm$ -- & 50.5 $\pm$ 0.121 & 492 $\pm$ 1.12 \\
$\mathsf{GMB\text{-}CPO}$    & 549 $\pm$ 2.56 & {\color{blue}\textbf{44.8 $\pm$ 0.241}} & {\color{blue}\textbf{1.38 $\pm$ 0.482}} & {\color{blue}\textbf{4.34 $\pm$ 0.052}} & {\color{blue}\textbf{6.47 $\pm$ 0.018}} \\
SOC & - & - & \textbf{6.32} & \textbf{4.34}  & \textbf{6.48}  \\
\bottomrule
\end{tabular}
\end{table}

\noindent
\textbf{OOD State Avoidance.} 
To visualize the high-dimensional state distributions, we apply t-SNE dimensionality reduction to project the policy-generated states and the original clinical dataset onto a 2D manifold.  Figure~\ref{fig:OOD_results_0} compares the distributions across all evaluated algorithms. Policies learned without the guardian ($\mathsf{CQL}$, $\mathsf{MB\text{-}TRPO}$, $\mathsf{CCQL}$, $\mathsf{MB\text{-}CPO}$) exhibit significant divergence from the support of the offline dataset, with many states falling outside the distribution of the training data. In contrast, guardian-augmented policies ($\mathsf{GCQL}$, $\mathsf{GMB\text{-}TRPO}$, $\mathsf{GCCQL}$, $\mathsf{GMB\text{-}CPO}$) maintain state distributions tightly concentrated around the dataset support, visually validating our theoretical guarantees on OOD state avoidance (Theorem~\ref{theo:PSoS_guardian} and Corollary~\ref{coro:feasible_solution_OOD_bound}).
This visualization confirms the core premise of our approach. Despite effective mitigation of OOD actions by $\mathsf{CQL}$ and $\mathsf{CCQL}$, without explicit guardian mechanisms, their learned policies still induce OOD states during trajectory rollouts. 
Integrating the proposed guardian restricts policies to operate within regions where model predictions remain reliable, enhancing the generalization capability of existing approaches.

\begin{figure*}[t]%[!ht]
\centering
\begin{subfigure}[b]{0.24\textwidth}
    \includegraphics[width=\textwidth]{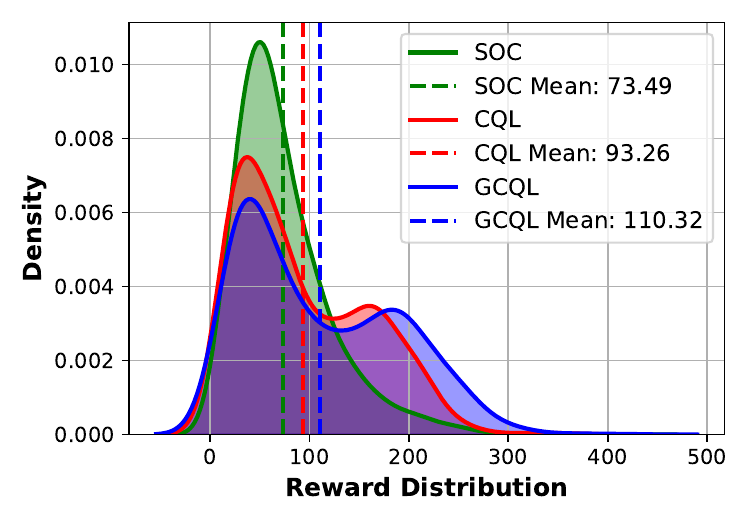}
    \caption{ }
    \label{fig:rewards_cql_guard}
\end{subfigure}
\begin{subfigure}[b]{0.24\textwidth}
    \includegraphics[width=\textwidth]{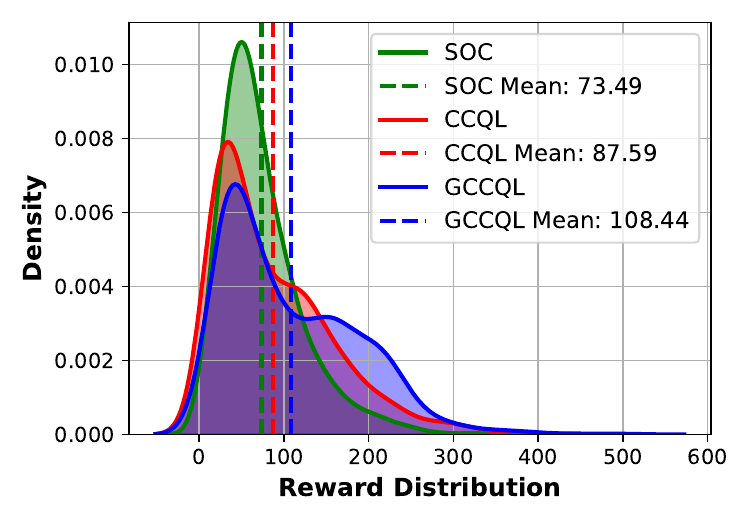}
    \caption{ }
    \label{fig:rewards_cql_sampling_guard}
\end{subfigure}
\begin{subfigure}[b]{0.24\textwidth}
    \includegraphics[width=\textwidth]{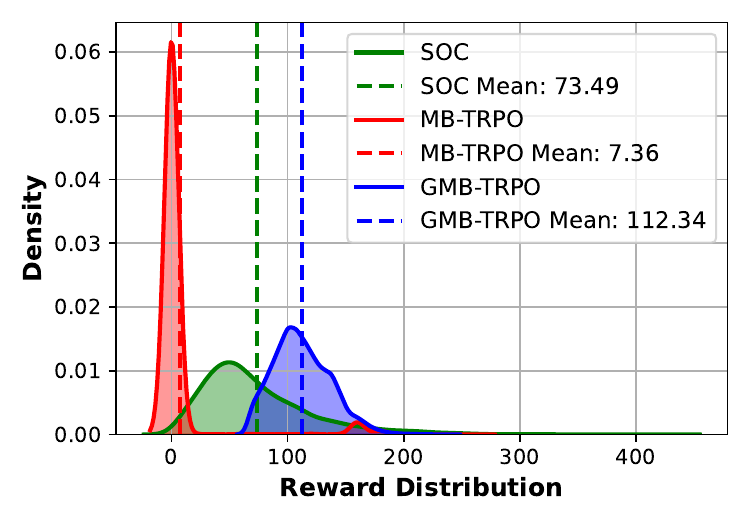}
    \caption{ }
    \label{fig:rewards_trpo_guard}
\end{subfigure}
\begin{subfigure}[b]{0.24\textwidth}
    \includegraphics[width=\textwidth]{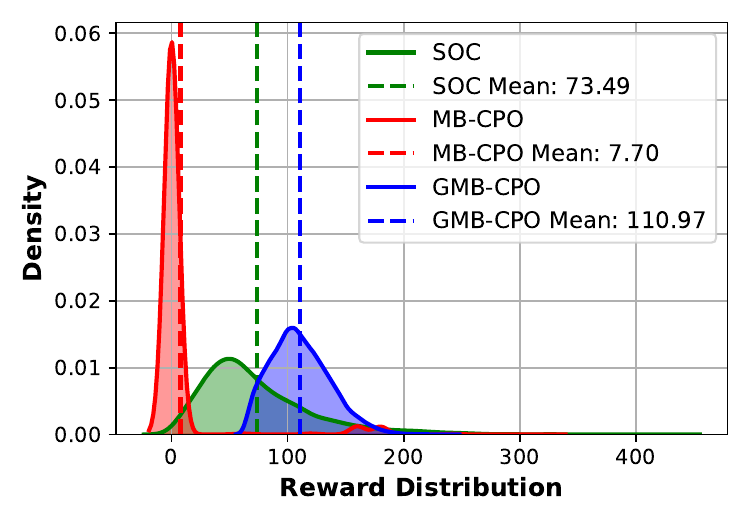}
    \caption{ }
    \label{fig:rewards_cpo_guard}
\end{subfigure}
\caption{Comparison of cumulative reward distributions between the SOC (green) and policies by different algorithms with guard mechanisms (blue). Each subplot shows the estimated reward density for trajectories in the test set. Dashed vertical lines indicate the mean rewards. (a) $\mathsf{CQL}$ vs. $\mathsf{GCQL}$;(b) $\mathsf{CCQL}$ vs. $\mathsf{GCCQL}$; (c) $\mathsf{MB\text{-}TRPO}$ vs. $\mathsf{GMB\text{-}TRPO}$; (d) $\mathsf{MB\text{-}CPO}$ vs. $\mathsf{GMB\text{-}CPO}$.}
\label{fig:rewards_dis}
\end{figure*}

\noindent
\textbf{Clinical Efficacy.}
Table~\ref{table:MCR_AIR_clinical_results} presents quantitative performance metrics across all RL algorithms evaluated in our study. We observed that the incorporation of guardian mechanism led to significant performance improvements, regardless of which underlying RL algorithm was implemented. Specifically, we observed marked improvements in clinician decision alignment (MCR increased from $0.789$ to $0.909$ in $\mathsf{GCQL}$ and from approximately zero to $0.549$ in $\mathsf{GMB\text{-}CPO}$), as did appropriate intervention timing (AIR increased from $0.130$ to $0.305$ in $\mathsf{GCQL}$). These improvements directly reflect the guardian's ability to constrain policies within clinically relevant state-action regions (Corollary \ref{coro:feasible_solution_OOD_bound}). 
When comparing model-free versus model-based approaches, we observed complementary strengths. Model-free methods with guardians ($\mathsf{GCQL}$, $\mathsf{GCCQL}$) achieved superior clinician concordance, while model-based guardian approaches ($\mathsf{GMB\text{-}TRPO}$, $\mathsf{GMB\text{-}CPO}$) demonstrated enhanced physiological responsiveness and more concentrated reward distributions (Figure~\ref{fig:rewards_dis}), indicating greater robustness to patient variability. Notably, $\mathsf{GMB\text{-}CPO}$ achieved the lowest mortality estimate ($0.0138$), representing a $78.2\%$ reduction compared to the standard of care ($0.0632$), while simultaneously improving cumulative rewards by $51\%$ compared to SOC. The explicit incorporation of safety constraints on physiological states demonstrated effectiveness even without guardian integration. As evidenced by Figure~\ref{fig:key_state_space_results_1}, $\mathsf{MB\text{-}CPO}$) reduced the number of unsafe states through explicit constraints on SpO$_{2}$ ($\ge92\%$) and urine output ($\ge0.5$ mL/kg/hour), whereas $\mathsf{MB\text{-}TRPO}$ exhibits concerning deterioration in both physiological states. These results illustrate how explicitly encoded clinical constraints preserve physiological stability throughout treatment. Interestingly, $\mathsf{GMB\text{-}TRPO}$, despite lacking explicit clinical constraints, also decreased the number of unsafe states with using only the guardian mechanism. The dual-safety mechanism in $\mathsf{GMB\text{-}CPO}$, combining explicit physiological with distribution-aware guardian safety constraints, achieved the greatest decrease in unsafe states for urine output and the second-best decrease for SpO$_{2}$. This dual-safety mechanism further enabled $\mathsf{GMB\text{-}CPO}$ to maintain near-identical action smoothness to clinical practice (ACP of $4.34$ versus $4.34$ for standard care), as shown in Table~\ref{table:MCR_AIR_clinical_results}. These improved outcomes demonstrate that $\mathsf{GMB\text{-}CPO}$ improves policy performance through the OOD cost constraint, consistent with Theorem \ref{theo:safety_sub_optimamlity_no_GP}.

\begin{figure}[t]
\centering
\begin{subfigure}[b]{0.475\textwidth}
    \includegraphics[width=\textwidth]{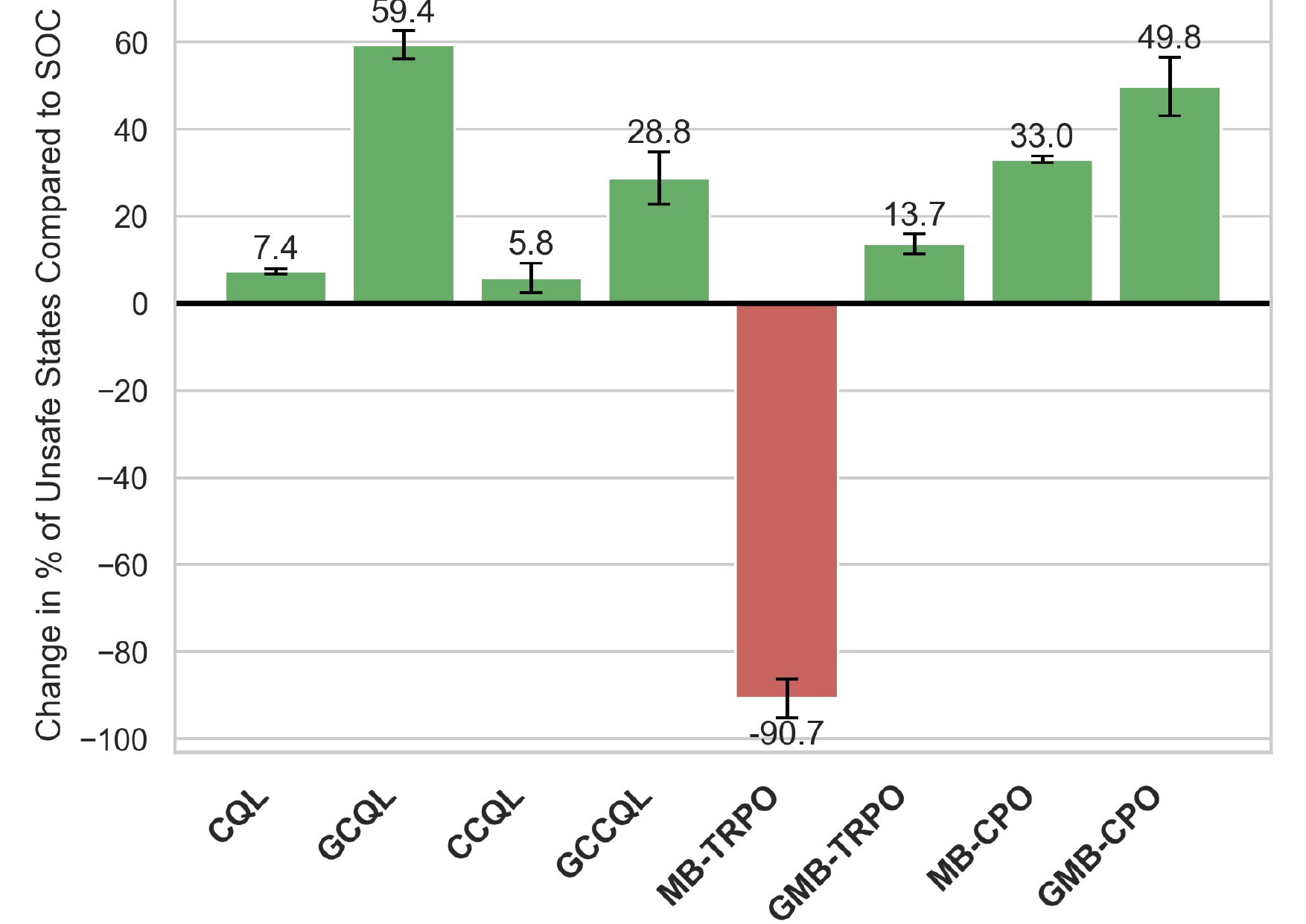}
    \caption{SpO$_{2}$}
    \label{fig:key_state_space_results_1}
\end{subfigure}
\begin{subfigure}[b]{0.475\textwidth}
    \includegraphics[width=\textwidth]{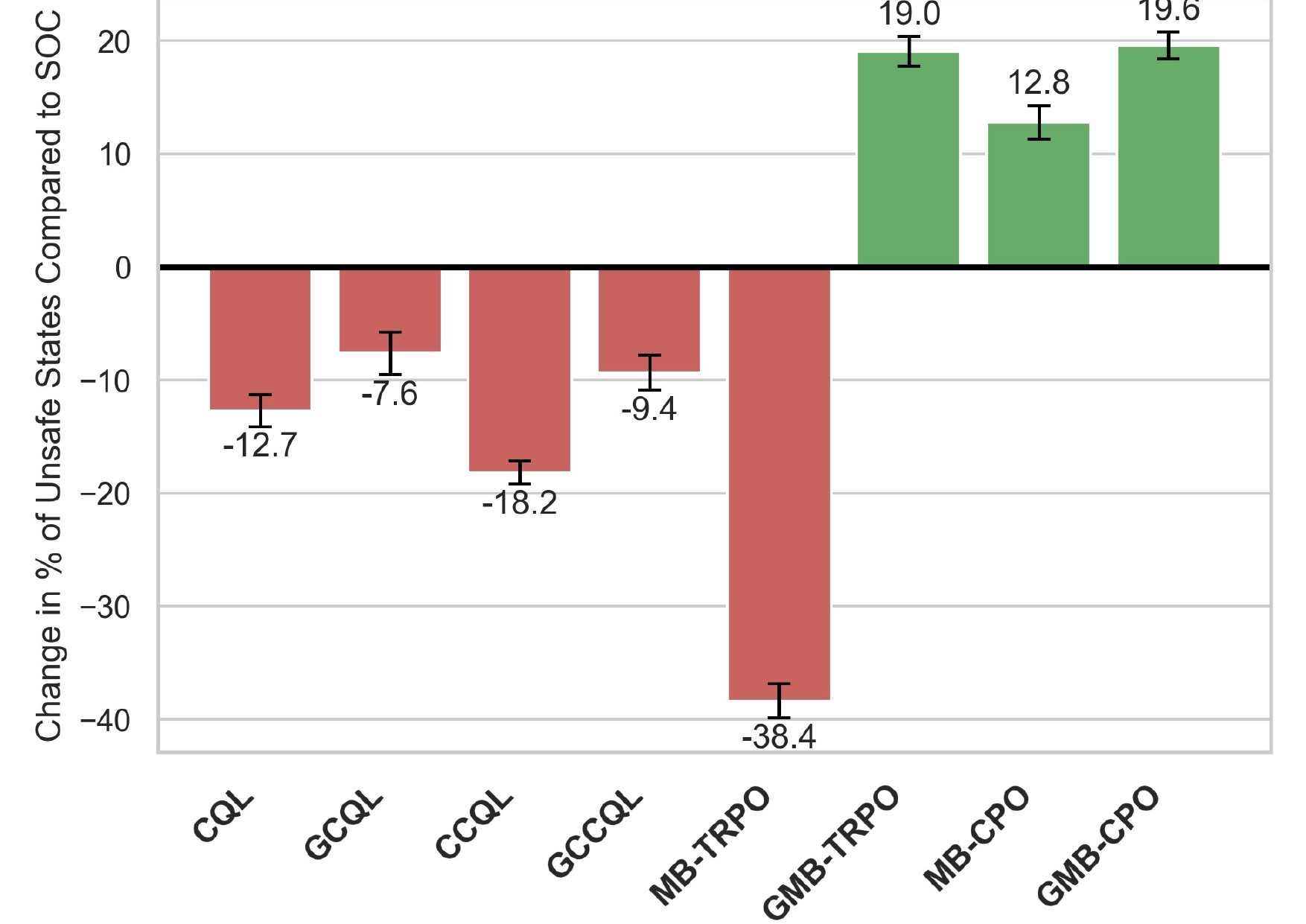}
    \caption{Urine Output}
    \label{fig:key_state_space_results_2}
\end{subfigure}
\caption{Physiological safety assessment of learned policies. We evaluate the safety of learned treatment policies by analyzing two critical physiological states: SpO$_{2}$ and urine output. Our assessment compares the percentage of states falling below defined safety thresholds 
%(SpO$_{2}$ $<92\%$, urine output $<0.5$ ml/kg/hour, defined in Section~\ref{sec:sepsis_treatment_formaluation}) 
against SOC. Positive values represent a reduction in unsafe states compared to SOC, while negative values indicate an increase.} 
\label{fig:key_state_space_safety}
\end{figure}

\section{Conclusion}
\label{sec:conclusion}
We introduced $\mathsf{OGSRL}$, a model-based offline reinforcement learning framework designed for safe and effective medical treatment optimization. 
By jointly enforcing OOD and safety cost constraints, $\mathsf{OGSRL}$ ensures policy learning remains within clinically supported regions while allowing safe performance improvement over observed clinician behavior. 
We established theoretical guarantees on safety, near-optimality, and in-distribution containment. 
We validated $\mathsf{OGSRL}$ by evaluating one of its instantiations, $\mathsf{GMB}\text{-}\mathsf{CPO}$, on real-world sepsis treatment data, showing substantial gains in reward, safety, and clinical consistency, demonstrating the promise of $\mathsf{OGSRL}$ for reliable deployment in safety-critical healthcare domains.

\newpage
\newpage
\appendix
%\onecolumn
%\begin{center}
%    \LARGE{Appendix}
%\end{center}

\section{Assumption on the Probability Density}
Note that the data points $\{(\bm{\mathrm{s}}^+,\underbrace{\bm{\mathrm{s}},\bm{\mathrm{a}}}_{\bm{\mathrm{x}}})\}$ in $\mathcal{D}_{\mathsf{b}}$ can be seen as samples extracted from $\mathcal{S}\times\mathcal{U}_{\mathsf{id}}$ according to a joint probability density $f(\bm{\mathrm{s}},\bm{\mathrm{x}})$ associated with the behavior policy, initial state distribution, and the transition dynamics. 
Here, with an abusement of notation, we replace $\bm{\mathrm{s}}^+$ by $\bm{\mathrm{s}}$ for simplicity.
Let $p(\bm{\mathrm{s}}|\bm{\mathrm{x}})$ be the condition density from $f(\bm{\mathrm{s}},\bm{\mathrm{x}}).$ 
Let $f_{\bm{\mathrm{X}}}(\bm{\mathrm{x}})$ be the marginal density. 
We have the following assumption regarding the underlying real density. 
\begin{assumption}
\label{assum:density}
Suppose that both joint density $f(\bm{\mathrm{s}},\bm{\mathrm{x}})$ and the marginal density $f_{\bm{\mathrm{X}}}(\bm{\mathrm{x}})$ are Holder continuous with the parameter $\zeta\in(0,1].$ 
Namely, there exists $C_{\zeta}$ such that $|f_{\bm{\mathrm{X}}}(\bm{\mathrm{x}})-f_{\bm{\mathrm{X}}}(\bm{\mathrm{x}}')|\leq C_{\zeta}\|\bm{\mathrm{x}}-\bm{\mathrm{x}}'\|^\zeta$ and $|f(\bm{\mathrm{s}},\bm{\mathrm{x}})-f(\bm{\mathrm{s}}',\bm{\mathrm{x}}')|\leq C_{\zeta}\|(\bm{\mathrm{s}},\bm{\mathrm{x}})-(\bm{\mathrm{s}}',\bm{\mathrm{x}}')\|^\zeta$ hold. 
Besides, the marginal density $f_{\bm{\mathrm{X}}}(\bm{\mathrm{x}})$ satisfies 
\begin{equation}
    \label{eq:low_bound_marginal}
    f_{\bm{\mathrm{X}}}(\bm{\mathrm{x}})\geq f_{\mathsf{min}}, \forall \bm{\mathrm{x}}\in\mathcal{U}_{\mathsf{id}}.
\end{equation}
Both joint density and the marginal density satisfy exponential decays. 
\end{assumption}
The lower bound of the marginal density given by \eqref{eq:low_bound_marginal} is a strong assumption in a general sense. 
However, it is reasonable and practical in the problem setting of offline reinforcement learning with partial coverage. 
In this setting, we should not consider the area with an extremely low probability of having state-action pairs.

\section{About the Choice of $\gamma$}
\label{appendix:Choice_gamma}
\begin{lemma}
\label{lemma:OOD_cost_constraint}
For any risk level $\beta > 0$, there exist constants $\bar{\gamma}(\beta) \in (0,1)$ and $\bar{H}(\beta) \in \mathbb{N}$ such that, for all $\gamma \geq \bar{\gamma}(\beta)$ and $H \leq \bar{H}(\beta)$, the OOD cost constraint
\begin{equation}
% \label{eq:def_OOD_cost}
V^\pi_{\hat{g}, \widehat{\mathcal{T}}}(\rho_0) := \mathbb{E}_{\bm{\mathrm{s}}_0 \sim \rho_0} \left[ \sum_{h=0}^{\infty} \gamma^h \hat{g}(\bm{\mathrm{s}}_h, \bm{\mathrm{a}}_h) \right] \leq \beta
\end{equation}
serves as a conservative approximation of the joint chance constraint:
\begin{equation}
\label{eq:OOD_chance_constraint}
\mathsf{Pr}\left\{ \hat{g}(\bm{\mathrm{s}}_h, \bm{\mathrm{a}}_h) = 0,\ \forall h \leq H \right\} > 1 - \beta.
\end{equation}
\end{lemma}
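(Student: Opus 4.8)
I would prove this via a Markov-type inequality that exploits the $\{0,1\}$-valued structure of the guardian classifier $\hat{g}$: a single out-of-distribution visit within the horizon $H$ already forces the discounted OOD cost to be at least $\gamma^{H}$, so a bound on the discounted cost immediately bounds the probability of any such visit; the constants $\bar{\gamma}(\beta)$ and $\bar{H}(\beta)$ are then chosen so that the unavoidable multiplicative factor $\gamma^{-H}>1$ is harmless.

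First I would introduce the ``bad'' event $E_H:=\{\exists\, h\le H:\ \hat{g}(\bm{\mathrm{s}}_h,\bm{\mathrm{a}}_h)=1\}$, whose probability (under $\pi$, $\widehat{\mathcal{T}}$, and $\bm{\mathrm{s}}_0\sim\rho_0$) is exactly $1$ minus the left-hand side of~\eqref{eq:OOD_chance_constraint}; the target is therefore $\mathsf{Pr}(E_H)<\beta$. Next, along any trajectory in $E_H$, let $T:=\min\{h:\hat{g}(\bm{\mathrm{s}}_h,\bm{\mathrm{a}}_h)=1\}\le H$; since every term of the discounted sum is nonnegative and $\hat{g}(\bm{\mathrm{s}}_T,\bm{\mathrm{a}}_T)=1$, we obtain the pathwise lower bound
\begin{equation*}
\sum_{h=0}^{\infty}\gamma^{h}\,\hat{g}(\bm{\mathrm{s}}_h,\bm{\mathrm{a}}_h)\ \ge\ \gamma^{T}\ \ge\ \gamma^{H}\,\mathbb{I}\{E_H\}.
\end{equation*}
Taking expectations over trajectories gives $V^{\pi}_{\hat{g},\widehat{\mathcal{T}}}(\rho_0)\ge\gamma^{H}\,\mathsf{Pr}(E_H)$, i.e. $\mathsf{Pr}(E_H)\le\gamma^{-H}\,V^{\pi}_{\hat{g},\widehat{\mathcal{T}}}(\rho_0)$. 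Under the OOD cost constraint $V^{\pi}_{\hat{g},\widehat{\mathcal{T}}}(\rho_0)\le\beta$ this yields $\mathsf{Pr}(E_H)\le\gamma^{-H}\beta$.

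The last step is purely the choice of constants: since $\gamma^{-H}\downarrow 1$ both as $\gamma\uparrow 1$ with $H$ fixed and as $H\downarrow 0$ with $\gamma$ fixed, one picks $\bar{\gamma}(\beta)\in(0,1)$ close enough to $1$ and $\bar{H}(\beta)\in\mathbb{N}$ small enough that, for all $\gamma\ge\bar{\gamma}(\beta)$ and $H\le\bar{H}(\beta)$, the factor $\gamma^{-H}$ is squeezed against $1$; equivalently, one reads the implementable cost threshold as the tightened value $\beta\,\gamma^{H}$ (which converges to $\beta$ in this regime), after which feasibility of the cost constraint implies feasibility of~\eqref{eq:OOD_chance_constraint}. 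Well-definedness causes no trouble: $\hat{g}\le 1$ and $\gamma<1$ give $V^{\pi}_{\hat{g},\widehat{\mathcal{T}}}(\rho_0)\le(1-\gamma)^{-1}$, and both sides of the comparison are taken under the same $(\pi,\widehat{\mathcal{T}})$, so no model-mismatch term appears.

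The conceptual content is the one-line bound $\mathsf{Pr}(E_H)\le\gamma^{-H}V^{\pi}_{\hat{g},\widehat{\mathcal{T}}}(\rho_0)$ made possible by the binary structure of $\hat{g}$; the only real obstacle is the quantitative threshold matching: verifying that the multiplicative slack $\gamma^{-H}>1$ can indeed be absorbed into the joint choice of $\bar{\gamma}(\beta)$ and $\bar{H}(\beta)$ (and, if one insists on the strict inequality exactly at level $\beta$, observing that any policy feasible for the optimization satisfies the cost constraint with positive slack). This parallels the discount-factor selection argument of~\citet{Shen_NeurIPS2024}.
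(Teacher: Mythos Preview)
Your proof is correct but follows a genuinely different route from the paper. The paper first bounds the violation probability by the undiscounted sum via Boole's inequality, $\mathsf{Pr}(E_H)\le\widetilde{\mathbb{V}}^{H,\pi}:=\sum_{h\le H}\mathbb{E}[\hat g(\bm{\mathrm{s}}_h,\bm{\mathrm{a}}_h)]$, and then argues that the gap $\tilde\epsilon^\pi(\gamma,H)=V^\pi_{\hat g,\widehat{\mathcal T}}-\widetilde{\mathbb{V}}^{H,\pi}$ (tail contribution plus discount bias) is strictly increasing in $\gamma$ with $\tilde\epsilon^\pi(0,H)<0$ and $\tilde\epsilon^\pi(1,H)>0$, so for $\gamma$ above some $\gamma_{\mathrm{lim}}$ one has $V^\pi_{\hat g,\widehat{\mathcal T}}\ge\widetilde{\mathbb{V}}^{H,\pi}$ and the implication transfers directly at the stated level $\beta$. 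Your pathwise Markov argument bypasses both the union bound and the monotonicity/crossing step, at the cost of the unavoidable factor $\gamma^{-H}>1$, which you absorb by tightening the cost threshold to $\beta\gamma^H$; the paper's crossing argument (when the tail contribution is nontrivial) avoids that slack but is less explicit about the constants. Your route is more elementary and makes the role of the binary structure of $\hat g$ transparent; the paper additionally appends a heuristic discussion of the $\widehat{\mathcal T}$-versus-$\mathcal T$ mismatch, which lies beyond the lemma as stated and which you correctly omit.
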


\begin{proof}
We begin by considering the joint chance constraint under the learned dynamics model $\widehat{\mathcal{T}}$:
\begin{equation}
\label{eq:OOD_chance_constraint_tran_model}
\mathsf{Pr}\left\{ \hat{g}(\bm{\mathrm{s}}_h, \bm{\mathrm{a}}_h) = 0,\ \forall h \leq H \mid \widehat{\mathcal{T}} \right\} > 1 - \beta.
\end{equation}
Using the definition of $\hat{g}$, this is equivalent to:
\begin{equation}
\label{eq:OOD_chance_constraint_tran_model_PSoS}
\mathsf{Pr}\left\{ q(\bm{\mathrm{x}}_h, \theta) \leq 0,\ \forall h \leq H \mid \widehat{\mathcal{T}} \right\} \geq 1 - \beta.
\end{equation}
Define the violation probability under $\pi$ as:
\[
\mathbb{V}^{H,\pi}_{\hat{g}, \widehat{\mathcal{T}}}(\rho_0) := \mathsf{Pr}\left\{ \exists h \leq H : q(\bm{\mathrm{x}}_h, \theta) > 0 \mid \widehat{\mathcal{T}} \right\}.
\]
Then \eqref{eq:OOD_chance_constraint_tran_model_PSoS} is equivalent to $\mathbb{V}^{H,\pi}_{\hat{g}, \widehat{\mathcal{T}}}(\rho_0) \leq \beta$.
Applying Boole’s inequality gives:
\begin{align}
\label{eq:bbV_leq_tildebbV}
\mathbb{V}^{H,\pi}_{\hat{g}, \widehat{\mathcal{T}}}(\rho_0) 
&\leq \sum_{h=1}^H \mathbb{E} \left[ \hat{g}(\bm{\mathrm{s}}_h, \bm{\mathrm{a}}_h) \mid \widehat{\mathcal{T}} \right] 
= \mathbb{E} \left[ \sum_{h=1}^H \hat{g}(\bm{\mathrm{s}}_h, \bm{\mathrm{a}}_h) \mid \widehat{\mathcal{T}} \right] 
=: \widetilde{\mathbb{V}}^{H,\pi}_{\hat{g}, \widehat{\mathcal{T}}}(\rho_0).
\end{align}
Thus, $\widetilde{\mathbb{V}}^{H,\pi}_{\hat{g}, \widehat{\mathcal{T}}}(\rho_0) \leq \beta$ serves as a conservative approximation for the original joint chance constraint.

Next, consider the infinite-horizon discounted surrogate: $V^\pi_{\hat{g}, \widehat{\mathcal{T}}}(\rho_0) := \mathbb{E}_{\bm{\mathrm{s}}_0 \sim \rho_0} \left[ \sum_{h=0}^\infty \gamma^h \hat{g}(\bm{\mathrm{s}}_h, \bm{\mathrm{a}}_h) \mid \widehat{\mathcal{T}} \right].$
Let $\overline{\mathbb{V}}^{H,\pi}_{\hat{g}, \widehat{\mathcal{T}}}(\rho_0) := \mathbb{E} \left[ \sum_{h=0}^H \gamma^h \hat{g}(\bm{\mathrm{s}}_h, \bm{\mathrm{a}}_h) \mid \widehat{\mathcal{T}} \right].$
The error between the infinite-horizon discounted cost and the cumulative cost can be expressed as:
\begin{align*}
\tilde{\epsilon}^\pi(\gamma, H) 
&:= V^\pi_{\hat{g}, \widehat{\mathcal{T}}}(\rho_0) - \widetilde{\mathbb{V}}^{H,\pi}_{\hat{g}, \widehat{\mathcal{T}}}(\rho_0)=V^\pi_{\hat{g}, \widehat{\mathcal{T}}}(\rho_0) - \overline{\mathbb{V}}^{H,\pi}_{\hat{g}, \widehat{\mathcal{T}}}(\rho_0) + \overline{\mathbb{V}}^{H,\pi}_{\hat{g}, \widehat{\mathcal{T}}}(\rho_0) - \widetilde{\mathbb{V}}^{H,\pi}_{\hat{g}, \widehat{\mathcal{T}}}(\rho_0) \\
&= \underbrace{\mathbb{E} \left[ \sum_{h=H+1}^\infty \gamma^h \hat{g}(\bm{\mathrm{s}}_h, \bm{\mathrm{a}}_h) \right]}_{\text{tail error}} + \underbrace{\mathbb{E} \left[ \sum_{h=0}^H (\gamma^h - 1) \hat{g}(\bm{\mathrm{s}}_h, \bm{\mathrm{a}}_h) \right]}_{\text{discount bias}}.
\end{align*}
This error term $\tilde{\epsilon}^\pi(\gamma, H)$ is strictly increasing in $\gamma$, with:
\[
\tilde{\epsilon}^\pi(0, H) < 0, \quad \tilde{\epsilon}^\pi(1, H) > 0.
\]
Therefore, for any fixed $H$ and policy $\pi$, there exists $\gamma_{\mathsf{lim}}(\pi, H)$ such that $\gamma > \gamma_{\mathsf{lim}}(\pi, H)$ implies:
\[
\widetilde{\mathbb{V}}^{H,\pi}_{\hat{g}, \widehat{\mathcal{T}}}(\rho_0) \leq V^\pi_{\hat{g}, \widehat{\mathcal{T}}}(\rho_0).
\]
If $\pi$ is parameterized over a compact set, we can define $\bar{\gamma}(H, \beta) := \sup_\pi \gamma_{\mathsf{lim}}(\pi, H)$ such that this inequality holds for all feasible policies.

We now discuss approximation under the true model $\mathcal{T}$. Let $\widetilde{\mathbb{V}}^{H,\pi}_{\hat{g}, \mathcal{T}}(\rho_0)$ denote the cumulative violation cost under true dynamics. The gap between the learned-model discounted cost and the true cumulative cost is:
\begin{align*}
\tilde{\epsilon}^{\pi}_{\mathsf{s}}(\gamma, H) 
&:= V^\pi_{\hat{g}, \widehat{\mathcal{T}}}(\rho_0) - \widetilde{\mathbb{V}}^{H,\pi}_{\hat{g}, \mathcal{T}}(\rho_0) \\
&\leq \underbrace{\widetilde{\mathbb{V}}^{H,\pi}_{\hat{g}, \widehat{\mathcal{T}}}(\rho_0) - \widetilde{\mathbb{V}}^{H,\pi}_{\hat{g}, \mathcal{T}}(\rho_0)}_{\tilde{\epsilon}^{\pi}_{\mathsf{s},1}} + \underbrace{V^\pi_{\hat{g}, \widehat{\mathcal{T}}}(\rho_0) - \widetilde{\mathbb{V}}^{H,\pi}_{\hat{g}, \widehat{\mathcal{T}}}(\rho_0)}_{\tilde{\epsilon}^{\pi}_{\mathsf{s},2}}.
\end{align*}
The second term $\tilde{\epsilon}^{\pi}_{\mathsf{s},2}$ is already controlled as before. For the first term $\tilde{\epsilon}^{\pi}_{\mathsf{s},1}$, which reflects model mismatch, we note:

In practice, the approximation error $\tilde{\epsilon}^{\pi}_{\mathsf{s}}(\gamma, H)$ may be controlled by selecting a modest value of $H$ (to limit propagation of model error) and choosing $\gamma$ sufficiently close to 1 (to amplify the tail contribution in $\tilde{\epsilon}^{\pi}_{\mathsf{s},2}$). While this argument is heuristic, it aligns with common assumptions in model-based reinforcement learning, where shorter planning horizons and conservative discounting reduce the impact of model misspecification. 
Meanwhile, since the state-action pair is within the dataset support, the model misspecification can be small. 
Under these conditions, it is reasonable to expect $\tilde{\epsilon}^{\pi}_{\mathsf{s}}(\gamma, H) \geq 0$, ensuring:
\[
\widetilde{\mathbb{V}}^{H,\pi}_{\hat{g}, \mathcal{T}}(\rho_0) \leq V^\pi_{\hat{g}, \widehat{\mathcal{T}}}(\rho_0) \leq \beta.
\]
Thus, the discounted OOD cost constraint conservatively approximates the joint chance constraint under the true model. If the policy class is compact, we can define constants $\bar{\gamma}(\beta)$ and $\bar{H}(\beta)$ such that the approximation holds uniformly for all feasible policies.
\end{proof}

\section{Proof of Theorem \ref{theo:PSoS_guardian}}
\label{appendix:proof_theo_PSoS_guardian}
A chance-constrained optimization problem can be formulated as
\begin{align} 
    &\min_{\theta} \,\, L(\theta):=\log\mathsf{det}\ P^{-1}(\theta) \tag{$\mathsf{P}_{\alpha,d}$} \label{eq:prob_PSoS_guardian_cco} \\
    &\mathsf{s.t.}\quad \mathsf{Pr}\left\{q(\bm{\mathrm{x}},\theta)\leq 1\right\}\geq 1-\alpha. 
\end{align}
The solution of Problem \ref{eq:prob_PSoS_guardian_cco} is defined by $\theta^\star_{\alpha}.$ 
The corresponding polynomial sublevel set is $\widehat{\mathcal{U}}_{\theta^\star_{\alpha},d}.$ 
We have the following lemma regarding Problem \ref{eq:prob_PSoS_guardian_cco}.
\begin{lemma}
\label{lemma:prob_PSoS_guardian_cco}
Assume that $\mathcal{U}_{\mathsf{id}}$ is a compact set. 
If $\alpha = 0$ and the degree $d \to \infty$, we have
\begin{equation}
\label{eq:prob_PSoS_guardian_cco_limit}
\widehat{\mathcal{U}}_{\theta^\star_{\alpha}, d} \to \mathcal{U}_{\mathsf{id}}.
\end{equation}
\end{lemma}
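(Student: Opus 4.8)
The key observation is that Problem~\ref{eq:prob_PSoS_guardian_cco} with $\alpha=0$ is a minimal-``volume'' sum-of-squares sublevel set problem of the type studied by \citet{Kozhasov}, so the claim is the $d\to\infty$ convergence result specialized to the compact target $\mathcal{U}_{\mathsf{id}}$. I would run the argument in three steps.

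\emph{Step 1: containment holds for every degree.} With $\alpha=0$ the chance constraint reads $q(\bm{\mathrm{x}},\theta)\le 1$ with probability one under the sampling density. Since $q(\cdot,\theta)$ is continuous and $\mathcal{U}_{\mathsf{id}}$ is (a closed subset of) the support of that density, every feasible $\theta$ satisfies $q(\bm{\mathrm{x}},\theta)\le 1$ for all $\bm{\mathrm{x}}\in\mathcal{U}_{\mathsf{id}}$, hence $\mathcal{U}_{\mathsf{id}}\subseteq\widehat{\mathcal{U}}_{\theta,d}$ and in particular $\mathcal{U}_{\mathsf{id}}\subseteq\widehat{\mathcal{U}}_{\theta^\star_{0},d}$. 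The feasible set is nonempty for every $d\ge 1$ (take $P(\theta)$ a small multiple of the Gram matrix of $1+\|\bm{\mathrm{x}}\|^{2d}$, which is $\le 1$ on the bounded set $\mathcal{U}_{\mathsf{id}}$), and $L(\theta)=\log\det P^{-1}(\theta)$ attains its minimum $\theta^\star_{0}$ because $\mathcal{U}_{\mathsf{id}}$ has positive Lebesgue measure, so $\mathrm{vol}(\widehat{\mathcal{U}}_{\theta,d})$ is bounded away from $0$, preventing $P(\theta)$ from degenerating. Thus one inclusion in \eqref{eq:prob_PSoS_guardian_cco_limit} is automatic and it remains to show the excess $\widehat{\mathcal{U}}_{\theta^\star_{0},d}\setminus\mathcal{U}_{\mathsf{id}}$ vanishes.

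\emph{Step 2: a shrinking outer polynomial squeeze.} Fix $\varepsilon>0$ and set $\mathcal{U}_{\mathsf{id}}^{\varepsilon}:=\{\bm{\mathrm{x}}:\mathrm{dist}(\bm{\mathrm{x}},\mathcal{U}_{\mathsf{id}})\le\varepsilon\}$. Using a Stone--Weierstrass / Urysohn-type construction (localizing to a large ball, which is harmless given the exponential-decay assumption on the density), I build a polynomial $p_{\varepsilon}$ with $p_{\varepsilon}\le 1-\delta$ on $\mathcal{U}_{\mathsf{id}}$ and $p_{\varepsilon}\ge 1$ on $\mathcal{U}\setminus\mathcal{U}_{\mathsf{id}}^{\varepsilon}$, and then pass to an SoS polynomial $\tilde p_{\varepsilon}$ of some even degree $2d_{\varepsilon}$ with the same sandwiching (e.g.\ by dominating $p_{\varepsilon}$ by the SoS form of a slightly inflated Gram matrix). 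For all $d\ge d_{\varepsilon}$ the corresponding parameter is feasible for Problem~\ref{eq:prob_PSoS_guardian_cco} and its sublevel set lies inside $\mathcal{U}_{\mathsf{id}}^{\varepsilon}$; invoking the surrogate-volume relation of \citet{Kozhasov}, optimality of $\theta^\star_{0}$ then forces $\widehat{\mathcal{U}}_{\theta^\star_{0},d}$ to be no larger in ``volume'' than this competitor. Letting $d\to\infty$, then $\varepsilon\downarrow 0$, and using that $\partial\mathcal{U}_{\mathsf{id}}$ has Lebesgue measure zero yields $\mathrm{vol}\bigl(\widehat{\mathcal{U}}_{\theta^\star_{0},d}\setminus\mathcal{U}_{\mathsf{id}}\bigr)\to 0$. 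Combined with Step~1, this gives convergence of $\widehat{\mathcal{U}}_{\theta^\star_{0},d}$ to $\mathcal{U}_{\mathsf{id}}$ in the sense of Lebesgue measure of the symmetric difference, i.e.\ \eqref{eq:prob_PSoS_guardian_cco_limit}; under the mild regularity that $\mathcal{U}_{\mathsf{id}}$ equals the closure of its interior this upgrades to Hausdorff convergence.

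\emph{Main obstacle.} The delicate point is Step~2: the algebraic objective $\log\det P^{-1}(\theta)$ is not a monotone function of the geometric volume of $\widehat{\mathcal{U}}_{\theta,d}$, because the monomial vector $\bm{\mathrm{e}}(\bm{\mathrm{x}})$ ranges over the lower-dimensional Veronese variety rather than all of $\mathbb{R}^{n_{\theta}}$, so ``minimizing $\log\det P^{-1}$'' is only a surrogate for ``minimizing volume.'' Showing that this surrogate nonetheless drives the optimal sublevel sets down onto $\mathcal{U}_{\mathsf{id}}$ is precisely where I would lean on and adapt the analysis of \citet{Kozhasov}; the other place needing care is making the approximant $\tilde p_{\varepsilon}$ genuinely SoS while preserving the normalization $q\le 1$ and the sandwiching between $\mathcal{U}_{\mathsf{id}}$ and $\mathcal{U}_{\mathsf{id}}^{\varepsilon}$.
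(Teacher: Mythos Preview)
Your plan follows the same broad strategy as the paper---construct polynomial approximants of $\mathcal{U}_{\mathsf{id}}$ via Stone--Weierstrass and use them as feasible competitors to squeeze the optimal sublevel set down to $\mathcal{U}_{\mathsf{id}}$---but the execution differs. The paper builds a single continuous function $g_{\mathsf{s}}(\bm{\mathrm{x}}) = 1 \pm \mathrm{dist}(\bm{\mathrm{x}}, \partial\mathcal{U}_{\mathsf{id}})$ whose $1$-sublevel set is exactly $\mathcal{U}_{\mathsf{id}}$, approximates it uniformly on a compact superset by polynomials $q_{\mathsf{s}}^{d_k}$, and then argues formally at the endpoint $d = d_\infty$: it claims that $\mathcal{U}_{\mathsf{id}}$ itself lies in the feasible family $\mathbb{U}_\infty^{\mathsf{f}}$, that the optimizer $\widehat{\mathcal{U}}_{\hat\theta_{d_\infty},d_\infty}$ contains $\mathcal{U}_{\mathsf{id}}$ by feasibility, and then invokes ``monotonicity of log-det inverse'' plus strict convexity/uniqueness to force equality. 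You instead work at each large finite $d$, build $\varepsilon$-sandwiching SoS competitors, and aim for convergence in Lebesgue measure of the symmetric difference, deferring the surrogate-volume comparison to \citet{Kozhasov}. Your route is more quantitative; the paper's is shorter but relies on a formal $d_\infty$ limit.

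The obstacle you flag is real and is exactly where the paper's own argument is thin. Its key step---``since $\mathcal{U}_{\mathsf{id}} \subseteq \widehat{\mathcal{U}}_{\hat{\theta}_{d_\infty}, d_\infty}$, we have $L(\theta_{\mathsf{s}}) \leq L(\hat{\theta}_{d_\infty})$ due to the monotonicity of log-det inverse''---asserts precisely the set-inclusion $\Rightarrow$ $L$-ordering implication you are worried about; the standard fact is $P_1 \succeq P_2 \Rightarrow \log\det P_1^{-1} \leq \log\det P_2^{-1}$, but sublevel-set inclusion on the Veronese image does not yield Gram-matrix ordering. The paper also glosses over your second concern: its Stone--Weierstrass approximant $q_{\mathsf{s}}^{d_k}$ is not shown to be of SoS form $\bm{\mathrm{e}}^\top P\,\bm{\mathrm{e}}$, yet its $(1+\varepsilon_k)$-sublevel set is placed in $\mathbb{U}_k^{\mathsf{f}}$ as if parametrized by a feasible $\theta$. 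So your plan is at least as rigorous as the paper's; the paper buys brevity by working at the formal limit and asserting the monotonicity, while you buy honesty by naming the gap and pointing to the reference that can close it.
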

\begin{proof}
Define a distance function $g_{\mathsf{s}}(\bm{\mathrm{x}})$ associated with $\mathcal{U}_{\mathsf{id}}$ in the following way:
\begin{equation}
        \label{eq:distance_function_U_s}
        g_{\mathsf{s}}(\bm{\mathrm{x}})=
        \left\{
     \begin{array}{ll}
          -\mathsf{dist}(\bm{\mathrm{x}},\partial\mathcal{U}_{\mathsf{id}})+1,  &\ \text{if}\ \bm{\mathrm{x}}\in\mathcal{U}_{\mathsf{id}},\\
          \mathsf{dist}(\bm{\mathrm{x}},\partial\mathcal{U}_{\mathsf{id}})+1, & \ \text{otherwise}.
          \end{array}
     \right.
\end{equation}
Here, $\mathsf{dist}(\bm{\mathrm{x}},\partial\mathcal{U}_{\mathsf{id}})$ is defined by
\begin{equation}
    \mathsf{dist}(\bm{\mathrm{x}},\partial\mathcal{U}_{\mathsf{id}}):=\inf_{\bm{\mathrm{y}}\in \partial\mathcal{U}_{\mathsf{id}}} \|\bm{\mathrm{x}}-\bm{\mathrm{y}}\|_2.
\end{equation}
Note that $\mathcal{U}_{\mathsf{id}}$ can be specified by $g_{\mathsf{s}}(\bm{\mathrm{x}})\leq 1$ and $g_{\mathsf{s}}(\bm{\mathrm{x}})$ is continuous. 
By the Stone-Weierstrass theorem, for any $\varepsilon>0,$ we can find a $d$ such that the following holds:
\begin{equation}
\label{eq:g_s_q_s_sup}
    \sup_{\bm{\mathrm{x}}\in\mathcal{U}_{\mathsf{c}}}\ |g_{\mathsf{s}}(\bm{\mathrm{x}})-q_{\mathsf{s}}^{d}(\bm{\mathrm{x}})|<\varepsilon.
\end{equation}
Here, $\mathcal{U}_{\mathsf{c}}$ is a compact set satisfying that $\mathcal{U}_{\mathsf{id}}\subset \mathcal{U}_{\mathsf{c}}$ and $q_{\mathsf{s}}^{d}(\bm{\mathrm{x}})$ is a $d-$degree polynomial function. 
Define three sets by 
\begin{equation*}
    \widetilde{\mathcal{U}}_{d}^{\varepsilon^-}:=\left\{\bm{\mathrm{x}}:q_{\mathsf{s}}^{d}(\bm{\mathrm{x}})\leq 1-\varepsilon\right\},\ \widetilde{\mathcal{U}}_{d}:=\left\{\bm{\mathrm{x}}:q_{\mathsf{s}}^{d}(\bm{\mathrm{x}})\leq 1\right\},\     \widetilde{\mathcal{U}}_{d}^{\varepsilon^+}:=\left\{\bm{\mathrm{x}}:q_{\mathsf{s}}^{d}(\bm{\mathrm{x}})\leq 1+\varepsilon\right\}. 
\end{equation*}
For any $\varepsilon$, let $d$ be chosen as the value that makes \eqref{eq:g_s_q_s_sup} holds. Then, we have 
\begin{equation}
    \widetilde{\mathcal{U}}_{d}^{\varepsilon^-}\subset \mathcal{U}_{\mathsf{id}} \subset \widetilde{\mathcal{U}}_{d}^{\varepsilon^+}. 
\end{equation}
Note that, as $\varepsilon\rightarrow 0$ and $d$ is corresponding chosen to satisfy \eqref{eq:g_s_q_s_sup}, we have
\begin{equation*}
    \lim_{\varepsilon\rightarrow 0} \widetilde{\mathcal{U}}_{d}^{\varepsilon^-}=\widetilde{\mathcal{U}}_{d}^{\mathsf{l}},\ \forall\varepsilon>0, \widetilde{\mathcal{U}}_{d}^{\varepsilon^-}\subset \mathcal{U}_{\mathsf{id}} \Rightarrow \widetilde{\mathcal{U}}_{d} \subseteq \mathcal{U}_{\mathsf{id}}
\end{equation*}
\begin{equation*}
    \lim_{\varepsilon\rightarrow 0} \widetilde{\mathcal{U}}_{d}^{\varepsilon^+}=\widetilde{\mathcal{U}}_{d}^{\mathsf{l}},\ \forall\varepsilon>0, \widetilde{\mathcal{U}}_{d}^{\varepsilon^+}\supset \mathcal{U}_{\mathsf{id}}\Rightarrow \widetilde{\mathcal{U}}_{d}^{\mathsf{l}} \supseteq \mathcal{U}_{\mathsf{id}}.
\end{equation*}
Thus, $\widetilde{\mathcal{U}}_{d}^{\mathsf{l}} = \mathcal{U}_{\mathsf{id}}.$

Let $\{\varepsilon_{k}\}_{k=1}^{\infty}$ be a sequence converging to zero and $\{d_k\}_{k=1}^{\infty}$ be a sequence chosen to satisfy 
\begin{equation*}
    \sup_{\bm{\mathrm{x}}\in\mathcal{U}_{\mathsf{c}}}\ |g_{\mathsf{s}}(\bm{\mathrm{x}})-q_{\mathsf{s}}^{d_k}(\bm{\mathrm{x}})|<\varepsilon_k,\ \forall k\in\mathbb{N}_+.
\end{equation*}
Define a problem for any given $k$ by
\begin{align} 
    &\min_{\theta} \,\, L(\theta):=\log\mathsf{det}\ P^{-1}(\theta) \tag{$\mathsf{P}_{\mathsf{r},d_k}$} \label{eq:prob_PSoS_guardian_robust_k} \\
    &\mathsf{s.t.}\quad q(\bm{\mathrm{x}},\theta)\leq 1,\ \forall\bm{\mathrm{x}}\in\mathcal{U}_{\mathsf{id}}. 
\end{align}
Note that Problem \ref{eq:prob_PSoS_guardian_robust_k} is equivalent to Problem \ref{eq:prob_PSoS_guardian_cco} with $\alpha=0$ and $d=d_k.$
For all $k$, let $\Theta_{k}^{\mathsf{f}}$ be the feasible set of Problem \ref{eq:prob_PSoS_guardian_robust_k}. 
Construct a set of polynomial sublevel sets by 
\begin{equation*}
    \mathbb{U}_{k}^{\mathsf{f}}:=\left\{\widehat{\mathcal{U}}_{\theta,d_k}:\theta\in\Theta_{k}^{\mathsf{f}}\right\}.
\end{equation*}
By the definition of $\widetilde{\mathcal{U}}_{d_k}^{\varepsilon_k^+},$ we have $\widetilde{\mathcal{U}}_{d_k}^{\varepsilon_k^+}\in \mathbb{U}_{k}^{\mathsf{f}}$ for every $k\in\mathbb{N}_+$ since every $\bm{\mathrm{x}}\in\mathcal{U}_{\mathsf{id}}$ also satisfies $\bm{\mathrm{x}}\in\widetilde{\mathcal{U}}_{d_k}^{\varepsilon_k^+}.$ 
As $k\rightarrow\infty,$ $\widetilde{\mathcal{U}}_{d_k}^{\varepsilon_k^+}$ converges to $\widetilde{\mathcal{U}}_{d_{\infty}}^{\mathsf{l}}=\mathcal{U}_{\mathsf{id}}$ and thus $\mathcal{U}_{\mathsf{id}}\in\mathbb{U}_{\infty}^{\mathsf{f}}.$ 

Let $\hat{\theta}_{d_{\infty}}$ be one optimal solution of Problem \ref{eq:prob_PSoS_guardian_robust_k} with $k=\infty$. 
Then, we continue to prove $\mathcal{U}_{\mathsf{id}}=\widehat{\mathcal{U}}_{\hat{\theta}_{d_{\infty}},d_{\infty}}.$ 
First, we know that $\mathcal{U}_{\mathsf{id}}\subseteq\widehat{\mathcal{U}}_{\hat{\theta}_{d_{\infty}},d_{\infty}}$ due to the constraint $q(\bm{\mathrm{x}},\hat{\theta}_{d_{\infty}})\leq 1,\ \forall \bm{\mathrm{x}}\in\mathcal{U}_{\mathsf{id}}.$ 
Note that we have already proved that $\mathcal{U}_{\mathsf{id}}\in\mathbb{U}^{\mathsf{f}}_k$ and let $\theta_{\mathsf{s}}$ be the parameter corresponding to the polynomial sublevel set that is identical with $\mathcal{U}_{\mathsf{id}}.$
Since $\mathcal{U}_{\mathsf{id}}\subseteq\widehat{\mathcal{U}}_{\hat{\theta}_{d_{\infty}},d_{\infty}}$, we have $L(\theta_{\mathsf{s}})\leq L(\hat{\theta}_{d_{\infty}})$ due to the the monotonicity of log-det inverse for positive semidefinite matrices \citep{Boyd_2004}. 
Besides, Problem \ref{eq:prob_PSoS_guardian_robust_k} is a convex optimization with a strictly convex objective function and thus the problem attains an unique solution. 
Namely, $\theta_{\mathsf{s}}$ is identical with $\hat{\theta}_{d_{infty}}$, which implies that $\mathcal{U}_{\mathsf{id}}=\widehat{\mathcal{U}}_{\hat{\theta}_{d_{\infty}},d_{\infty}}.$ 
Since $\widehat{\mathcal{U}}_{\hat{\theta}_{d_{\infty}},d_{\infty}}$ is identical with $\widehat{\mathcal{U}}_{\theta^\star_{0}, d_\infty},$ \eqref{eq:prob_PSoS_guardian_cco_limit} holds as $d\rightarrow\infty.$ 
\end{proof}

\begin{theorem}
\label{theo:prob_PSoS_guardian_cco}
The set $\mathcal{U}_{\mathsf{id}}$ need not to be compact. 
For any $\alpha > 0$, there exists a degree $d$ such that
\begin{equation}
\label{eq:prob_PSoS_guardian_cco_alpha}
\widehat{\mathcal{U}}_{\theta^\star_{\alpha}, d} \subseteq \mathcal{U}_{\mathsf{id}}.
\end{equation}
\end{theorem}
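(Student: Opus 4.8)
The plan is to exploit the slack created by $\alpha>0$ to replace the (possibly non-compact, possibly non-closed) set $\mathcal{U}_{\mathsf{id}}$ by a compact ``core'' and then re-run the Stone--Weierstrass argument already developed in the proof of Lemma~\ref{lemma:prob_PSoS_guardian_cco}. First I would use Assumption~\ref{assum:density}: the exponential decay of the marginal density $f_{\bm{\mathrm{X}}}$ forces $\mathcal{U}_{\mathsf{id}}$ to be bounded (a point with $f_{\bm{\mathrm{X}}}(\bm{\mathrm{x}})<f_{\mathsf{min}}$ cannot belong to $\mathcal{U}_{\mathsf{id}}$), while inner regularity of the induced probability measure, together with the fact that $\partial\mathcal{U}_{\mathsf{id}}$ carries negligible probability mass (the density is bounded and the boundary is Lebesgue-null), yields for the prescribed $\alpha>0$ a compact set $\mathcal{K}_\alpha$ in the interior of $\mathcal{U}_{\mathsf{id}}$ with $\mathsf{Pr}\{\bm{\mathrm{x}}\in\mathcal{K}_\alpha\}\ge 1-\alpha$ and a uniform margin $\eta:=\mathsf{dist}(\mathcal{K}_\alpha,\mathcal{U}\setminus\mathcal{U}_{\mathsf{id}})>0$. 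This truncation is exactly where $\alpha>0$ is indispensable: it lets us amputate the thin periphery of $\mathcal{U}_{\mathsf{id}}$ on which the $\alpha=0$ construction of the lemma would inevitably overshoot.

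Second, I would replay the signed-distance plus Stone--Weierstrass plus vanishing high-degree-penalty construction of Lemma~\ref{lemma:prob_PSoS_guardian_cco}, arranged so that the approximating set is trapped between $\mathcal{K}_\alpha$ and $\mathcal{U}_{\mathsf{id}}$: take a continuous function that is strictly below $1$ on $\mathcal{K}_\alpha$ and strictly above $1$ outside the $\eta$-erosion of $\mathcal{U}_{\mathsf{id}}$, approximate it uniformly on a large ball containing $\overline{\mathcal{U}_{\mathsf{id}}}$, and add a tiny multiple of $\|\bm{\mathrm{x}}\|^{2d}$ so the resulting SoS sublevel set stays bounded. For $d$ sufficiently large (depending on $\alpha$ only through $\eta$) this produces an SoS parameter $\theta^{\mathsf{f}}$ with $\mathcal{K}_\alpha\subseteq\widehat{\mathcal{U}}_{\theta^{\mathsf{f}},d}\subseteq\mathcal{U}_{\mathsf{id}}$; since $\mathsf{Pr}\{\bm{\mathrm{x}}\in\widehat{\mathcal{U}}_{\theta^{\mathsf{f}},d}\}\ge\mathsf{Pr}\{\bm{\mathrm{x}}\in\mathcal{K}_\alpha\}\ge 1-\alpha$, the point $\theta^{\mathsf{f}}$ is feasible for Problem~\ref{eq:prob_PSoS_guardian_cco}, so that problem admits at least one feasible solution whose sublevel set lies entirely in $\mathcal{U}_{\mathsf{id}}$.

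Third, I would upgrade this from ``a good feasible point exists'' to ``the optimizer $\theta^\star_\alpha$ is good''. Here I would lean on the minimal-volume SoS structure already exploited in the lemma: the $\log\det P^{-1}(\theta)$ objective picks out, uniquely (by strict convexity and by monotonicity of $\log\det P^{-1}$ on the positive-semidefinite cone, cf.~\citep{Kozhasov,Boyd_2004}), the feasible form whose sublevel set is smallest; since $\theta^{\mathsf{f}}$ is feasible with $\widehat{\mathcal{U}}_{\theta^{\mathsf{f}},d}\subseteq\mathcal{U}_{\mathsf{id}}$, the optimal set carries no larger log-volume, and by nesting cores $\mathcal{K}_\alpha\subseteq\mathcal{K}_{\alpha'}$ and letting $\eta\downarrow0$ (with $d$ correspondingly increasing) one squeezes $\widehat{\mathcal{U}}_{\theta^\star_\alpha,d}$ into $\mathcal{U}_{\mathsf{id}}$; the guiding picture is that as $d\to\infty$ the optimal sublevel set converges to a highest-density region $\{\bm{\mathrm{x}}:f_{\bm{\mathrm{X}}}(\bm{\mathrm{x}})\ge t_\alpha\}$ with $t_\alpha>0$, which is automatically contained in the positive-density set $\mathcal{U}_{\mathsf{id}}$, and the coverage lower bound $f_{\bm{\mathrm{X}}}\ge f_{\mathsf{min}}$ guarantees such a level $t_\alpha$ exists. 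I expect this last step to be the main obstacle: set inclusion does not translate into the positive-semidefinite order on $P(\theta)$, so ``smaller log-volume'' does not by itself imply ``contained in $\mathcal{U}_{\mathsf{id}}$'', and the argument must be routed carefully through the uniqueness and minimality of the volume-optimal SoS sublevel set; a secondary technicality is the boundary fine print in the first step, where any non-negligible probability mass on $\partial\mathcal{U}_{\mathsf{id}}$ must be charged against the $\alpha$ budget.
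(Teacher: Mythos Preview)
Your first two steps are sound and essentially reconstruct the density-of-SoS-sublevel-sets machinery, but the gap you flag in your third step is real and is not closed by the limiting/nesting heuristic you sketch. Having a feasible point $\theta^{\mathsf{f}}$ with $\widehat{\mathcal{U}}_{\theta^{\mathsf{f}},d}\subseteq\mathcal{U}_{\mathsf{id}}$ only gives $L(\theta^\star_\alpha)\le L(\theta^{\mathsf{f}})$; there is no monotone relation between $\log\det P^{-1}$ and set inclusion, so this inequality tells you nothing about where $\widehat{\mathcal{U}}_{\theta^\star_\alpha,d}$ sits. Sending $\eta\downarrow 0$ merely shrinks the gap between your a-priori cores and $\mathcal{U}_{\mathsf{id}}$ from \emph{below}; it never bounds the optimizer from above. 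The ``highest-density region'' picture is suggestive but is not what the objective optimizes, and you cannot import it without an additional argument.

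The paper closes exactly this gap by reversing the order of construction: instead of building a feasible point first and then trying to propagate containment to the optimizer, it starts from the optimizer $\theta^\star_{\alpha}$ (at a sufficiently high degree), sets $\mathcal{U}_{\mathsf{com}}:=\widehat{\mathcal{U}}_{\theta^\star_\alpha,d}\cap\mathcal{U}_{\mathsf{id}}$, and observes that this intersection is compact. Lemma~\ref{lemma:prob_PSoS_guardian_cco}, applied with $\mathcal{U}_{\mathsf{id}}$ replaced by $\mathcal{U}_{\mathsf{com}}$, then produces a parameter $\hat\theta$ whose sublevel set is \emph{exactly} $\mathcal{U}_{\mathsf{com}}$. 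Two facts now finish the job: (i) $\hat\theta$ is feasible for Problem~\ref{eq:prob_PSoS_guardian_cco}, because all probability mass lies in $\mathcal{U}_{\mathsf{id}}$ and hence $\mathsf{Pr}\{\bm{\mathrm{x}}\in\mathcal{U}_{\mathsf{com}}\}=\mathsf{Pr}\{\bm{\mathrm{x}}\in\widehat{\mathcal{U}}_{\theta^\star_\alpha,d}\}\ge 1-\alpha$; (ii) $\theta^\star_\alpha$ is feasible for the robust problem over $\mathcal{U}_{\mathsf{com}}$ (since $\mathcal{U}_{\mathsf{com}}\subseteq\widehat{\mathcal{U}}_{\theta^\star_\alpha,d}$), and by strict convexity that robust problem has a unique optimum, namely $\hat\theta$. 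So either $\theta^\star_\alpha=\hat\theta$, in which case $\widehat{\mathcal{U}}_{\theta^\star_\alpha,d}=\mathcal{U}_{\mathsf{com}}\subseteq\mathcal{U}_{\mathsf{id}}$ and we are done, or $L(\theta^\star_\alpha)>L(\hat\theta)$, contradicting optimality of $\theta^\star_\alpha$. The key move you are missing is to intersect the \emph{optimal} sublevel set with $\mathcal{U}_{\mathsf{id}}$ and realize that intersection itself as an SoS sublevel set; that is what lets uniqueness of the $\log\det$ optimum do the work that raw volume comparison cannot.
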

\begin{proof}
By Assumption \ref{assum:density}, we know that the probability density defined on $\mathcal{U}_{\mathsf{id}}$ is positive and continuous on $\mathcal{U}_{\mathsf{id}}$. 

Assume that $\theta^\star_{\alpha,d_{\infty}}$ is the solution of Problem \ref{eq:prob_PSoS_guardian_cco} with $d=d_{\infty}.$ 
The corresponding polynomial sublevel set is $\widehat{\mathcal{U}}_{\theta^\star_{\alpha,d_{\infty}}, d_{\infty}}.$ 
Note that $\widehat{\mathcal{U}}_{\theta^\star_{\alpha,d_{\infty}}, d_{\infty}}$ is compact. 
Define the following sets: 
\begin{align*}
    \mathcal{U}_{\mathsf{com}}:=\widehat{\mathcal{U}}_{\theta^\star_{\alpha,d_{\infty}}, d_{\infty}}\bigcap \mathcal{U}_{\mathsf{id}},\ \mathcal{U}_{\mathsf{m}}:=\mathcal{U}_{\mathsf{id}} \setminus \widehat{\mathcal{U}}_{\theta^\star_{\alpha,d_{\infty}}, d_{\infty}}.
\end{align*}
Assume that the volume of $\mathcal{U}_{\mathsf{m}}$ is not zero. 
Note that $\mathcal{U}_{\mathsf{com}}$ is compact. 
Replacing $\mathcal{U}_{\mathsf{id}}$ in Problem \ref{eq:prob_PSoS_guardian_robust_k} by $\mathcal{U}_{\mathsf{com}},$ we have $\mathcal{U}_{\mathsf{com}}=\widehat{\mathcal{U}}_{\hat{\theta}_{d_{\infty}},d_{\infty}}$ by Lemma \ref{lemma:prob_PSoS_guardian_cco}. 
Moreover, $\theta^\star_{\alpha,d_{\infty}}$ is a feasible solution to Problem \ref{eq:prob_PSoS_guardian_robust_k} by $\mathcal{U}_{\mathsf{com}}$ and thus $L(\theta^\star_{\alpha,d_{\infty}})>L(\hat{\theta}_{d_{\infty}})$ holds due to the uniqueness of the optimal solution to Problem \ref{eq:prob_PSoS_guardian_robust_k} by $\mathcal{U}_{\mathsf{com}}.$ 
Note that $\hat{\theta}_{d_{\infty}}$ is also a feasible solution of Problem \ref{eq:prob_PSoS_guardian_cco} with $d=d_{\infty}$ due to $\mathsf{Pr}\{\bm{\mathrm{x}}\in\mathcal{U}_{\mathsf{com}}\}=1-\alpha.$ 
Therefore, by $L(\theta^\star_{\alpha,d_{\infty}})>L(\hat{\theta}_{d_{\infty}})$, it contradicts with that $\theta^\star_{\alpha,d_{\infty}}$ is an optimal solution. 
Thus, the volume of $\mathcal{U}_{\mathsf{m}}$ is zero and \eqref{eq:prob_PSoS_guardian_cco_alpha} holds. 
\end{proof}
Theorem \ref{theo:prob_PSoS_guardian_cco} only relies on a positive value of $\alpha$ to ensure the subset relationship $\widehat{\mathcal{U}}_{\theta^\star_{\alpha}, d} \subseteq \mathcal{U}_{\mathsf{id}}$. 

\begin{assumption}
    \label{assume:alpha_d}
    The parameters $\alpha$ and $d$ are appropriately tuned such that Theorem \ref{theo:prob_PSoS_guardian_cco} holds.
\end{assumption}

\begin{theorem}
\label{theo:PSoS_guardian_appendix}
Suppose that Assumption \ref{assume:alpha_d} holds. 
Let $\alpha_{\mathsf{c}} > \alpha$.
Then, the probability that $\widehat{\mathcal{U}}_{\hat{\theta}^N_{\alpha_{\mathsf{c}}}, d}$ is a subset of $\mathcal{U}_{\mathsf{id}}$ can be bounded as:
\begin{equation}
\mathsf{Pr}\left(\widehat{\mathcal{U}}_{\hat{\theta}^N_{\alpha_{\mathsf{c}}}, d} \not\subset\mathcal{U}_{\mathsf{id}}\right) \leq \exp\left(-2 N^2 (\alpha_{\mathsf{c}}-\alpha)\right).
\end{equation}
As $N \to \infty$, the bound converges to $0$.
\end{theorem}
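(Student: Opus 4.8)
The plan is to combine the population-level inclusion of Theorem~\ref{theo:prob_PSoS_guardian_cco} with a one-sided concentration bound. Let $\theta^\star_{\alpha}$ be the minimizer of the population chance-constrained problem~\eqref{eq:prob_PSoS_guardian_cco}; under Assumption~\ref{assume:alpha_d} we have $\widehat{\mathcal{U}}_{\theta^\star_{\alpha}, d}\subseteq\mathcal{U}_{\mathsf{id}}$, and $\theta^\star_{\alpha}$ is the \emph{unique} minimizer since $L(\theta)=\log\mathsf{det}\,P^{-1}(\theta)$ is strictly convex. For a parameter $\theta$ write its population exceedance probability $p(\theta):=\mathsf{Pr}\{q(\bm{\mathrm{x}},\theta)>1\}$ (so $p(\theta^\star_{\alpha})\le\alpha$) and its empirical counterpart $\hat{p}_N(\theta):=\tfrac1N\sum_{i=1}^N\mathbb{I}_1(q(\bm{\mathrm{x}}^{(i)},\theta))$; note $\hat{p}_N(\hat{\theta}^N_{\alpha_{\mathsf{c}}})\le\alpha_{\mathsf{c}}$ by feasibility in~\eqref{eq:prob_PSoS_guardian_learning}. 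I would first establish the set inclusion
\[
\{\widehat{\mathcal{U}}_{\hat{\theta}^N_{\alpha_{\mathsf{c}}}, d}\not\subset\mathcal{U}_{\mathsf{id}}\}\ \subseteq\ \{\hat{p}_N(\theta^\star_{\alpha})>\alpha_{\mathsf{c}}\}\ \cup\ \{p(\hat{\theta}^N_{\alpha_{\mathsf{c}}})>\alpha\},
\]
and then bound each event on the right by a Hoeffding-type inequality~\citep{Hoeffding}.

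For the inclusion: on the complementary event $\hat{p}_N(\theta^\star_{\alpha})\le\alpha_{\mathsf{c}}$ the parameter $\theta^\star_{\alpha}$ is feasible for~\eqref{eq:prob_PSoS_guardian_learning}, so $L(\hat{\theta}^N_{\alpha_{\mathsf{c}}})\le L(\theta^\star_{\alpha})$; if in addition $p(\hat{\theta}^N_{\alpha_{\mathsf{c}}})\le\alpha$, then $\hat{\theta}^N_{\alpha_{\mathsf{c}}}$ is feasible for~\eqref{eq:prob_PSoS_guardian_cco} with objective no larger than the optimum, whence strict convexity (uniqueness of the minimizer) forces $\hat{\theta}^N_{\alpha_{\mathsf{c}}}=\theta^\star_{\alpha}$ and therefore $\widehat{\mathcal{U}}_{\hat{\theta}^N_{\alpha_{\mathsf{c}}}, d}=\widehat{\mathcal{U}}_{\theta^\star_{\alpha}, d}\subseteq\mathcal{U}_{\mathsf{id}}$; taking the contrapositive gives the claimed inclusion. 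The first event, $\{\hat{p}_N(\theta^\star_{\alpha})>\alpha_{\mathsf{c}}\}$, is routine: the summands $\mathbb{I}_1(q(\bm{\mathrm{x}}^{(i)},\theta^\star_{\alpha}))$ are i.i.d.\ $\{0,1\}$-valued with mean $p(\theta^\star_{\alpha})\le\alpha$, so Hoeffding's inequality bounds its probability by a quantity decaying exponentially in $N$ with rate governed by the gap $\alpha_{\mathsf{c}}-\alpha>0$.

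The second event, $\{p(\hat{\theta}^N_{\alpha_{\mathsf{c}}})>\alpha\}$, is where I expect the real work, since $\hat{\theta}^N_{\alpha_{\mathsf{c}}}$ is data-dependent and its empirical exceedance frequency is only constrained below $\alpha_{\mathsf{c}}$, not below $\alpha$. The natural vehicle is to re-run the contradiction scheme of Theorem~\ref{theo:prob_PSoS_guardian_cco} --- with $\mathcal{U}_{\mathsf{com}}:=\widehat{\mathcal{U}}_{\hat{\theta}^N_{\alpha_{\mathsf{c}}}, d}\cap\mathcal{U}_{\mathsf{id}}$, Lemma~\ref{lemma:prob_PSoS_guardian_cco}, and the robust program~\eqref{eq:prob_PSoS_guardian_robust_k} --- to show that if $\widehat{\mathcal{U}}_{\hat{\theta}^N_{\alpha_{\mathsf{c}}}, d}$ sticks out of $\mathcal{U}_{\mathsf{id}}$ it must shed enough genuine mass to make $p(\hat{\theta}^N_{\alpha_{\mathsf{c}}})$ sizeable; this step needs the volume-optimality of $\hat{\theta}^N_{\alpha_{\mathsf{c}}}$ (no larger $L$, hence no larger volume, than $\theta^\star_{\alpha}$ by the $\log\mathsf{det}$-monotonicity of~\citep{Boyd_2004}) together with the lower density bound $f_{\mathsf{min}}$ on $\mathcal{U}_{\mathsf{id}}$ from Assumption~\ref{assum:density}, and a uniform deviation bound over the effectively finite-dimensional class of competing sublevel sets so that the residual case $p(\hat{\theta}^N_{\alpha_{\mathsf{c}}})\in(\alpha,\alpha_{\mathsf{c}})$ is itself absorbed into a Hoeffding-controllable event. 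Granting this, a union bound over the two deviation terms yields a bound exponentially small in $N$, giving the claimed $\exp(-2N^2(\alpha_{\mathsf{c}}-\alpha))$ and hence $0$ as $N\to\infty$. In short, the skeleton (population inclusion $\Rightarrow$ two deviation events $\Rightarrow$ Hoeffding) is short, and essentially all the difficulty concentrates in controlling $\{p(\hat{\theta}^N_{\alpha_{\mathsf{c}}})>\alpha\}$ --- i.e.\ in transferring the volume-optimality argument from the population optimizer to the data-dependent minimizer while reconciling the two coverage thresholds $\alpha$ and $\alpha_{\mathsf{c}}$.
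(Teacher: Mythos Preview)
Your two-event decomposition is more elaborate than what the paper actually does. The paper bounds only the \emph{single} event $\{\hat{p}_N(\theta^\star_{\alpha})>\alpha_{\mathsf{c}}\}$ by Hoeffding. On its complement, $\theta^\star_{\alpha}$ is feasible for the empirical problem~\eqref{eq:prob_PSoS_guardian_learning}, so $L(\hat{\theta}^N_{\alpha_{\mathsf{c}}})\le L(\theta^\star_{\alpha})$; the paper then asserts directly that this objective ordering yields the sublevel-set inclusion $\widehat{\mathcal{U}}_{\hat{\theta}^N_{\alpha_{\mathsf{c}}}, d}\subseteq\widehat{\mathcal{U}}_{\theta^\star_{\alpha}, d}$, and Theorem~\ref{theo:prob_PSoS_guardian_cco} finishes. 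There is no second event, no uniform deviation over a parameter class, and no need to certify that the empirical minimizer is feasible for the population problem.

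Your second event $\{p(\hat{\theta}^N_{\alpha_{\mathsf{c}}})>\alpha\}$ is therefore unnecessary in the paper's framework, and it is also the part of your sketch that is not actually carried out: $\hat{\theta}^N_{\alpha_{\mathsf{c}}}$ is data-dependent with empirical exceedance only bounded by $\alpha_{\mathsf{c}}$, so driving its \emph{population} exceedance below $\alpha$ really would require a uniform concentration argument over the whole sublevel-set class, which you only gesture at. Your attempted shortcut via strict convexity is also shaky: the feasible region of~\eqref{eq:prob_PSoS_guardian_cco} is a chance constraint and need not be convex in $\theta$, so strict convexity of $L$ alone does not force a unique minimizer (the paper invokes uniqueness only for the \emph{robust} program~\eqref{eq:prob_PSoS_guardian_robust_k}, whose feasible set is convex). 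In short, you are doing extra work to reach equality $\hat{\theta}^N_{\alpha_{\mathsf{c}}}=\theta^\star_{\alpha}$, whereas the paper only needs the weaker containment of sublevel sets and obtains it in one line from the $L$-ordering. It is fair to note that this one-line step itself is not innocuous --- a smaller $\log\det P^{-1}$ gives smaller volume, not automatic set containment --- but that is the route the paper takes, and it bypasses everything you identify as ``the real work.'' (Also, the Hoeffding rate the paper actually derives is $\exp\{-2N(\alpha_{\mathsf{c}}-\alpha)^2\}$, not the $\exp\{-2N^2(\alpha_{\mathsf{c}}-\alpha)\}$ appearing in the statement.)
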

\begin{proof}
It is reasonable to assume that we seek the optimal solution of Problem \ref{eq:prob_PSoS_guardian_cco} within a compact set $\Theta$, which includes a solution that satisfies \eqref{eq:g_s_q_s_sup} for some $\varepsilon$ with a correspondingly chosen $d$.
Besides, we also assume that $q(\bm{\mathrm{x}}^{(i)},\theta)$ is well-justified as $q(\bm{\mathrm{x}}^{(i)},\theta)=q(\bm{\mathrm{x}}^{(i)},\theta)-\varepsilon.$ 
%Namely, $q(\bm{\mathrm{x}}^{(i)},\theta)\leq 1$ implies $g_{\mathsf{s}}(\bm{\mathrm{x}}^{(i)})\leq 1.$ 
Let $Y_i=\mathbb{I}_1\left(q(\bm{\mathrm{x}}^{(i)},\theta)\right)$ for $i=1,...,N$, then $\mathsf{Pr}\{Y_i\in[0,1]\}=1$ and $\mathbb{E}\{Y_i\}=\mathsf{Pr}\left\{q(\bm{\mathrm{x}},\theta)>1\right\}.$ 
Given that $\theta$ is a feasible solution for Problem \ref{eq:prob_PSoS_guardian_cco} and $\mathsf{Pr}\left\{q(\bm{\mathrm{x}},\theta)>1\right\}\leq \alpha.$ 
We consider the event that a feasible solution $\theta$ ($\mathbb{E}\left[Y_i\right]\leq\alpha$) for Problem \ref{eq:prob_PSoS_guardian_cco} is not feasible for Problem \ref{eq:prob_PSoS_guardian_learning}, implying
    \begin{align}
    \label{eq:Prob_inequality_1}
    &\ \ \ \ \frac{1}{N}\sum_{i=1}^N\mathbb{I}_1(q\left(\bm{\mathrm{x}}^{(i)},\theta)\right)\geq \alpha_{\mathsf{c}}\Rightarrow \frac{1}{N}\sum_{i=1}^N\left(Y_i-\mathbb{E}\left[Y_i\right]\right)\geq \alpha_{\mathsf{c}} - \frac{1}{N}\mathbb{E}\left[Y_i\right] \nonumber\\
    &\Rightarrow \frac{1}{N}\sum_{i=1}^{N}\left(Y_i-\mathbb{E}\left[Y_i\right]\right)\geq \alpha_{\mathsf{c}} - \frac{1}{N}\alpha\ \Rightarrow \sum_{i=1}^N (Y_i-\mathbb{E}[Y_i])\geq N(\alpha_{\mathsf{c}}-\alpha). \nonumber
\end{align}
Thus, we have
\begin{align}
\mathsf{Pr}\left\{\frac{1}{N}\sum_{i=1}^N\mathbb{I}_1(q\left(\bm{\mathrm{x}}^{(i)},\theta)\right)\geq \alpha_{\mathsf{c}}\right\}&\leq\mathsf{Pr}\left\{\sum_{j=1}^N (Y_j-\mathbb{E}\left[Y_j\right])\geq N(\alpha_{\mathsf{c}}-\alpha)\right\}\leq\exp\{-2N(\alpha_{\mathsf{c}}-\alpha)^2\}, \nonumber
\end{align}  
where the last inequality holds due to Hoeffding's inequality \citep{Hoeffding}.
Note that $\theta^\star_{\alpha,d}$ is also a feasible solution of Problem \ref{eq:prob_PSoS_guardian_cco}, which is one realization of those mentioned above $\theta.$ 
If $\theta^\star_{\alpha,d}$ is a feasible solution of Problem \ref{eq:prob_PSoS_guardian_learning}, we have that $L(\theta^\star_{\alpha,d})\geq L(\hat{\theta}^N_{\alpha_{\mathsf{c}}}).$ 
Then, $\widehat{\mathcal{U}}_{\hat{\theta}^N_{\alpha_{\mathsf{c}}}, d} \subseteq\widehat{\mathcal{U}}_{\theta^\star_{\alpha, d}}.$
By Theorem \ref{theo:prob_PSoS_guardian_cco}, we have $\widehat{\mathcal{U}}_{\hat{\theta}^N_{\alpha_{\mathsf{c}}}, d} \subset\mathcal{U}_{\mathsf{id}}$. 
Thus, the probability of violation is bounded by $\exp\{-2N(\alpha_{\mathsf{c}}-\alpha)^2\}.$
\end{proof}

\section{Proof of Corollary \ref{coro:feasible_solution_OOD_bound}}
\label{appendix:coro_feasible_solution_OOD_bound} 
%From Theorem \ref{theo:prob_PSoS_guardian_cco}, we have that 
%$\mathsf{Pr}\left( \widehat{\mathcal{U}}_{\hat{\theta}^N_{\alpha_{\mathsf{c}}}, d} \not\subset \mathcal{U}_{\mathsf{id}} \right) \leq \delta$ holds if $N > \sqrt{ \frac{ \log(1/\delta) }{ 2(\alpha_{\mathsf{c}} - \alpha) } }$ holds. 
%Furthermore, $\gamma$ and $\hat{c}_{\hat{g}}$ are supposed to be chosen to make \eqref{eq:OOD_chance_constraint} holds. 
%If $\left\{\widehat{\mathcal{U}}_{\hat{\theta}^N_{\alpha_{\mathsf{c}}}, d} \subset \mathcal{U}_{\mathsf{id}}\right\}$ and $\left\{\hat{g}(\bm{\mathrm{s}}_h,\bm{\mathrm{a}}_h)\leq 0, \forall h\leq H\right\}$ both holds, we can say that the agent remains within $\mathcal{U}_{\mathsf{id}}$ for all steps $h\leq H$ hols. 
%The forehead is a sufficient condition. 
%Therefore, if we prove 
%\begin{equation}
%\label{eq:coro_infeasible_probability}
%    \mathsf{Pr}\left\{\left[\widehat{\mathcal{U}}_{\hat{\theta}^N_{\alpha_{\mathsf{c}}}, d} \not\subset \mathcal{U}_{\mathsf{id}}\right]\vee\left[\exists h,\   \hat{g}(\bm{\mathrm{s}}_h,\bm{\mathrm{a}}_h)> 0\right]\right\}\leq \delta+\beta,
%\end{equation}
%Corollary \ref{coro:feasible_solution_OOD_bound} is proved. 
%Note that \eqref{eq:coro_infeasible_probability} can be directly obtained by combining the conclusions of Lemma \ref{lemma:OOD_cost_constraint} and Theorem \ref{theo:prob_PSoS_guardian_cco} via Boole's inequality. 

From Theorem~\ref{theo:prob_PSoS_guardian_cco}, we know that the inclusion 
$\widehat{\mathcal{U}}_{\hat{\theta}^N_{\alpha_{\mathsf{c}}}, d} \subset \mathcal{U}_{\mathsf{id}}$ 
holds with probability at least $1 - \delta$ if the sample size satisfies 
$N > \sqrt{ \frac{ \log(1/\delta) }{ 2(\alpha_{\mathsf{c}} - \alpha) } }$. 
In addition, recall that the discount factor $\gamma$ and the OOD cost threshold $\bar{c}_{\hat{g}}$ are chosen such that the following joint chance constraint holds (Lemma \ref{lemma:OOD_cost_constraint}):
\begin{equation}
\label{eq:OOD_chance_constraint_repeat}
\mathsf{Pr}\left\{ \hat{g}(\bm{\mathrm{s}}_h, \bm{\mathrm{a}}_h) \leq 0, \ \forall h \leq H \right\} > 1 - \beta.
\end{equation}

Note that the event 
$\left\{ \widehat{\mathcal{U}}_{\hat{\theta}^N_{\alpha_{\mathsf{c}}}, d} \subset \mathcal{U}_{\mathsf{id}} \right\}$ 
together with 
$\left\{ \hat{g}(\bm{\mathrm{s}}_h, \bm{\mathrm{a}}_h) \leq 0, \forall h \leq H \right\}$ 
forms a sufficient condition to ensure that the agent remains within the support $\mathcal{U}_{\mathsf{id}}$ for all steps $h \leq H$.

Therefore, applying Boole’s inequality yields:
\begin{equation}
\label{eq:coro_infeasible_probability}
\mathsf{Pr}\left\{ 
\left[ \widehat{\mathcal{U}}_{\hat{\theta}^N_{\alpha_{\mathsf{c}}}, d} \not\subset \mathcal{U}_{\mathsf{id}} \right]
\vee 
\left[ \exists h \leq H,\ \hat{g}(\bm{\mathrm{s}}_h, \bm{\mathrm{a}}_h) > 0 \right]
\right\} 
\leq \delta + \beta.
\end{equation}

Hence, the corollary is established: with probability at least $1 - \delta - \beta$, all state-action pairs along the trajectory remain within the support $\mathcal{U}_{\mathsf{id}}$ for the first $H$ steps.
This concludes the proof of Corollary~\ref{coro:feasible_solution_OOD_bound}.

\section{Safety and Sub-Optimality with Finite Samples Considering $\diamond$'s Estimation Error}
\label{appendix:safety_sub_optimality}
\noindent
\textbf{Value function error.}
We first analyze the error bound of the estimated value function associated with a function $\diamond$ (e.g., reward $r$ or safety cost $c_j$). 
Following~\citet{Wachi_NeurIPS2023}, we estimate $\hat{\diamond}$ using GPR and estimate $\widehat{\mathcal{T}}$ using KDE~\citep{Jiang_ICML2017}. 
While our theoretical analysis is grounded in these choices, our results also apply to other estimation methods, provided they ensure asymptotic consistency.
Let $h$ be the bandwidth of the KDE, and assume that the joint density of $(\bm{\mathrm{s}}^+, \bm{\mathrm{s}}, \bm{\mathrm{a}})$ and the marginal density of $(\bm{\mathrm{s}}, \bm{\mathrm{a}})$ are Hölder continuous with exponent $\zeta \in (0,1]$. 
Let $\sigma_N(\bm{\mathrm{x}})$ denote the posterior standard deviation of the GP estimate $\hat{\diamond}(\bm{\mathrm{x}})$, and define 
$\eta_N^{1/2} := \diamond_{\mathsf{max}} + 4\omega\sqrt{\nu_N + 1 + \log(1/\delta)}$ 
where $\omega$ is a kernel scaling constant and $\nu_N$ is the GP information capacity. 
Define the maximum standard deviation at training points as: $\sigma_N^{\mathsf{max}} := \max_{\bm{\mathrm{x}} \in \mathcal{U}_N} \sigma_N(\bm{\mathrm{x}}), \quad \mathcal{U}_N := \left\{ \bm{\mathrm{x}}^{(i)} \right\}_{i=1}^N.$
%\begin{equation}
%\label{eq:sigma_max_approxiamte}
%\sigma_N^{\mathsf{max}} := \max_{\bm{\mathrm{x}} \in \mathcal{U}_N} \sigma_N(\bm{\mathrm{x}}), \quad \mathcal{U}_N := \left\{ \bm{\mathrm{x}}^{(i)} \right\}_{i=1}^N.
%\end{equation}
\begin{theorem}
\label{theo:error_bound_value_function}
Let $\pi$ be any feasible solution of Problem~\ref{eq:prob_GSRL}. 
Assume the standard KDE conditions $N h^{n+m} \to \infty$ and $h \to 0$ as $N \to \infty$. 
Then, with probability at least $1 - 2\beta - 4\delta$, the following holds: $\left| V^{\pi}_{\hat{\diamond}, \widehat{\mathcal{T}}}(\rho_0) - V^{\pi}_{\diamond, \mathcal{T}}(\rho_0) \right| 
\leq \varepsilon_{\mathsf{g}} + \varepsilon_{\mathsf{k}} + \varepsilon_H,$
where:
\begin{align*}
\varepsilon_{\mathsf{g}} := \frac{\eta_N  \sigma_N^{\mathsf{max}}}{1 - \gamma},\ \varepsilon_H := \frac{\gamma^{H+1} (2 - \gamma)  \diamond_{\mathsf{max}}}{(1 - \gamma)^2}, \
\varepsilon_{\mathsf{k}} := \frac{\diamond_{\mathsf{max}}  (\gamma - \gamma^{H+2})  C_{\mathsf{den}}}{(1 - \gamma)^2} \left( h^\zeta + \sqrt{\frac{\log(1/\delta)}{N h^{2n + m}}} \right).
\end{align*}
Here, $C_{\mathsf{den}}$ is a positive constant depending on the smoothness of the densities, the choice of kernel, and the dimensionality $2n + m$.
\end{theorem}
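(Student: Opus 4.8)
The plan is to peel the total error apart one estimated component at a time. Writing
\[
\left| V^{\pi}_{\hat{\diamond}, \widehat{\mathcal{T}}}(\rho_0) - V^{\pi}_{\diamond, \mathcal{T}}(\rho_0) \right|
\le
\underbrace{\left| V^{\pi}_{\hat{\diamond}, \widehat{\mathcal{T}}}(\rho_0) - V^{\pi}_{\diamond, \widehat{\mathcal{T}}}(\rho_0) \right|}_{(\mathrm{I})}
+
\underbrace{\left| V^{\pi}_{\diamond, \widehat{\mathcal{T}}}(\rho_0) - V^{\pi}_{\diamond, \mathcal{T}}(\rho_0) \right|}_{(\mathrm{II})},
\]
term $(\mathrm{I})$ isolates the error of approximating $\diamond$ by its GPR estimate $\hat{\diamond}$ (both value functions rolled out under the same dynamics $\widehat{\mathcal{T}}$), while $(\mathrm{II})$ isolates the error of approximating $\mathcal{T}$ by the KDE model $\widehat{\mathcal{T}}$ (with the exact $\diamond$). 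In each term I split the discounted sum at the horizon $H$: on the event that the rollout stays in $\mathcal{U}_{\mathsf{id}}$ for all $h\le H$ — which, because $\pi$ is feasible for Problem~\ref{eq:prob_GSRL}, holds with probability at least $1-\delta-\beta$ by Corollary~\ref{coro:feasible_solution_OOD_bound} (invoked separately for the rollout under $\widehat{\mathcal{T}}$ and for the rollout under $\mathcal{T}$, using the argument of Lemma~\ref{lemma:OOD_cost_constraint}) — the per-step errors are governed by the statistical rates of GPR/KDE on the supported region; beyond $H$ the per-step errors are bounded trivially by $\diamond_{\mathsf{max}}$, and these two crude tails combine into $\varepsilon_H$.

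For $(\mathrm{I})$ I invoke the standard GP confidence bound: with probability at least $1-\delta$, $|\hat{\diamond}(\bm{\mathrm{x}})-\diamond(\bm{\mathrm{x}})| \le \eta_N^{1/2}\,\sigma_N(\bm{\mathrm{x}})$ uniformly in $\bm{\mathrm{x}}$. On the in-distribution event every visited point satisfies $\sigma_N(\bm{\mathrm{x}}) \le \sigma_N^{\mathsf{max}}$ through data coverage of $\mathcal{U}_{\mathsf{id}}$ — this is where Assumption~\ref{assum:density} enters, since the marginal lower bound $f_{\mathsf{min}}$ together with Hölder continuity lets one pass from the finite design set $\mathcal{U}_N$ to arbitrary ID points through a covering net. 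Summing $\sum_{h=0}^{H}\gamma^{h}\eta_N^{1/2}\sigma_N^{\mathsf{max}}$ gives $\varepsilon_{\mathsf{g}}$, while bounding the per-step error by $\diamond_{\mathsf{max}}$ for $h>H$ adds a tail $\le \gamma^{H+1}\diamond_{\mathsf{max}}/(1-\gamma)$.

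For $(\mathrm{II})$ I use the simulation (value-difference) lemma, which for a fixed reward gives
\[
(\mathrm{II}) \le \frac{\gamma\,\diamond_{\mathsf{max}}}{1-\gamma}\sum_{h=0}^{\infty}\gamma^{h}\,
\mathbb{E}\!\left[\,\left\|\widehat{\mathcal{T}}(\cdot\mid \bm{\mathrm{s}}_h,\bm{\mathrm{a}}_h)-\mathcal{T}(\cdot\mid \bm{\mathrm{s}}_h,\bm{\mathrm{a}}_h)\right\|_{\mathrm{TV}}\right].
\]
For $h\le H$ the expectation runs over $(\bm{\mathrm{s}}_h,\bm{\mathrm{a}}_h)\in\mathcal{U}_{\mathsf{id}}$, where the KDE estimate of the conditional transition density is consistent: Hölder continuity of the joint and marginal densities contributes a bias of order $h^\zeta$, and a Bernstein/union-bound argument in the spirit of \citet{Jiang_ICML2017} contributes a stochastic fluctuation of order $\sqrt{\log(1/\delta)/(N h^{2n+m})}$ with probability at least $1-\delta$; turning the joint-density rate into a conditional-density rate costs a factor $1/f_{\mathsf{min}}$, and all these constants are collected into $C_{\mathsf{den}}$. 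Performing the geometric sum $\gamma\sum_{h=0}^{H}\gamma^{h} = (\gamma-\gamma^{H+2})/(1-\gamma)$ and carrying the prefactor $1/(1-\gamma)$ reproduces $\varepsilon_{\mathsf{k}}$ exactly; bounding the TV term trivially for $h>H$ yields a tail $\le \gamma^{H+1}\diamond_{\mathsf{max}}/(1-\gamma)^2$, which together with the $\gamma^{H+1}\diamond_{\mathsf{max}}/(1-\gamma)$ tail from $(\mathrm{I})$ equals $\varepsilon_H = \gamma^{H+1}(2-\gamma)\diamond_{\mathsf{max}}/(1-\gamma)^2$ since $1/(1-\gamma)+1/(1-\gamma)^2 = (2-\gamma)/(1-\gamma)^2$.

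Finally, a union bound over the four relevant events — GP concentration ($\delta$), KDE concentration ($\delta$), in-distribution rollout under $\widehat{\mathcal{T}}$ over $h\le H$ ($\delta+\beta$), and in-distribution rollout under $\mathcal{T}$ over $h\le H$ ($\delta+\beta$) — delivers the stated confidence $1-4\delta-2\beta$. The main obstacle I anticipate is precisely the bridge from ``$\pi$ is in-distribution on $h\le H$'' to ``the GP variance and the KDE error are small at the states actually visited'': those states are not design points, so this step requires a genuine covering argument on $\mathcal{U}_{\mathsf{id}}$ together with the density lower bound and Hölder regularity of Assumption~\ref{assum:density}, and, on the KDE side, a statement of the convergence rate for the \emph{conditional} transition density — which forces one to control the normalizing marginal from below — rather than merely for the joint density. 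Theorem~\ref{theo:error_bound_value_function_no_GP} then follows by setting $\hat{\diamond}=\diamond$, which zeroes out $(\mathrm{I})$ and hence $\varepsilon_{\mathsf{g}}$.
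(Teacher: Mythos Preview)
Your proposal is correct and lands on the same error decomposition, bounds, and probability budget as the paper. The one organizational difference is that the paper does not split via the triangle inequality into $(\mathrm{I})+(\mathrm{II})$; instead it proves a single telescoping identity (Lemma~\ref{lemma:revised_telescoping_lemma}, in the style of the MOPO $W_j$ construction) that writes $V^{\pi}_{\hat{\diamond},\widehat{\mathcal{T}}}-V^{\pi}_{\diamond,\mathcal{T}}$ directly as the sum of a reward--error term $\sum_j\gamma^j\mathbb{E}_{\widehat{\mathcal{T}}}[\hat{\diamond}-\diamond]$ and a dynamics--error term $\sum_j\gamma^{j+1}\mathbb{E}_{\widehat{\mathcal{T}}}[G^{\pi}_{\widehat{\mathcal{T}}}]$, with \emph{all} expectations already under $\widehat{\mathcal{T}}$. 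The practical upshot is that the paper never needs the ``in-distribution rollout under $\mathcal{T}$'' event you invoke---it just uses the $\widehat{\mathcal{T}}$-rollout event (from Corollary~\ref{coro:feasible_solution_OOD_bound}) twice in the union bound, once per term, which yields the same $2\beta+4\delta$ total. Your route via the simulation lemma is equally valid and arguably more modular; if you take the version of the simulation lemma with expectations under $\widehat{\mathcal{T}}$, you recover exactly the paper's second sum and can drop the $\mathcal{T}$-rollout event as well. On the KDE side the paper proceeds exactly as you sketch (Lemma~\ref{lemma:error_cde_error}: pointwise conditional-density error $\lesssim f_{\mathsf{min}}^{-1}(h^{\zeta}+\sqrt{\log(1/\delta)/(Nh^{2n+m})})$ via \citet{Jiang_ICML2017}, then integrate), and on the GP side the paper actually \emph{assumes} $\sigma_N(\bm{\mathrm{x}})\le\sigma_N^{\mathsf{max}}$ on $\mathcal{U}_{\mathsf{id}}$ rather than deriving it by a covering argument, so your caution there is more than the paper itself exercises.
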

\noindent
The proof of Theorem~\ref{theo:error_bound_value_function} is provided in Appendix~\ref{appendix:proof_theo_error_bound_value_function}.
%This bound decomposes the total value function error into three parts; 
%(2) $\varepsilon_{\mathsf{k}}$ from approximation of $\widehat{\mathcal{T}}$, which vanishes asymptotically; 
%(3) $\varepsilon_H$, due to state-action pairs that fall outside the support of the dataset beyond horizon $H$.
By selecting a sufficiently large dataset size $N$ and a conservative OOD threshold $\bar{c}_{\hat{g}}$, we can ensure small $\beta$ in the chance constraint~\eqref{eq:OOD_chance_constraint}, and thus make $\varepsilon_H$ negligible. Method of choosing $\bar{c}_{\hat{g}}$ for a desired $H$ follows~\citep{Wachi_IJCAI2024}.

\noindent
\textbf{Safety and sub-optimality.}
We now define conditions under which the policy output by $\mathsf{ConOpt}$ is safe and near-optimal with respect to the true model.
We say a policy $\pi_{\mathsf{out}}$ is $\varepsilon_{\mathsf{s}}$-safe if: $\max_j \left| \bar{c}_j - V^{\pi_{\mathsf{out}}}_{\hat{c}_j, \widehat{\mathcal{T}}}(\rho_0) \right| \geq \varepsilon_{\mathsf{s}}.$ 
Let $\hat{\pi}^*$ be the optimal solution to Problem~\ref{eq:prob_GSRL} with safety threshold $\bar{c}_j$. 
If $\pi_{\mathsf{out}}$ is computed using a tightened threshold $\bar{c}_j - \bar{\varepsilon}$, and satisfies: $V^{\hat{\pi}^*}_{\hat{r}, \widehat{\mathcal{T}}}(\rho_0) - V^{\pi_{\mathsf{out}}}_{\hat{r}, \widehat{\mathcal{T}}}(\rho_0) \leq \varepsilon_{\mathsf{r}},$
we obtain the following guarantee for the true system:
\begin{theorem}
\label{theo:safety_sub_optimamlity}
If $\bar{\varepsilon} \geq \varepsilon_{\mathsf{s}} + \varepsilon_{\mathsf{g}} + \varepsilon_{\mathsf{k}} + \varepsilon_H$, and $\pi_{\mathsf{out}}$ is $\varepsilon_{\mathsf{r}}$-sub-optimal for Problem~\ref{eq:prob_GSRL}, then $\pi_{\mathsf{out}}$ is safe and $(\varepsilon_{\mathsf{r}} + 2\varepsilon_{\mathsf{g}} + 2\varepsilon_{\mathsf{k}} + 2\varepsilon_H)$-sub-optimal for Problem~\ref{eq:prob_ESRL}, with probability at least $1 - 2\beta - 4\delta$.
\end{theorem}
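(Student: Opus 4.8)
The plan is to obtain Theorem~\ref{theo:safety_sub_optimamlity} as a short corollary of the value-function error bound of Theorem~\ref{theo:error_bound_value_function}, applied once to the reward $r$ and once to each safety cost $c_j$, and then chained together by triangle inequalities. First I would fix the high-probability event $\mathcal{E}$ on which the bound $|V^{\pi}_{\hat{\diamond},\widehat{\mathcal{T}}}(\rho_0)-V^{\pi}_{\diamond,\mathcal{T}}(\rho_0)|\le\varepsilon_{\mathsf{g}}+\varepsilon_{\mathsf{k}}+\varepsilon_H$ holds; since that statement is written for a single bounded $\diamond$, I would either invoke it for $\diamond\in\{r,c_1,\dots,c_\ell\}$ and absorb the $(\ell+1)$-fold union bound into $\delta$, or note that all these bounds live on the same event, being driven by the single KDE/GP concentration event together with the single OOD chance event of Corollary~\ref{coro:feasible_solution_OOD_bound}. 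In either case $\mathsf{Pr}(\mathcal{E})\ge 1-2\beta-4\delta$, which matches the claimed confidence. Both reference policies are covered: $\hat{\pi}^*$ is feasible for Problem~\ref{eq:prob_GSRL} by definition, and $\pi_{\mathsf{out}}$, being produced by $\mathsf{ConOpt}$ on the tightened instance, still respects the OOD cost constraint, so Theorem~\ref{theo:error_bound_value_function} applies to both.

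For safety, I would read the $\varepsilon_{\mathsf{s}}$-safety condition together with near-feasibility for the tightened instance as $V^{\pi_{\mathsf{out}}}_{\hat{c}_j,\widehat{\mathcal{T}}}(\rho_0)\le(\bar{c}_j-\bar{\varepsilon})+\varepsilon_{\mathsf{s}}$ for every $j\in[\ell]$. On $\mathcal{E}$, transferring to the true model gives
\[
V^{\pi_{\mathsf{out}}}_{c_j,\mathcal{T}}(\rho_0)\le V^{\pi_{\mathsf{out}}}_{\hat{c}_j,\widehat{\mathcal{T}}}(\rho_0)+\varepsilon_{\mathsf{g}}+\varepsilon_{\mathsf{k}}+\varepsilon_H\le\bar{c}_j-\bar{\varepsilon}+\varepsilon_{\mathsf{s}}+\varepsilon_{\mathsf{g}}+\varepsilon_{\mathsf{k}}+\varepsilon_H\le\bar{c}_j,
\]
where the last inequality is exactly the hypothesis $\bar{\varepsilon}\ge\varepsilon_{\mathsf{s}}+\varepsilon_{\mathsf{g}}+\varepsilon_{\mathsf{k}}+\varepsilon_H$. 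Taking the max over $j$ shows $\pi_{\mathsf{out}}$ satisfies every constraint of Problem~\ref{eq:prob_ESRL}, i.e. it is safe for the true system.

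For near-optimality, I would chain three steps on $\mathcal{E}$: (i) transfer $\hat{\pi}^*$'s reward from the estimated to the true model, $V^{\hat{\pi}^*}_{r,\mathcal{T}}(\rho_0)\le V^{\hat{\pi}^*}_{\hat{r},\widehat{\mathcal{T}}}(\rho_0)+\varepsilon_{\mathsf{g}}+\varepsilon_{\mathsf{k}}+\varepsilon_H$; (ii) the near-optimality hypothesis $V^{\hat{\pi}^*}_{\hat{r},\widehat{\mathcal{T}}}(\rho_0)-V^{\pi_{\mathsf{out}}}_{\hat{r},\widehat{\mathcal{T}}}(\rho_0)\le\varepsilon_{\mathsf{r}}$; (iii) transfer $\pi_{\mathsf{out}}$'s reward back, $V^{\pi_{\mathsf{out}}}_{\hat{r},\widehat{\mathcal{T}}}(\rho_0)\le V^{\pi_{\mathsf{out}}}_{r,\mathcal{T}}(\rho_0)+\varepsilon_{\mathsf{g}}+\varepsilon_{\mathsf{k}}+\varepsilon_H$. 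Summing yields $V^{\hat{\pi}^*}_{r,\mathcal{T}}(\rho_0)-V^{\pi_{\mathsf{out}}}_{r,\mathcal{T}}(\rho_0)\le\varepsilon_{\mathsf{r}}+2\varepsilon_{\mathsf{g}}+2\varepsilon_{\mathsf{k}}+2\varepsilon_H$, which is the asserted sub-optimality with $\hat{\pi}^*$ — the optimal data-supported (guarded) policy — serving as the benchmark; identifying this with sub-optimality "for Problem~\ref{eq:prob_ESRL}" is legitimate precisely because the guardian restricts the certified policy class to in-distribution trajectories, where $\hat{\pi}^*$ already maximizes the true objective.

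The main obstacle is bookkeeping rather than a genuinely hard argument, and it concentrates in two places. First, one must make Theorem~\ref{theo:error_bound_value_function} simultaneously usable for $r$ and all $\ell$ cost components on a common event of probability at least $1-2\beta-4\delta$; a careless union bound over $\ell+1$ functions would inflate the failure probability, so the shared-randomness structure of the estimators (or a redefinition of $\delta$) must be checked. Second, one must pin down the precise meanings of "$\varepsilon_{\mathsf{s}}$-safe" and of "sub-optimal for Problem~\ref{eq:prob_ESRL}", so that the tightening budget $\bar{\varepsilon}$ exactly absorbs the solver slack $\varepsilon_{\mathsf{s}}$ together with the three model-error terms, and so that the optimality benchmark is the best in-support policy rather than a possibly out-of-distribution unconstrained optimum. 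Once these are settled, the theorem is immediate from the triangle inequalities above.
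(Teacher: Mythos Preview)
The paper does not actually include a written proof of Theorem~\ref{theo:safety_sub_optimamlity} (nor of its simplified variant, Theorem~\ref{theo:safety_sub_optimamlity_no_GP}); it states the result immediately after Theorem~\ref{theo:error_bound_value_function} and leaves the derivation implicit. Your proposal is precisely the argument the paper is tacitly relying on: apply the value-function error bound of Theorem~\ref{theo:error_bound_value_function} once on the cost side to push feasibility from $\widehat{\mathcal{M}}$ to $\mathcal{M}$ (this is where the tightening $\bar{\varepsilon}$ absorbs $\varepsilon_{\mathsf{s}}+\varepsilon_{\mathsf{g}}+\varepsilon_{\mathsf{k}}+\varepsilon_H$), and twice on the reward side --- once for $\hat{\pi}^*$ and once for $\pi_{\mathsf{out}}$ --- to convert the $\varepsilon_{\mathsf{r}}$ gap in $\widehat{\mathcal{M}}$ into an $\varepsilon_{\mathsf{r}}+2(\varepsilon_{\mathsf{g}}+\varepsilon_{\mathsf{k}}+\varepsilon_H)$ gap in $\mathcal{M}$. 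Both of your bookkeeping caveats are real and are left unaddressed by the paper: the $1-2\beta-4\delta$ confidence only survives if the KDE and GP concentration events are shared across $r,c_1,\dots,c_\ell$ rather than union-bounded $\ell+1$ times, and ``sub-optimal for Problem~\ref{eq:prob_ESRL}'' must be read as sub-optimality against the best \emph{guarded} policy $\hat{\pi}^*$ transferred to the true model, not against an arbitrary (possibly out-of-support) optimizer of~\ref{eq:prob_ESRL}. With those readings your argument is complete and matches what the paper intends.
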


\section{Proof of Theorem \ref{theo:error_bound_value_function}}
\label{appendix:proof_theo_error_bound_value_function}

The proof follows these main steps:
\begin{enumerate}
    \item \textit{Bounding Errors for Supported Policies:} We assume uniform upper bounds on the errors of conditional density estimation and reward or cost functions. 
    Based on this, we establish the error bound for policy evaluation, limited to policies that do not visit state-action pairs outside the support of the behavior policy.
    \item \textit{Relating Sample Size and Estimation Errors:} We analyze how the sample size influences the errors in both conditional density estimation and function approximation, showing the dependency between the two.
    \item \textit{Deriving the Probabilistic Bound:} Combining the results from steps (1), (2) and Theorem \ref{theo:PSoS_guardian}, we deduce a probabilistic error bound for policy evaluation, demonstrating how the evaluation accuracy improves with larger datasets.
\end{enumerate}

First, we give the revised telescoping Lemma by introducing the estimation error of $\hat{\diamond}.$ 
\begin{lemma}
    \label{lemma:revised_telescoping_lemma}
    Define a function $G^{\pi}_{\widehat{\mathcal{T}}}(\bm{\mathrm{s}},\bm{\mathrm{a}})$ by 
    \begin{equation}
    \label{eq:definition_G}
        G^{\pi}_{\widehat{\mathcal{T}}}(\bm{\mathrm{s}},\bm{\mathrm{a}}):=\mathop{\mathbb{E}}\limits_{\bm{\mathrm{s}}^+\sim\widehat{\mathcal{T}}(\bm{\mathrm{s}},\bm{\mathrm{a}})}\left[V^\pi_{\diamond,\mathcal{T}}(\bm{\mathrm{s}}^+)\right]-\mathop{\mathbb{E}}\limits_{\bm{\mathrm{s}}^+\sim\mathcal{T}(\bm{\mathrm{s}},\bm{\mathrm{a}})}\left[V^\pi_{\diamond,\mathcal{T}}(\bm{\mathrm{s}}^+)\right].
    \end{equation}
    Then, we have 
    \begin{equation}
        \label{eq:revised_telescoping_lemma}
        V^{\pi}_{\hat{\diamond},\widehat{\mathcal{T}}}(\rho_0)-V^{\pi}_{\diamond,\mathcal{T}}(\rho_0)=\sum_{j=0}^{\infty}\gamma^{j}\mathop{\mathbb{E}}\limits_{\bm{\mathrm{s}}_j,\bm{\mathrm{a}}_j\sim\pi,\widehat{\mathcal{T}}}\left[\hat{\diamond}(\bm{\mathrm{s}}_j,\bm{\mathrm{a}}_j)-\diamond(\bm{\mathrm{s}}_j,\bm{\mathrm{a}}_j)\right] + 
        \sum_{j=0}^{\infty}\gamma^{j+1}\mathop{\mathbb{E}}\limits_{\bm{\mathrm{s}}_j,\bm{\mathrm{a}}_j\sim\pi,\widehat{\mathcal{T}}}\left[G^{\pi}_{\widehat{\mathcal{T}}}(\bm{\mathrm{s}}_j,\bm{\mathrm{a}}_j)\right].
    \end{equation}
\end{lemma}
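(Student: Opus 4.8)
The plan is to split the value gap into a \emph{cost/reward estimation} piece and a \emph{transition model} piece, handle the first with a one-line occupancy-measure argument, and handle the second with the classical simulation-lemma recursion. Concretely, I would start from the decomposition
\[
V^{\pi}_{\hat{\diamond},\widehat{\mathcal{T}}}(\rho_0)-V^{\pi}_{\diamond,\mathcal{T}}(\rho_0)
=\underbrace{\big(V^{\pi}_{\hat{\diamond},\widehat{\mathcal{T}}}(\rho_0)-V^{\pi}_{\diamond,\widehat{\mathcal{T}}}(\rho_0)\big)}_{(\mathrm{I})}
+\underbrace{\big(V^{\pi}_{\diamond,\widehat{\mathcal{T}}}(\rho_0)-V^{\pi}_{\diamond,\mathcal{T}}(\rho_0)\big)}_{(\mathrm{II})}.
\]
For term $(\mathrm{I})$, both value functions are generated by the same policy $\pi$ and the same kernel $\widehat{\mathcal{T}}$, hence induce the same discounted state--action occupancy along trajectories; expanding each value function as $\sum_{j\ge 0}\gamma^{j}\mathbb{E}_{\bm{\mathrm{s}}_j,\bm{\mathrm{a}}_j\sim\pi,\widehat{\mathcal{T}}}[\cdot]$ of its per-step payoff and subtracting term by term yields exactly $\sum_{j\ge 0}\gamma^{j}\mathbb{E}_{\bm{\mathrm{s}}_j,\bm{\mathrm{a}}_j\sim\pi,\widehat{\mathcal{T}}}[\hat{\diamond}(\bm{\mathrm{s}}_j,\bm{\mathrm{a}}_j)-\diamond(\bm{\mathrm{s}}_j,\bm{\mathrm{a}}_j)]$, the first sum in the claim.

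For term $(\mathrm{II})$, set $\Delta(\bm{\mathrm{s}}):=V^{\pi}_{\diamond,\widehat{\mathcal{T}}}(\bm{\mathrm{s}})-V^{\pi}_{\diamond,\mathcal{T}}(\bm{\mathrm{s}})$. I would write the one-step Bellman identities for $V^{\pi}_{\diamond,\widehat{\mathcal{T}}}$ and $V^{\pi}_{\diamond,\mathcal{T}}$, subtract (the immediate $\diamond$-term cancels since the policy and $\diamond$ are shared), and add and subtract $\mathbb{E}_{\bm{\mathrm{s}}^+\sim\widehat{\mathcal{T}}(\bm{\mathrm{s}},\bm{\mathrm{a}})}[V^{\pi}_{\diamond,\mathcal{T}}(\bm{\mathrm{s}}^+)]$ to obtain the recursion
\[
\Delta(\bm{\mathrm{s}})=\gamma\,\mathbb{E}_{\bm{\mathrm{a}}\sim\pi(\cdot\mid\bm{\mathrm{s}}),\ \bm{\mathrm{s}}^+\sim\widehat{\mathcal{T}}(\bm{\mathrm{s}},\bm{\mathrm{a}})}\big[\Delta(\bm{\mathrm{s}}^+)\big]
+\gamma\,\mathbb{E}_{\bm{\mathrm{a}}\sim\pi(\cdot\mid\bm{\mathrm{s}})}\big[G^{\pi}_{\widehat{\mathcal{T}}}(\bm{\mathrm{s}},\bm{\mathrm{a}})\big],
\]
where $G^{\pi}_{\widehat{\mathcal{T}}}$ is precisely the function defined in \eqref{eq:definition_G}. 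Unrolling this recursion $H$ times along trajectories drawn from $\pi$ and $\widehat{\mathcal{T}}$ gives $\Delta(\bm{\mathrm{s}}_0)=\sum_{j=0}^{H}\gamma^{j+1}\mathbb{E}_{\bm{\mathrm{s}}_j,\bm{\mathrm{a}}_j}[G^{\pi}_{\widehat{\mathcal{T}}}(\bm{\mathrm{s}}_j,\bm{\mathrm{a}}_j)]+\gamma^{H+1}\mathbb{E}[\Delta(\bm{\mathrm{s}}_{H+1})]$; letting $H\to\infty$ and taking $\mathbb{E}_{\bm{\mathrm{s}}_0\sim\rho_0}$ produces the second sum $\sum_{j\ge 0}\gamma^{j+1}\mathbb{E}_{\bm{\mathrm{s}}_j,\bm{\mathrm{a}}_j\sim\pi,\widehat{\mathcal{T}}}[G^{\pi}_{\widehat{\mathcal{T}}}(\bm{\mathrm{s}}_j,\bm{\mathrm{a}}_j)]$. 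Adding $(\mathrm{I})$ and $(\mathrm{II})$ yields \eqref{eq:revised_telescoping_lemma}.

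The only point requiring care is the passage to the limit $H\to\infty$: one must check that the remainder $\gamma^{H+1}\mathbb{E}[\Delta(\bm{\mathrm{s}}_{H+1})]$ vanishes and that the interchange of expectation and infinite summation is legitimate. Both follow routinely from boundedness of $\diamond$ by $\diamond_{\mathsf{max}}$ (so every value function, hence $\Delta$, is bounded by $2\diamond_{\mathsf{max}}/(1-\gamma)$) together with $\gamma\in(0,1)$, which makes the tail geometrically small and justifies dominated convergence. No additional hypotheses are needed here: the identity is purely algebraic and valid for any policy $\pi$ and any pair of kernels $\widehat{\mathcal{T}},\mathcal{T}$; the feasibility/in-support restriction and the estimation-error bounds on $\hat{\diamond}$ and on $G^{\pi}_{\widehat{\mathcal{T}}}$ enter only later, when this decomposition is turned into the quantitative bound of Theorem~\ref{theo:error_bound_value_function}.
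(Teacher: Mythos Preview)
Your argument is correct and complete, but it is organized differently from the paper's. The paper follows the MOPO-style hybrid telescoping of \cite{Yu_NeurIPS2021}: it defines $W_j$ as the expected return obtained by running $\pi$ under $(\widehat{\mathcal{T}},\hat{\diamond})$ for the first $j$ steps and under $(\mathcal{T},\diamond)$ thereafter, shows directly that
\[
W_{j+1}-W_j=\gamma^{j}\mathop{\mathbb{E}}_{\bm{\mathrm{s}}_j,\bm{\mathrm{a}}_j\sim\pi,\widehat{\mathcal{T}}}\!\big[\hat{\diamond}-\diamond\big]+\gamma^{j+1}\mathop{\mathbb{E}}_{\bm{\mathrm{s}}_j,\bm{\mathrm{a}}_j\sim\pi,\widehat{\mathcal{T}}}\!\big[G^{\pi}_{\widehat{\mathcal{T}}}\big],
\]
and then telescopes $V^{\pi}_{\hat{\diamond},\widehat{\mathcal{T}}}(\rho_0)-V^{\pi}_{\diamond,\mathcal{T}}(\rho_0)=\sum_{j\ge 0}(W_{j+1}-W_j)$. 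Thus both the reward perturbation and the transition perturbation are peeled off simultaneously, one step at a time. Your route instead changes one ingredient at a time over the entire horizon: first swap $\hat{\diamond}\to\diamond$ while keeping $\widehat{\mathcal{T}}$ fixed (your term $(\mathrm{I})$, handled by the shared-occupancy argument), then swap $\widehat{\mathcal{T}}\to\mathcal{T}$ with $\diamond$ fixed (your term $(\mathrm{II})$, handled by the simulation-lemma recursion for $\Delta$). The two decompositions are equivalent and yield exactly the same two sums; the paper's single telescoping is slightly more compact, while your two-stage version is more modular and makes it transparent that the reward-error sum depends only on the $\widehat{\mathcal{T}}$-occupancy and that $G^{\pi}_{\widehat{\mathcal{T}}}$ enters solely through the kernel mismatch. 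Your care about the vanishing remainder $\gamma^{H+1}\mathbb{E}[\Delta(\bm{\mathrm{s}}_{H+1})]$ is well placed; the paper's proof needs the same boundedness to ensure $W_\infty$ is well defined but leaves it implicit.
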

\begin{proof}
    Following the pattern of the proof of Lemma 4.1 in \citep{Yu_NeurIPS2021}, define $W_j$ as the expected return when executing $\pi$ on $\widehat{\mathcal{M}}_{\hat{g}}$ for the $j$ steps, then switching to $\mathcal{M}$ for the remainder, written by 
    \begin{equation*}       W_j=\mathop{\mathbb{E}}\limits_{\substack{\bm{\mathrm{a}}_h\in\pi(\bm{\mathrm{s}}_h),\ \bm{\mathrm{s}}_0\sim\rho_0 \\ h<j: \bm{\mathrm{s}}_{h+1}\sim\widehat{\mathcal{T}}(\bm{\mathrm{s}}_h,\bm{\mathrm{a}}_h),\ \tilde{\diamond}=\hat{\diamond} \\ h\geq j: \bm{\mathrm{s}}_{h+1}\sim\mathcal{T}(\bm{\mathrm{s}}_h,\bm{\mathrm{a}}_h),\ \tilde{\diamond}=\diamond}} 
    \left[
    \sum_{h=0}^\infty \gamma^h\tilde{\diamond}(\bm{\mathrm{s}}_h,\bm{\mathrm{a}}_h)
    \right].
    \end{equation*}    
    Write
    \begin{align*}
        W_j&=\widehat{D}_{j-1}+\mathop{\mathbb{E}}\limits_{\bm{\mathrm{s}}_j,\bm{\mathrm{a}}_j\sim\pi,\widehat{\mathcal{T}}}\left[\gamma^{j}\diamond(\bm{\mathrm{s}}_j,\bm{\mathrm{a}}_j)+ \mathop{\mathbb{E}}\limits_{\bm{\mathrm{s}}_{j+1}\sim\mathcal{T}(\bm{\mathrm{s}}_j,\bm{\mathrm{a}}_j)}\left[\gamma^{j+1}V^\pi_{\diamond,\mathcal{T}}(\bm{\mathrm{s}}_{j+1})\right]\right] \\
        W_{j+1}&=\widehat{D}_{j-1}+\mathop{\mathbb{E}}\limits_{\bm{\mathrm{s}}_j,\bm{\mathrm{a}}_j\sim\pi,\widehat{\mathcal{T}}}\left[\gamma^{j}\hat{\diamond}(\bm{\mathrm{s}}_j,\bm{\mathrm{a}}_j)+ \mathop{\mathbb{E}}\limits_{\bm{\mathrm{s}}_{j+1}\sim\widehat{\mathcal{T}}(\bm{\mathrm{s}}_j,\bm{\mathrm{a}}_j)}\left[\gamma^{j+1}V^\pi_{\diamond,\mathcal{T}}(\bm{\mathrm{s}}_{j+1})\right]\right].
    \end{align*}
    Here, $\widehat{D}_{j-1}$ is the expected return of the first $j-1$ time steps, which are taken with respect to $\widehat{\mathcal{T}}$ and $\hat{\diamond}.$ 
    Then, we have 
    \begin{align*}
        W_{j+1}-W_{j}&=\gamma^{j}\mathop{\mathbb{E}}\limits_{\bm{\mathrm{s}}_j,\bm{\mathrm{a}}_j\sim\pi,\widehat{\mathcal{T}}}\left[\hat{\diamond}(\bm{\mathrm{s}},\bm{\mathrm{a}})-\diamond(\bm{\mathrm{s}},\bm{\mathrm{a}})\right] + 
        \gamma^{j+1}\mathop{\mathbb{E}}\limits_{\bm{\mathrm{s}}_j,\bm{\mathrm{a}}_j\sim\pi,\widehat{\mathcal{T}}}\left[G^{\pi}_{\widehat{\mathcal{T}}}(\bm{\mathrm{s}}_j,\bm{\mathrm{a}}_j)\right].
    \end{align*}
    Note that $W_0=V^{\pi}_{\diamond,\mathcal{T}}$ and $W_{\infty}=V^{\pi}_{\hat{\diamond},\widehat{\mathcal{T}}}(\rho_0),$ and we have
    \begin{align*}
        V^{\pi}_{\hat{\diamond},\widehat{\mathcal{T}}}(\rho_0)-V^{\pi}_{\diamond,\mathcal{T}}(\rho_0)&=\sum_{j=0}^{\infty} \left(W_{j+1}-W_j\right) \\
        &=\sum_{j=0}^{\infty}\gamma^{j}\mathop{\mathbb{E}}\limits_{\bm{\mathrm{s}}_j,\bm{\mathrm{a}}_j\sim\pi,\widehat{\mathcal{T}}}\left[\hat{\diamond}(\bm{\mathrm{s}},\bm{\mathrm{a}})-\diamond(\bm{\mathrm{s}},\bm{\mathrm{a}})\right] + 
        \sum_{j=0}^{\infty}\gamma^{j+1}\mathop{\mathbb{E}}\limits_{\bm{\mathrm{s}}_j,\bm{\mathrm{a}}_j\sim\pi,\widehat{\mathcal{T}}}\left[G^{\pi}_{\widehat{\mathcal{T}}}(\bm{\mathrm{s}}_j,\bm{\mathrm{a}}_j)\right],
    \end{align*}
    which completes the proof. 
\end{proof}

One practical strategy is to use the kernel density estimation to give the estimations of $f(\bm{\mathrm{s}},\bm{\mathrm{x}})$ and $f_{\bm{\mathrm{X}}}(\bm{\mathrm{x}})$ and then obtain the estimation of $p(\bm{\mathrm{s}}|\bm{\mathrm{x}})$. 
The estimation $\hat{p}(\bm{\mathrm{s}}|\bm{\mathrm{x}})$ is defined by 
\begin{equation}
    \label{eq:def_conditional_density_estimation}
    \hat{p}(\bm{\mathrm{s}}|\bm{\mathrm{x}})=\frac{\hat{f}(\bm{\mathrm{s}},\bm{\mathrm{x}})}{\hat{f}_{\bm{\mathrm{x}}}(\bm{\mathrm{x}})},
\end{equation}
where $\hat{f}(\bm{\mathrm{s}},\bm{\mathrm{x}})$ and $\hat{f}_{\bm{\mathrm{x}}}(\bm{\mathrm{x}})$ denote the estimated joint density and marginal density, respectively. 
Kernel density estimation can be used for $\hat{f}(\bm{\mathrm{s}},\bm{\mathrm{x}})$ and $\hat{f}_{\bm{\mathrm{x}}}(\bm{\mathrm{x}})$, denoting by 
\begin{equation}
    \label{def:kernel_joint_density}
    \hat{f}(\bm{\mathrm{s}},\bm{\mathrm{x}})=\frac{1}{N\cdot h^{2n+m}}\sum_{i=1}^N K\left(\frac{\bm{\mathrm{s}}-\bm{\mathrm{s}}^{(i)}}{h}\right)K\left(\frac{\bm{\mathrm{x}}-\bm{\mathrm{x}}^{(i)}}{h}\right),
\end{equation}
\begin{equation}
    \label{def:kernel_marginal_density}
    \hat{f}_{\bm{\mathrm{X}}}(\bm{\mathrm{x}})=\frac{1}{N\cdot h^{n+m}}\sum_{i=1}^N K\left(\frac{\bm{\mathrm{x}}-\bm{\mathrm{x}}^{(i)}}{h}\right).
\end{equation}
We have the following lemma for the estimation error of the conditional density estimation.
\begin{lemma}
    \label{lemma:error_cde_error}
    Suppose that Assumption \ref{assum:density} holds. 
    The kernel function $K(u)$ satisfies:
    \begin{equation}
        \label{eq:condition_kernel}
        \int K(u)\mathsf{d}u=1,\ \sup_{u}|K(u)|<\infty,\ \int u^2K(u)\mathsf{d}u<\infty.
    \end{equation}
    The bandwidth satisfies the standard kernel density estimation condition such that $Nh^{n+m}\rightarrow\infty$ and $h\rightarrow 0$ hold as $N\rightarrow\infty.$ 
    Then, let $\varepsilon_{p}(\bm{\mathrm{s}},\bm{\mathrm{x}}):= |\hat{p}(\bm{\mathrm{s}}|\bm{\mathrm{x}})-p(\bm{\mathrm{s}}|\bm{\mathrm{x}})|$ with probability at least $1-\delta$, we have
    \begin{equation}
        \label{eq:bound_cde}
        \varepsilon_{p}(\bm{\mathrm{s}},\bm{\mathrm{x}})\leq  \left(\frac{C_{\mathsf{j}}+\hat{p}(\bm{\mathrm{s}}|\bm{\mathrm{x}})\cdot C_{\mathsf{m}}}{f_{\mathsf{min}}}\right) \cdot\left(h^{\zeta}+\sqrt{\frac{\log\ 1/\delta}{Nh^{2n+m}}}\right). 
    \end{equation}
    Here, $C_{\mathsf{j}}$ is a positive constant which depends on the kernel, joint density smoothness, and dimensionality $2n+m.$ 
    Besides, $C_{\mathsf{m}}$ is a positive constant which depends on the kernel, marginal density smoothness, and dimensionality $n+m.$
\end{lemma}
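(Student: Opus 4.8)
The plan is to control $\varepsilon_p(\bm{\mathrm{s}},\bm{\mathrm{x}})$ by reducing it to the estimation errors of the two separate kernel density estimators $\hat{f}(\bm{\mathrm{s}},\bm{\mathrm{x}})$ and $\hat{f}_{\bm{\mathrm{X}}}(\bm{\mathrm{x}})$. First I would establish the algebraic identity
\[
\hat{p}(\bm{\mathrm{s}}\mid\bm{\mathrm{x}})-p(\bm{\mathrm{s}}\mid\bm{\mathrm{x}})=\frac{\bigl(\hat{f}(\bm{\mathrm{s}},\bm{\mathrm{x}})-f(\bm{\mathrm{s}},\bm{\mathrm{x}})\bigr)+\hat{p}(\bm{\mathrm{s}}\mid\bm{\mathrm{x}})\bigl(f_{\bm{\mathrm{X}}}(\bm{\mathrm{x}})-\hat{f}_{\bm{\mathrm{X}}}(\bm{\mathrm{x}})\bigr)}{f_{\bm{\mathrm{X}}}(\bm{\mathrm{x}})},
\]
which follows by writing $\hat{p}-p=(\hat{p}\,f_{\bm{\mathrm{X}}}-f)/f_{\bm{\mathrm{X}}}$, inserting $\pm\,\hat{f}$ in the numerator, and using $\hat{f}=\hat{p}\,\hat{f}_{\bm{\mathrm{X}}}$ from \eqref{eq:def_conditional_density_estimation}. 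Taking absolute values, applying the triangle inequality, using $\hat{p}\ge 0$, and invoking the lower bound $f_{\bm{\mathrm{X}}}(\bm{\mathrm{x}})\ge f_{\mathsf{min}}$ from Assumption~\ref{assum:density} yields
\[
\varepsilon_p(\bm{\mathrm{s}},\bm{\mathrm{x}})\le\frac{1}{f_{\mathsf{min}}}\Bigl(\bigl|\hat{f}(\bm{\mathrm{s}},\bm{\mathrm{x}})-f(\bm{\mathrm{s}},\bm{\mathrm{x}})\bigr|+\hat{p}(\bm{\mathrm{s}}\mid\bm{\mathrm{x}})\,\bigl|\hat{f}_{\bm{\mathrm{X}}}(\bm{\mathrm{x}})-f_{\bm{\mathrm{X}}}(\bm{\mathrm{x}})\bigr|\Bigr).
\]

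Next I would bound each kernel density error with a standard uniform-convergence result for KDE (e.g., \citet{Jiang_ICML2017}): under the Hölder-$\zeta$ continuity of the densities and the kernel moment conditions \eqref{eq:condition_kernel}, the bias contributes an $O(h^\zeta)$ term while the stochastic fluctuation contributes an $O\bigl(\sqrt{\log(1/\delta)/(N h^{\mathrm{dim}})}\bigr)$ term, holding uniformly over the domain with probability at least $1-\delta$. Since the joint estimator $\hat{f}$ is defined on the $(2n+m)$-dimensional space of $(\bm{\mathrm{s}}^+,\bm{\mathrm{s}},\bm{\mathrm{a}})$ and is built from the product kernel in \eqref{def:kernel_joint_density} — which is itself an admissible kernel on the product space — it obeys $|\hat{f}-f|\le C_{\mathsf{j}}\bigl(h^\zeta+\sqrt{\log(1/\delta)/(N h^{2n+m})}\bigr)$, with $C_{\mathsf{j}}$ depending on the kernel, joint-density smoothness, and dimension $2n+m$; similarly $|\hat{f}_{\bm{\mathrm{X}}}-f_{\bm{\mathrm{X}}}|\le C_{\mathsf{m}}\bigl(h^\zeta+\sqrt{\log(1/\delta)/(N h^{n+m})}\bigr)$. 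Applying a union bound over these two events — absorbing the resulting factor into the constants, since $\log(2/\delta)\le\log 2+\log(1/\delta)$ — and noting that $h\le 1$ for $N$ large implies $h^{2n+m}\le h^{n+m}$, hence $\sqrt{\log(1/\delta)/(N h^{n+m})}\le\sqrt{\log(1/\delta)/(N h^{2n+m})}$, both errors can be majorized by the single rate $h^\zeta+\sqrt{\log(1/\delta)/(N h^{2n+m})}$. Substituting into the last displayed inequality and collecting the coefficients $C_{\mathsf{j}}$ and $\hat{p}(\bm{\mathrm{s}}\mid\bm{\mathrm{x}})\,C_{\mathsf{m}}$ produces exactly \eqref{eq:bound_cde}.

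I expect the main obstacle to be the rigorous justification of the uniform stochastic-error rate for the two KDEs — specifically, verifying that the product-kernel joint estimator satisfies the regularity hypotheses needed to invoke the cited uniform concentration bound, and correctly tracking how the effective dimension $2n+m$ (rather than $n+m$) enters the rate. The remaining ingredients — the ratio decomposition, the use of $f_{\mathsf{min}}$, the union bound, and the comparison of the two rates under $h\le 1$ — are routine bookkeeping.
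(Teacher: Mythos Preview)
Your proposal is correct and follows essentially the same route as the paper: the same ratio decomposition yielding the two-term bound $\varepsilon_p\le\frac{1}{f_{\mathsf{min}}}\bigl(|\hat f-f|+\hat p\,|\hat f_{\bm{\mathrm{X}}}-f_{\bm{\mathrm{X}}}|\bigr)$, followed by invoking the uniform KDE concentration result of \citet{Jiang_ICML2017} for each estimator separately. If anything, you are slightly more careful than the paper --- you make explicit the union bound over the two high-probability events and the comparison $h^{n+m}\ge h^{2n+m}$ needed to write both rates with the common denominator $Nh^{2n+m}$, whereas the paper's proof simply states both bounds with $h^{2n+m}$ without comment.
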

\begin{proof}
Compute the absolute error by
\begin{align*}
    \hat{p}(\bm{\mathrm{s}}|\bm{\mathrm{x}})-p(\bm{\mathrm{s}}|\bm{\mathrm{x}})&=\left|\frac{\hat{f}(\bm{\mathrm{s}},\bm{\mathrm{x}})}{\hat{f}_{\bm{\mathrm{x}}}(\bm{\mathrm{x}})}-\frac{f(\bm{\mathrm{s}},\bm{\mathrm{x}})}{f_{\bm{\mathrm{x}}}(\bm{\mathrm{x}})}\right| =\left|\frac{\hat{f}(\bm{\mathrm{s}},\bm{\mathrm{x}})}{\hat{f}_{\bm{\mathrm{X}}}(\bm{\mathrm{x}})}\cdot\frac{f_{\bm{\mathrm{X}}}(\bm{\mathrm{x}})-\hat{f}_{\bm{\mathrm{X}}}(\bm{\mathrm{x}})}{f_{\bm{\mathrm{X}}}(\bm{\mathrm{x}})}-\frac{f(\bm{\mathrm{s}},\bm{\mathrm{x}})-\hat{f}(\bm{\mathrm{s}},\bm{\mathrm{x}})}{ f_{\bm{\mathrm{X}}}(\bm{\mathrm{x}})}\right| \\
    &\leq \frac{\left|\hat{f}(\bm{\mathrm{s}},\bm{\mathrm{x}})-f(\bm{\mathrm{s}},\bm{\mathrm{x}})\right|}{f_{\bm{\mathrm{X}}}(\bm{\mathrm{x}})}+\hat{p}(\bm{\mathrm{s}}|\bm{\mathrm{x}})\cdot\frac{\left|\hat{f}_{\bm{\mathrm{X}}}(\bm{\mathrm{x}})-f_{\bm{\mathrm{X}}}(\bm{\mathrm{x}})\right|}{f_{\bm{\mathrm{X}}}(\bm{\mathrm{x}})}
\end{align*}
Then, we have
\begin{align*}
    \varepsilon_{p}(\bm{\mathrm{s}},\bm{\mathrm{x}})&\leq \underbrace{ \frac{\left|\hat{f}(\bm{\mathrm{s}},\bm{\mathrm{x}})-f(\bm{\mathrm{s}},\bm{\mathrm{x}})\right|}{f_{\bm{\mathrm{X}}}(\bm{\mathrm{x}})}}_{\varepsilon_{p,1}(\bm{\mathrm{s}},\bm{\mathrm{x}})}+
    \underbrace{ \hat{p}(\bm{\mathrm{s}}|\bm{\mathrm{x}})\cdot\frac{\left|f_{\bm{\mathrm{X}}}(\bm{\mathrm{x}})-\hat{f}_{\bm{\mathrm{X}}}(\bm{\mathrm{x}})\right|}{f_{\bm{\mathrm{X}}}(\bm{\mathrm{x}})}}_{\varepsilon_{p,2}(\bm{\mathrm{s}},\bm{\mathrm{x}})}.
\end{align*}
Then, According to the sup-norm bound for kernel density estimation given by Theorem 2 in \citep{Jiang_ICML2017}, with probability at least $1-\delta$, we have
\begin{equation}
\label{eq:bound_error_p_1}
    \varepsilon_{p,1}(\bm{\mathrm{s}},\bm{\mathrm{x}})\leq  \frac{1}{f_{\mathsf{min}}}\cdot C_{\mathsf{j}}\cdot \left(h^{\zeta}+\sqrt{\frac{\log\ 1/\delta}{Nh^{2n+m}}}\right).
\end{equation}
\begin{equation}
    \label{eq:bound_error_p_2}
    \varepsilon_{p,2}(\bm{\mathrm{s}},\bm{\mathrm{x}})\leq \hat{p}(\bm{\mathrm{s}}|\bm{\mathrm{x}})\cdot\frac{C_{\mathsf{m}}}{f_{\mathsf{min}}}\cdot\left(h^{\zeta}+\sqrt{\frac{\log\ 1/\delta}{Nh^{2n+m}}}\right).
\end{equation}
By \eqref{eq:bound_error_p_1} and \eqref{eq:bound_error_p_2}, we obtain \eqref{eq:bound_cde}. 
\end{proof}

Based on the above discussions, we give the proof of Theorem \ref{theo:error_bound_value_function} as follows. 
\begin{proof}{(Theorem \ref{theo:error_bound_value_function})}
We first discuss $\pi,$ which ensures that $(\bm{\mathrm{s}},\bm{\mathrm{a}})\sim\rho^{\pi}_{\mathcal{T}}$ stays in $\mathcal{U}_{\mathsf{id}}$ with probability 1. 
Using $\bm{\mathrm{x}}$ for $(\bm{\mathrm{s}},\bm{\mathrm{a}})$, rewrite \eqref{eq:revised_telescoping_lemma} into the following case:
\begin{equation}
    \label{eq:revised_telescoping_lemma_rewrite}
    \left|V^{\pi}_{\hat{\diamond},\widehat{\mathcal{T}}}(\rho_0)-V^{\pi}_{\diamond,\mathcal{T}}(\rho_0)\right|\leq\underbrace{\left|\sum_{j=0}^{\infty}\gamma^{j}\mathop{\mathbb{E}}\limits_{\bm{\mathrm{x}}_j\sim\pi,\widehat{\mathcal{T}}}\left[\hat{\diamond}(\bm{\mathrm{x}}_j)-\diamond(\bm{\mathrm{x}}_j)\right]\right|}_{\varepsilon_{\hat{\diamond}}}+\underbrace{ \left|\sum_{j=0}^{\infty}\gamma^{j+1}\mathop{\mathbb{E}}\limits_{\bm{\mathrm{x}}_j\sim\pi,\widehat{\mathcal{T}}}\left[G^{\pi}_{\widehat{\mathcal{T}}}(\bm{\mathrm{x}}_j)\right]\right|}_{\varepsilon_{\widehat{\mathcal{T}}}}.
\end{equation}
We first discuss $\varepsilon_{\hat{\diamond}}$'s bound based on the GPR-based estimation $\hat{\diamond}.$  
We have
\begin{align*}
    \varepsilon_{\hat{\diamond}}&=\left|\sum_{j=0}^{\infty}\gamma^{j}\mathop{\mathbb{E}}\limits_{\bm{\mathrm{x}}_j\sim\pi,\widehat{\mathcal{T}}}\left[\hat{\diamond}(\bm{\mathrm{x}}_j)-\diamond(\bm{\mathrm{x}}_j)\right]\right| \\
    &=\underbrace{\left|\sum_{j=0}^{H}\gamma^{j}\mathop{\mathbb{E}}\limits_{\bm{\mathrm{x}}_j\sim\pi,\widehat{\mathcal{T}}}\left[\hat{\diamond}(\bm{\mathrm{x}}_j)-\diamond(\bm{\mathrm{x}}_j)\right]\right|}_{\text{In distribution w.p.}(1-\beta-\delta)} +\underbrace{\left|\sum_{j=H+1}^{\infty}\gamma^{j}\mathop{\mathbb{E}}\limits_{\bm{\mathrm{x}}_j\sim\pi,\widehat{\mathcal{T}}}\left[\hat{\diamond}(\bm{\mathrm{x}}_j)-\diamond(\bm{\mathrm{x}}_j)\right]\right|}_{\text{Out of distribution}} \\
    &\leq \left|\sum_{j=0}^{H}\gamma^{j}\mathop{\mathbb{E}}\limits_{\bm{\mathrm{x}}_j\sim\pi,\widehat{\mathcal{T}}}\left[\hat{\diamond}(\bm{\mathrm{x}}_j)-\diamond(\bm{\mathrm{x}}_j)\right]\right| + \sum_{j=H+1}^{\infty} \gamma^j \diamond_{\mathsf{max}} \\
    &\leq \sum_{j=0}^{H}\gamma^{j}\mathop{\mathbb{E}}\limits_{\bm{\mathrm{x}}_j\sim\pi,\widehat{\mathcal{T}}}\left[\left|\hat{\diamond}(\bm{\mathrm{x}}_j)-\diamond(\bm{\mathrm{x}}_j)\right|\right]+\frac{\diamond_{\mathsf{max}}\cdot\gamma^{H+1}}{1-\gamma}.
\end{align*}
We use the Gaussian process regression to approximate the scalar function $\diamond(\bm{\mathrm{x}}).$ 
By Theorem 6.1 of \citep{Wachi_NeurIPS2023}, we further have $\varepsilon_{\hat{\diamond}}\leq \sum_{j=0}^{H}\gamma^{j}\mathop{\mathbb{E}}\limits_{\bm{\mathrm{x}}_j\sim\pi,\widehat{\mathcal{T}}}\left[ \eta_N\cdot\sigma_N(\bm{\mathrm{x}}_j)\right]+\frac{\diamond_{\mathsf{max}}\cdot\gamma^{H+1}}{1-\gamma}\ \text{w.p.}1-\beta-2\delta.$
Here, $\sigma_N(\bm{\mathrm{x}}_j)$ is the posterior standard deviation at point $\bm{\mathrm{x}}_j$ and $\eta_N^{1/2}:=\diamond_{\mathsf{max}}+4\omega\sqrt{\nu_N+1+\log(1/\delta)}$ with $\omega$ as a scaling factor accounting for kernel parameters, $\nu_N$ is the information capacity associated with kernel. 
Since we consider the in-distribution posterior standard deviation $\sigma_N(\bm{\mathrm{x}})$, it is reasonable to assume that $\sigma_N(\bm{\mathrm{x}})$ is bounded in $\mathcal{U}_\mathsf{s},$ $\sigma_N(\bm{\mathrm{x}})\leq \sigma_N^{\mathsf{max}}$. 
Note that $\sigma_N^{\mathsf{max}}$ can be approximately chosen as $\sigma_N^{\mathsf{max}}\approx \max_{\bm{\mathrm{x}}\in\mathcal{U}_N} \sigma_N(\bm{\mathrm{x}}).$ 
Thus, we have 
\begin{equation}
\label{eq:error_diamond}
    \varepsilon_{\hat{\diamond}}\leq 
    \frac{\eta_N\cdot\sigma_N^{\mathsf{max}}}{1-\gamma}+\frac{\diamond_{\mathsf{max}}\cdot\gamma^{H+1}}{1-\gamma}\ \text{w.p.}1-\beta-2\delta.
\end{equation}

We then discuss the bound of $\varepsilon_{\widehat{\mathcal{T}}}$. 
We have
\begin{align}
    \varepsilon_{\widehat{\mathcal{T}}}&= \left|\sum_{j=0}^{\infty}\gamma^{j+1}\mathop{\mathbb{E}}\limits_{\bm{\mathrm{x}}_j\sim\pi,\widehat{\mathcal{T}}}\left[G^{\pi}_{\widehat{\mathcal{T}}}(\bm{\mathrm{x}}_j)\right]\right|\leq\underbrace{\left|\sum_{j=0}^{H}\gamma^{j+1}\mathop{\mathbb{E}}\limits_{\bm{\mathrm{x}}_j\sim\pi,\widehat{\mathcal{T}}}\left[G^{\pi}_{\widehat{\mathcal{T}}}(\bm{\mathrm{x}}_j)\right]\right|}_{\text{In distribution w.p.}(1-\beta-\delta)} +\underbrace{\left|\sum_{j=H+1}^{\infty}\gamma^{j+1}\mathop{\mathbb{E}}\limits_{\bm{\mathrm{x}}_j\sim\pi,\widehat{\mathcal{T}}}\left[G^{\pi}_{\widehat{\mathcal{T}}}(\bm{\mathrm{x}}_j)\right]\right|}_{\text{Out of distribution}} \nonumber \\
    &\leq \left|\sum_{j=0}^{H}\gamma^{j+1}\mathop{\mathbb{E}}\limits_{\bm{\mathrm{x}}_j\sim\pi,\widehat{\mathcal{T}}}\left[G^{\pi}_{\widehat{\mathcal{T}}}(\bm{\mathrm{x}}_j)\right]\right| + \sum_{j=H+1}^{\infty}\gamma^{j+1}\cdot\frac{\diamond_{\mathsf{max}}}{1-\gamma} \nonumber\\
    &=\left|\sum_{j=0}^{H}\gamma^{j+1}\mathop{\mathbb{E}}\limits_{\bm{\mathrm{x}}_j\sim\pi,\widehat{\mathcal{T}}}\left[G^{\pi}_{\widehat{\mathcal{T}}}(\bm{\mathrm{x}}_j)\right]\right| + \frac{\gamma^{H+1}\cdot\diamond_{\mathsf{max}}}{(1-\gamma)^2} \nonumber \\
    &=\left|\sum_{j=0}^{H}\gamma^{j+1}\mathop{\mathbb{E}}\limits_{\bm{\mathrm{x}}_j\sim\pi,\widehat{\mathcal{T}}}\left[\mathop{\mathbb{E}}\limits_{\bm{\mathrm{s}}\sim\widehat{\mathcal{T}}(\bm{\mathrm{x}}_j)}\left[V^\pi_{\diamond,\mathcal{T}}(\bm{\mathrm{s}})\right]-\mathop{\mathbb{E}}\limits_{\bm{\mathrm{s}}\sim\mathcal{T}(\bm{\mathrm{x}}_j)}\left[V^\pi_{\diamond,\mathcal{T}}(\bm{\mathrm{s}})\right]\right]\right| + \frac{\gamma^{H+1}\cdot\diamond_{\mathsf{max}}}{(1-\gamma)^2} \nonumber \\
    &=\left|\sum_{j=0}^{H}\gamma^{j+1}\mathop{\mathbb{E}}\limits_{\bm{\mathrm{x}}_j\sim\pi,\widehat{\mathcal{T}}}\left[\int_{\mathcal{S}} V^\pi_{\diamond,\mathcal{T}}(\bm{\mathrm{s}})\hat{p}(\bm{\mathrm{s}}|\bm{\mathrm{x}}_j)\mathsf{d}\bm{\mathrm{s}}-\int_{\mathcal{S}} V^\pi_{\diamond,\mathcal{T}}(\bm{\mathrm{s}})p(\bm{\mathrm{s}}|\bm{\mathrm{x}}_j)\mathsf{d}\bm{\mathrm{s}}\right]\right| + \frac{\gamma^{H+1}\cdot\diamond_{\mathsf{max}}}{(1-\gamma)^2} \nonumber\\
    &=\left|\sum_{j=0}^{H}\gamma^{j+1}\mathop{\mathbb{E}}\limits_{\bm{\mathrm{x}}_j\sim\pi,\widehat{\mathcal{T}}}\left[\int_{\mathcal{S}} V^\pi_{\diamond,\mathcal{T}}(\bm{\mathrm{s}})\times\left(\hat{p}(\bm{\mathrm{s}}|\bm{\mathrm{x}}_j)-p(\bm{\mathrm{s}}|\bm{\mathrm{x}}_j)\right)\mathsf{d}\bm{\mathrm{s}}\right]\right|+ \frac{\gamma^{H+1}\cdot\diamond_{\mathsf{max}}}{(1-\gamma)^2} \nonumber \\
    &\leq \sum_{j=0}^{H}\gamma^{j+1}\mathop{\mathbb{E}}\limits_{\bm{\mathrm{x}}_j\sim\pi,\widehat{\mathcal{T}}}\left[\int_{\mathcal{S}} \left|V^\pi_{\diamond,\mathcal{T}}(\bm{\mathrm{s}})\right|\times\left(\hat{p}(\bm{\mathrm{s}}|\bm{\mathrm{x}}_j)-p(\bm{\mathrm{s}}|\bm{\mathrm{x}}_j)\right)\mathsf{d}\bm{\mathrm{s}}\right]+ \frac{\gamma^{H+1}\cdot\diamond_{\mathsf{max}}}{(1-\gamma)^2} \nonumber \\
    &\leq  \frac{\diamond_{\mathsf{max}}}{1-\gamma}\cdot\sum_{j=0}^{H}\gamma^{j+1}\mathop{\mathbb{E}}\limits_{\bm{\mathrm{x}}_j\sim\pi,\widehat{\mathcal{T}}}\left[\int_{\mathcal{S}} \varepsilon_{p}(\bm{\mathrm{s}},\bm{\mathrm{x}}_j) \mathsf{d}\bm{\mathrm{s}}\right] + \frac{\gamma^{H+1}\cdot\diamond_{\mathsf{max}}}{(1-\gamma)^2}\ \nonumber \\
    &\quad \quad\text{(Use Lemma \ref{lemma:error_cde_error} to proceed to the next)} \nonumber \\
    &\leq \frac{\diamond_{\mathsf{max}}}{1-\gamma}\cdot\sum_{j=0}^{H}\gamma^{j+1}\mathop{\mathbb{E}}\limits_{\bm{\mathrm{x}}_j\sim\pi,\widehat{\mathcal{T}}}\left[\int_{\mathcal{S}} \left(\frac{C_{\mathsf{j}}+\hat{p}(\bm{\mathrm{s}}|\bm{\mathrm{x}}_j)\cdot C_{\mathsf{m}}}{f_{\mathsf{min}}}\right) \cdot\left(h^{\zeta}+\sqrt{\frac{\log\ 1/\delta}{Nh^{2n+m}}}\right)\mathsf{d}\bm{\mathrm{s}}\right] +\nonumber \\
    &\quad\quad\frac{\gamma^{H+1}\cdot\diamond_{\mathsf{max}}}{(1-\gamma)^2}\ \text{w.p. }1-\beta-2\delta \nonumber \\
    &\leq \frac{\diamond_{\mathsf{max}}}{1-\gamma}\cdot\frac{\gamma (1 - \gamma^{H+1})}{1 - \gamma}\cdot \left(\frac{C_{\mathsf{j}}+C_{\mathsf{m}}}{f_{\mathsf{min}}}\right) \cdot\left(h^{\zeta}+\sqrt{\frac{\log\ 1/\delta}{Nh^{2n+m}}}\right)+ \nonumber \\
    &\quad\quad\frac{\gamma^{H+1}\cdot\diamond_{\mathsf{max}}}{(1-\gamma)^2}\ \text{w.p. }1-\beta-2\delta \nonumber \\
    &= \frac{\diamond_{\mathsf{max}}\cdot(\gamma-\gamma^{H+2})}{(1-\gamma)^2}\cdot \left(\frac{C_{\mathsf{j}}+C_{\mathsf{m}}}{f_{\mathsf{min}}}\right) \cdot\left(h^{\zeta}+\sqrt{\frac{\log\ 1/\delta}{Nh^{2n+m}}}\right)+ \frac{\gamma^{H+1}\cdot\diamond_{\mathsf{max}}}{(1-\gamma)^2}\ \text{w.p. }1-\beta-2\delta. \label{eq:error_transition_dynamics}
\end{align}
Combine \eqref{eq:error_diamond} and \eqref{eq:error_transition_dynamics}, we have that, with probability $1-2\beta-4\delta$, the following holds
\begin{align}
    V^{\pi}_{\hat{\diamond},\widehat{\mathcal{T}}}(\rho_0)-V^{\pi}_{\diamond,\mathcal{T}}(\rho_0)&\leq \frac{\eta_N\cdot\sigma_N^{\mathsf{max}}}{1-\gamma}+ \frac{\diamond_{\mathsf{max}}\cdot(\gamma-\gamma^{H+2})}{(1-\gamma)^2}\cdot C_{\mathsf{den}} \cdot\left(h^{\zeta}+\sqrt{\frac{\log\ 1/\delta}{Nh^{2n+m}}}\right)+\nonumber\\
    &\quad\quad\frac{\gamma^{H+1}\cdot(2-\gamma)\cdot\diamond_{\mathsf{max}}}{(1-\gamma)^2},  \label{eq:error_expected_value_function}
\end{align}
where $C_{\mathsf{den}}:=(C_{\mathsf{j}}+C_{\mathsf{m}})/f_{\mathsf{min}}.$
 
\end{proof}

\section{Experimental Implementation Details}
\label{appendix:experimental_implementation_details}
\subsection{Dataset Selection}
\label{appendix:details_dataset}
The MIMIC-III (Medical Information Mart for Intensive Care) database serves as a comprehensive repository containing detailed clinical records from over $40{,}000$ intensive care admissions~\citep{Alistair}. This extensive dataset our methods maintain clinical relevance and algorithmic robustness across diverse patient populations and treatment scenarios. Its widespread adoption in healthcare machine learning research, combined with its real-world clinical variability and detailed documentation, establishes MIMIC-III as an appropriate benchmark for treatment policy evaluation. We implemented a five-fold cross-validation approach, randomly dividing the data into training (60\%), validation (20\%), and test (20\%) partitions for each seed. 

% Following established protocols~\citet{Komorowski}, we identified a cohort of $18{,}923$ ICU stays with sepsis diagnosis, organizing the data into 4-hour windows with $44$ variables monitored per window. 

\subsection{Sepsis Treatment Implementation for RL}
\label{append:state_selection}
The MIMIC-III Sepsis dataset provides 44 variables, comprising both dynamic physiological measurements and static patient attributes, which form the foundation for our state space construction. Following the protocols as mentioned in Section~\ref{sec:sepsis_treatment_formaluation}, we obtained a cohort of septic patients by identifying those who developed sepsis at some point during their ICU stay and including all observations from 24 hours before until 48 hours after the presumed onset of sepsis. The protocols organized data into 4-hour windows, creating a sequence of 20 time windows in total. Table~\ref{tab:state_selection} below summarizes the selected features, treatment actions, and reward signal used in our study.

\begin{table}[h]
\centering
\caption{Summary of Feature Space, Actions, and Reward}
\renewcommand{\arraystretch}{1.3}
\begin{tabular}{|p{2.7cm}|p{4.8cm}|p{5.2cm}|}
\hline
\textbf{Category} & \textbf{Variables} & \textbf{Description} \\
\hline
\textbf{Dynamic Features} & 
Mechanical ventilation, GCS, FiO\textsubscript{2}, PaO\textsubscript{2}, PaO\textsubscript{2}/FiO\textsubscript{2}, Total bilirubin, Urine output (4h), Cumulative urine output, Cumulative fluid input, SpO\textsubscript{2} & 
Time-varying physiological indicators with moderate or strong correlation to SOFA, capturing real-time sepsis severity. \\
\hline
\textbf{Static Features} & 
Age, Gender, Readmission status & 
Fixed patient characteristics offering essential context for modeling heterogeneity and enabling personalized treatment. \\
\hline
\textbf{Actions} & 
Intravenous fluids (per 4h), Max vasopressor dose (per 4h) & 
Core interventions for sepsis management aimed at stabilizing blood pressure and perfusion. \\
\hline
\textbf{Reward} & 
SOFA score & 
Quantifies the extent of organ dysfunction and guides the policy toward clinically meaningful improvement. \\
\hline
\end{tabular}
\label{tab:state_selection}
\end{table}

\subsubsection{Reward}
The Sequential Organ Failure Assessment (SOFA) score was selected as the reward function for its clinical advantages over mortality as a terminal reward. SOFA provides instantaneous assessment of organ dysfunction across six physiological systems: respiratory, coagulation, hepatic, cardiovascular, central nervous system, and renal. SOFA scores range from 0-24, with each subsystem contributing 0-4 points. This granularity supports more nuanced policy optimization compared to binary mortality outcomes.

\subsubsection{States}
For treatment optimization, we selected state variables through a clinically grounded and data-driven approach. Our model incorporates two treatment actions: intravenous fluid volume administered every 4 hours and maximum vasopressor dosage within the same interval. These interventions were chosen for their prevalence in early sepsis management protocols, where they restore blood pressure and tissue perfusion~\citep{Nambiar}. The state representation was designed to include features highly correlated with SOFA, capturing critical aspects of patient health relevant to clinical outcomes.

\textbf{Dynamic feature selection}
We prioritized dynamic variables that change during treatment by calculating Pearson correlation coefficients between each variable and the SOFA score. Both synchronous (lag = 0) and asynchronous correlations (lags of 1, 2, and 3 time steps) were examined to account for delayed physiological responses. Features with absolute correlation exceeding $0.2$ in at least one lag setting were retained. This threshold balances inclusivity and relevance—stringent enough to exclude weak associations while preserving moderately meaningful relationships to patient severity. 

\textbf{Inclusion of static features}
To model patient heterogeneity and personalize treatment decisions, we incorporated static patient attributes that remain constant during ICU stays. Although limited to age, gender, and readmission status in this dataset, these features provide essential clinical context: age represents a known risk factor for sepsis severity, gender may influence physiological responses, and readmission could indicate chronic conditions or recent complications. Their inclusion enables policy generalization across diverse patient populations.

\textbf{Final state space}
The final state representation comprises 13 features (10 dynamic and 3 static), creating a compact yet expressive state space that captures key clinical indicators while maintaining computational efficiency. Both state and action spaces are continuous, supporting fine-grained policy learning and clinical interpretability while remaining feasible for real-world implementation.

\textbf{Explicit Clinical Safety Constraints}
We justify that our safety oxygen saturation (SpO2) maintained at or above 92\% prevents hypoxemia and ensures adequate tissue oxygenation. Urine output of at least $0.5$ mL/kg/hour preserves sufficient renal perfusion and detects early signs of kidney injury. We selected these constraints to monitor distinct yet vulnerable organ systems in sepsis while providing continuous measurement capability in ICU settings, ensuring both clinical interpretability and practical implementation.

\subsection{Practical approximation of PSoS classifier as guardian}
\label{appendix:practical_approximation_PSoS_detector}
Solving Problem~\ref{eq:prob_PSoS_guardian_learning} becomes computationally complex when the dataset is large (e.g., exceeding fifty thousand state-action pairs). 
To enable scalable implementation, we propose a kernel density-based approximation for the guardian set. 
Let $\tilde{f}_{\mathsf{pa}}(\bm{\mathrm{x}}, \mathcal{X}_N)$ be a kernel density estimate built from the dataset $\mathcal{X}_N$ of state-action pairs. 
For any given density threshold $f_{\mathsf{ths}}$, define the corresponding empirical outlier probability by
$p_{\mathsf{out}}(f_{\mathsf{ths}}) := N_{\mathsf{out}}(f_{\mathsf{ths}})/N,$
where $N_{\mathsf{out}}(f_{\mathsf{ths}})$ is the number of samples in $\mathcal{X}_N$ whose estimated density is below $f_{\mathsf{ths}}$.
To find a threshold corresponding to a given confidence level $\alpha$, we perform binary search over $f_{\mathsf{ths}}$:
- Initialize $f_{\mathsf{ths}}^{\mathsf{min}}$ and $f_{\mathsf{ths}}^{\mathsf{max}}$ such that $p_{\mathsf{out}}(f_{\mathsf{ths}}^{\mathsf{min}}) < \alpha < p_{\mathsf{out}}(f_{\mathsf{ths}}^{\mathsf{max}})$.
- Iteratively update the midpoint $f_{\mathsf{ths}}^{\mathsf{mid}} := (f_{\mathsf{ths}}^{\mathsf{min}} + f_{\mathsf{ths}}^{\mathsf{max}})/2$ and evaluate $p_{\mathsf{out}}(f_{\mathsf{ths}}^{\mathsf{mid}})$.
- If $p_{\mathsf{out}}(f_{\mathsf{ths}}^{\mathsf{mid}}) > \alpha$, update $f_{\mathsf{ths}}^{\mathsf{max}} := f_{\mathsf{ths}}^{\mathsf{mid}}$; otherwise, set $f_{\mathsf{ths}}^{\mathsf{min}} := f_{\mathsf{ths}}^{\mathsf{mid}}$. 
After a fixed number of iterations, the binary search converges to a threshold $f_{\mathsf{ths}}^{\alpha}$ such that $p_{\mathsf{out}}(f_{\mathsf{ths}}^{\alpha}) \approx \alpha$.
We then define the approximate guardian: a state-action pair $\bm{\mathrm{x}}$ is considered inside the guardian if $\tilde{f}_{\mathsf{pa}}(\bm{\mathrm{x}}, \mathcal{X}_N) > f_{\mathsf{ths}}^{\alpha}$, and outside otherwise.

\subsection{Details of Baseline and Proposed Methods}
\label{appendix:baseline_proposed}

We explain how the guardian is applied in $\mathsf{GCQL}$. 
$\mathsf{CQL}$ trains the Q-function using an offline dataset $\{(\bm{\mathrm{s}}, \bm{\mathrm{a}}, r, \bm{\mathrm{s}}^+)\}$. 
During the Bellman backup step in $\mathsf{CQL}$, the target is computed as $y = r + \gamma \, \mathbb{E}_{\bm{\mathrm{a}}^+ \sim \pi(\cdot \mid \bm{\mathrm{s}}^+)}\left[Q_{\mathsf{target}}(\bm{\mathrm{s}}^+, \bm{\mathrm{a}}^+)\right],$
where the expectation is approximated by sampling. 
Note that although $\bm{\mathrm{s}}^+$ lies within the dataset, the sampled action $\bm{\mathrm{a}}^+$ may result in a state-action pair $(\bm{\mathrm{s}}^+, \bm{\mathrm{a}}^+)$ that falls outside the guardian set. 
In $\mathsf{GCQL}$, if $(\bm{\mathrm{s}}^+, \bm{\mathrm{a}}^+)$ is not within the guardian, we replace $Q_{\mathsf{target}}(\bm{\mathrm{s}}^+, \bm{\mathrm{a}}^+)$ with a large negative penalty value.
% {\color{red}(RUNZE should check whether this part is correct or not. Besides, we should discuss whether we put the description here or move it to the appendix.)}
For $\mathsf{MB\text{-}TRPO}$, $\mathsf{GMB\text{-}TRPO}$, $\mathsf{MB\text{-}CPO}$, and $\mathsf{GMB\text{-}CPO}$, we use the training data to fit a k-nearest neighbor (kNN) model of the transition dynamics. 
These online algorithms are then trained by interacting with the environment simulated using the kNN-based transition model. 
The reward (i.e., the $\mathsf{SOFA}$ score) and the cost (i.e., $\mathsf{SpO}_2$) are computed directly from the estimated state using predefined rules, without requiring additional model estimation. 
In $\mathsf{GMB\text{-}TRPO}$, the guardian is incorporated by modifying the reward function: if a state-action pair falls outside the guardian set, the reward is penalized by assigning a large negative value.

\subsection{Application of $k$-NN model in Model-based Methods and Policy Evaluation}
\label{appendix:application_kNN_model}

To comprehensively evaluate our learned policies while maintaining safety, we developed a k-nearest neighbor (kNN) based evaluation framework that serves as a bridge between offline learning and real-world deployment. The kNN model learns transition dynamics from the complete dataset of historical patient trajectories, capturing the complex relationships between medical states, treatment actions, and subsequent patient outcomes.
In our experimental design, this kNN approach plays two crucial roles. First, it enables safe off-policy evaluation by simulating patient trajectories without actual patient interaction. When evaluating a learned policy, we start with real patient states from our test set and let the policy choose treatments. The kNN model then predicts the most likely next state by finding similar historical cases in the dataset. This process continues, creating synthetic patient trajectories that mirror realistic clinical progressions while keeping actual patients safe from experimental policies.
Second, for our model-based reinforcement learning algorithms like $\mathsf{GMB\text{-}TRPO}$ and $\mathsf{GMB\text{-}CPO}$, the kNN model transforms into an interactive environment simulator. 
As the agent learns, it takes actions in this simulated world, with the kNN model providing plausible next states based on historical patterns. 
This creates a realistic training environment that remains anchored to observed clinical behavior, preventing the policy from exploring dangerously unfamiliar territories.

% \subsection{Out-of-Distribution (OOD) evaluation}
% We evaluate ODD behavior by comparing state distributions generated by learned policies against the standard of care. Starting from test set initial states, we perform offline rollouts using the trained policies, with state transitions simulated by our kNN environment model. This produces 20-step trajectories that reveal the regions of state space explored by each policy.

% {\color{red}
% \textbf{
% I do not think the description of OOD evaluation is complete. 
% Where is the definition? How do you calculate it?
% }
% }

\subsection{Evaluation Metrics}
\label{appendix:metrics}

\paragraph{Model Concordance Rate (MCR).}
The Model Concordance Rate measures the proportion of instances where the model’s recommended action matches the clinician’s action in the offline dataset.
Formally, the MCR is defined as:
\[
\text{MCR} =
\frac{ \sum_{i,t} \mathbb{I}\left\{ \pi_{\text{SoC}}(\bm{\mathrm{s}}_{i,t}) = \pi_{\text{RL}}(\bm{\mathrm{s}}_{i,t}) \right\} }
{ \sum_{i} T_i },
\]
where \( \pi_{\text{SoC}}(\bm{\mathrm{s}}_{i,t}) \) and \( \pi_{\text{RL}}(\bm{\mathrm{s}}_{i,t}) \) denote the actions taken by the standard-of-care (SoC) and the learned policy at state \( \bm{\mathrm{s}}_{i,t} \), respectively, and \( T_i \) is the number of timesteps for patient \( i \).
For continuous action spaces, a match is determined if the Euclidean distance between the two actions is less than a pre-specified threshold \( \epsilon \), that is,
\[
\| \pi_{\text{SoC}}(\bm{\mathrm{s}}_{i,t}) - \pi_{\text{RL}}(\bm{\mathrm{s}}_{i,t}) \|_2 < \epsilon.
\]

\paragraph{Appropriate Intensification Rate (AIR).}
The Appropriate Intensification Rate evaluates whether the model appropriately escalates treatment in response to physiological deterioration.
We define the Urine Output Rate (UOR) as the volume of urine output normalized by patient weight per hour (mL/kg/hr).
A need for intensification arises when either the oxygen saturation (\( \text{SpO}_2 \)) or the UOR falls below a clinically significant threshold.
Formally, AIR is defined as:
\[
\text{AIR} =
\frac{ \sum_{i,t} \mathbb{I} \left\{ \left( \text{SpO}_2(i,t) < \tau_{\text{SpO}_2} \lor \text{UOR}(i,t) < \tau_{\text{UOR}} \right) \land \text{Intensified}(i,t) \right\} }
{ \sum_{i,t} \mathbb{I} \left\{ \text{SpO}_2(i,t) < \tau_{\text{SpO}_2} \lor \text{UOR}(i,t) < \tau_{\text{UOR}} \right\} },
\]
where \( \tau_{\text{SpO}_2} \) is set to 92\%, \( \tau_{\text{UOR}} \) is set to 0.5 mL/kg/hr, and \( \text{Intensified}(i,t) \) is an indicator function equal to 1 if the model recommends an increased treatment intensity at time \( t \).

\paragraph{Mortality Estimate (ME).}
The Mortality Estimate measures the likelihood of patient death under the model’s policy by simulating patient trajectories using a learned transition model. Starting from an initial state, actions are selected according to the policy, and next states are predicted by the transition model. The simulation continues until either the maximum trajectory length is reached or a terminal "dead" state is encountered. The ME is defined as:
\[
\text{ME} =
\frac{1}{N}
\sum_{i=1}^N
\mathbb{I}\left\{ \text{Diet}(i) \right\},
\]
where \( N \) is the number of simulated trajectories, and \( \text{Diet}(i) \) equals 1 if patient \( i \) equals 1 if patient enters a dead state before reaching the end of the simulation horizon.

\paragraph{Action Change Penalty (ACP).}
The Action Change Penalty quantifies the abruptness of the model’s recommended actions across consecutive time steps within a trajectory.
Formally, ACP is defined as:
\[
\text{ACP} =
\frac{
\sum_{i} \sum_{t=1}^{T_i-1}
\left\| a_{i,t+1} - a_{i,t} \right\|_2
}{
\sum_i (T_i - 1)
},
\]
where \( a_{i,t} \) denotes the action taken at time \( t \) for patient \( i \), and \( \|\cdot\|_2 \) denotes the Euclidean norm.
Lower ACP values indicate smoother and more consistent treatment recommendations over time.

\section{Detailed Validation Results and Discussions}
\label{appendix:detailed_validation_results}

\subsection{Physiological State Distribution Analysis}
This comprehensive analysis examines the temporal evolution of physiological states across 20-step treatment trajectories, extending beyond the safety constraints demonstrated in Figure~\ref{fig:key_state_space_safety} (SpO$_{2}$ and urine output) to encompass the broader spectrum of clinical variables. Through systematic evaluation, we reveal how different reinforcement learning implementations influence patient physiology throughout the treatment continuum.

\subsubsection{Temporal Evolution of Clinical Variables}
$\mathsf{MB\text{-}TRPO}$ demonstrates progressive divergence from standard care protocols, manifesting physiological deterioration across multiple organ systems illustrated in Figure~\ref{fig:state_space_model_based}. Mechanical ventilation patterns exhibit marked volatility, while Glasgow Coma Scale trajectories deviate substantially from clinical norms. The PaO$_{2}$/FiO$_{2}$ ratio reveals concerning instability that intensifies temporally, suggesting compromised respiratory efficiency. This systemic divergence indicates that unrestricted exploration permits physiologically implausible treatment strategies, compounding adverse effects across interconnected organ systems.

$\mathsf{MB\text{-}CPO}$, despite explicit constraints limited to SpO$_{2}$ and urine output, achieves remarkable stabilization of unconstrained variables. This effect extends across multiple physiological domains: hepatic function markers demonstrate reduced variability, respiratory parameters beyond SpO$_{2}$ exhibit enhanced stability, and cumulative fluid balance follows more physiological trajectories. The mechanism reflects the interconnected nature of organ systems in sepsis pathophysiology, where preserved oxygenation prevents cascading organ dysfunction.

$\mathsf{GMB\text{-}TRPO}$ undergoes fundamental transformation, producing state distributions closely approximating standard clinical practice. This metamorphosis manifests across all monitored variables, with pronounced stabilization evident in respiratory mechanics, neurological status indicators, and fluid homeostasis parameters. The synergistic combination of explicit constraints and guardian mechanisms yields the optimal configuration, demonstrating unprecedented alignment with standard care patterns while maintaining physiological parameters within clinically appropriate ranges.

\begin{figure}[t]
    \centering    
    \includegraphics[width=0.95\textwidth]{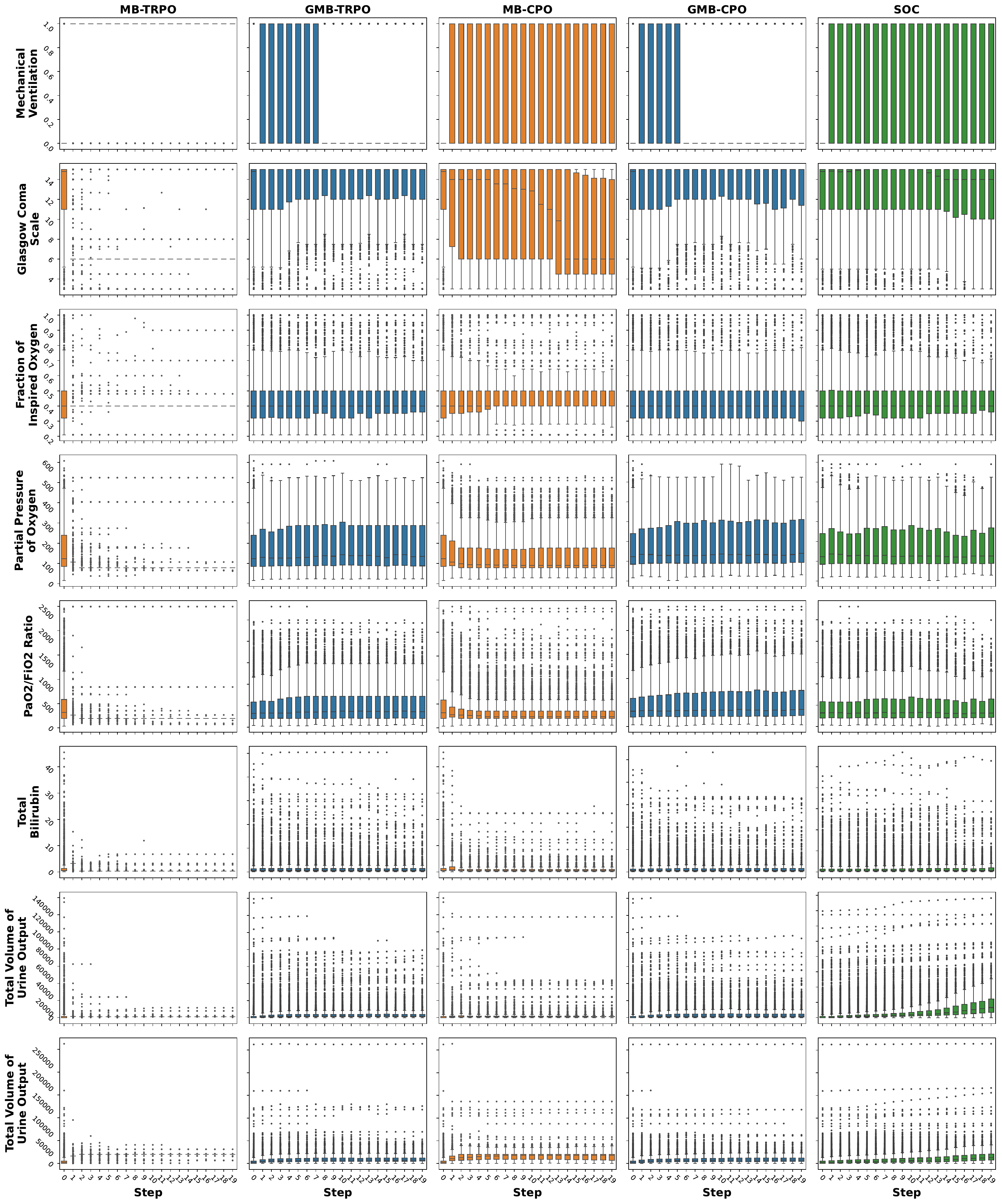}
    \caption{Physiological state progression under model-based reinforcement learning methods ($\mathsf{MB\text{-}TRPO}$, $\mathsf{GMB\text{-}TRPO}$, $\mathsf{MB\text{-}CPO}$, and $\mathsf{GMB\text{-}CPO)}$.  Each step is represented by a box plot, where each box shows the interquartile range (25th-75th percentiles) with the horizontal line indicating the median. Whiskers extend to 1.5$\times$IQR, and black dots represent outliers - individual measurements falling outside this range.}    \label{fig:state_space_model_based}
\end{figure}

As visualized in Figure~\ref{fig:state_space_model_free}, $\mathsf{MB\text{-}CQL}$ demonstrates inherent safety properties, maintaining closer alignment with standard care practices. This alignment extends beyond action selection—where CQL achieves a MCR of 0.789 Table~\ref{table:MCR_AIR_clinical_results}-to encompass state-space dynamics shown in Figure~\ref{fig:OOD_results_0}. The algorithm's conservative value function regularization naturally constrains exploration to clinically validated regions, producing physiological trajectories significantly more stable than those generated by baseline model-based approaches, such as $\mathsf{MB\text{-}TRPO}$ and $\mathsf{MB\text{-}CPO}$.  Guardian augmentation builds upon this foundation, achieving enhanced stability particularly in cumulative urine output patterns, where the combined approach maintains tighter alignment with clinical practice.

\begin{figure}[t]
    \centering    
    \includegraphics[width=0.95\textwidth]{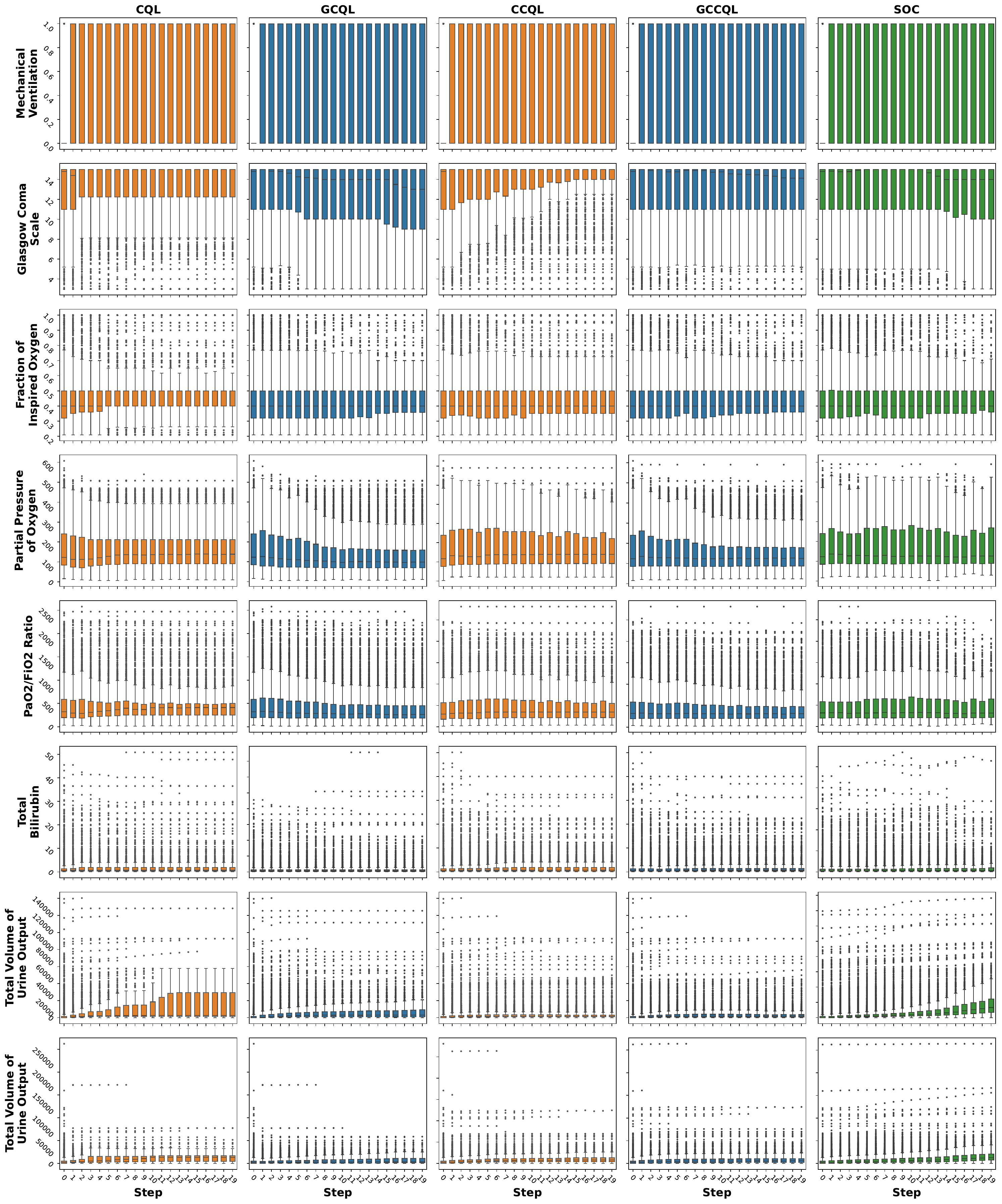}
    \caption{Physiological state progression under model-free reinforcement learning methods ($\mathsf{CQL}$, $\mathsf{GCQL}$, $\mathsf{CCQL}$, and $\mathsf{GCCQL}$). Each step is represented by a box plot, where each box shows the interquartile range (25th-75th percentiles) with the horizontal line indicating the median. Whiskers extend to 1.5$\times$IQR, and black dots represent outliers - individual measurements falling outside this range.} \label{fig:state_space_model_free}
\end{figure}

\subsubsection{Clinical Significance Discussion}
\textbf{Physiological system interconnectivity.} The analysis substantiates established clinical principles regarding organ system interdependence. Maintaining critical parameters within therapeutic ranges generates beneficial cascade effects throughout multiple organ systems, corroborating the therapeutic strategy of prioritizing hemodynamic stability and respiratory function as fundamental interventions in sepsis management.

\textbf{Complementary safety architectures.} Comparative evaluation reveals synergistic benefits between constraint-based and guardian-based protective strategies. Explicit constraints provide robust safeguards for designated variables, while guardian mechanisms furnish comprehensive trajectory protection, mitigating exploration of unsafe state combinations beyond the scope of limited constraint sets. The superior performance of $\mathsf{GMB-CPO}$ demonstrates that clinical implementation should incorporate both protective modalities.

\textbf{Temporal stability considerations.} Given sepsis pathophysiology's progressive nature, treatment protocols must maintain stability across extended periods. Guardian-enhanced methodologies exhibit superior temporal resilience, particularly crucial during the initial 20-hour therapeutic window. This consistency translates to reduced risk of abrupt physiological deterioration – a cardinal concern in intensive care settings.

\textbf{Clinical integration perspectives.} Alignment between guardian-enhanced policies and established care patterns facilitates integration within existing clinical workflows. Reduced variability in physiological trajectories enhances predictability, essential for clinical acceptance and real-time decision support deployment. The demonstrated capacity to identify beneficial deviations from standard protocols while maintaining safety parameters suggests potential for discovering innovative treatment approaches within established safety boundaries.

\subsection{Consistency Validation}
\subsubsection{Comparison of OOD state avoidance with different seeds}

\begin{figure}[!ht]
\centering
% --- First row
\begin{subfigure}[b]{0.248\textwidth}
    \includegraphics[width=\textwidth]{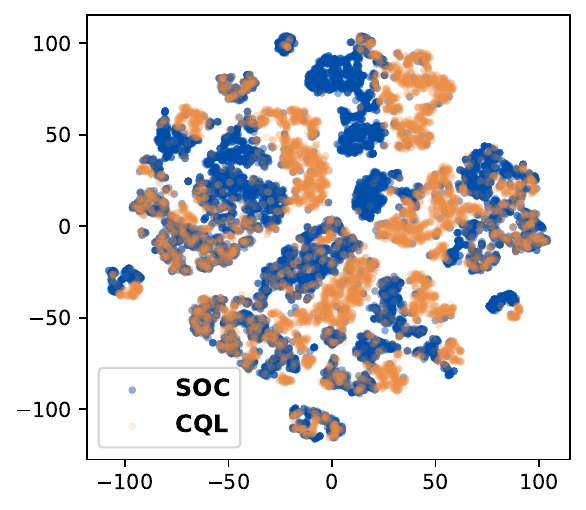}
    \caption{CQL}
    \label{fig:ood_cql_1}
\end{subfigure}%
\begin{subfigure}[b]{0.248\textwidth}
    \includegraphics[width=\textwidth]{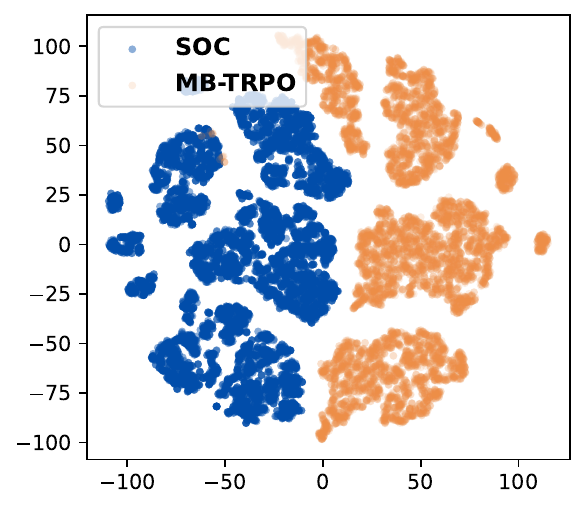}
    \caption{MB-TRPO}
    \label{fig:ood_trop_1}
\end{subfigure}%
\begin{subfigure}[b]{0.248\textwidth}
    \includegraphics[width=\textwidth]{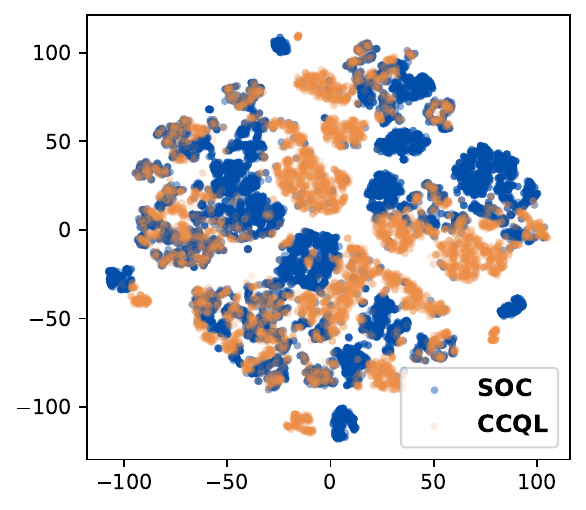}
    \caption{CCQL}
    \label{fig:ood_cql_ts_1}
\end{subfigure}%
\begin{subfigure}[b]{0.248\textwidth}
    \includegraphics[width=\textwidth]{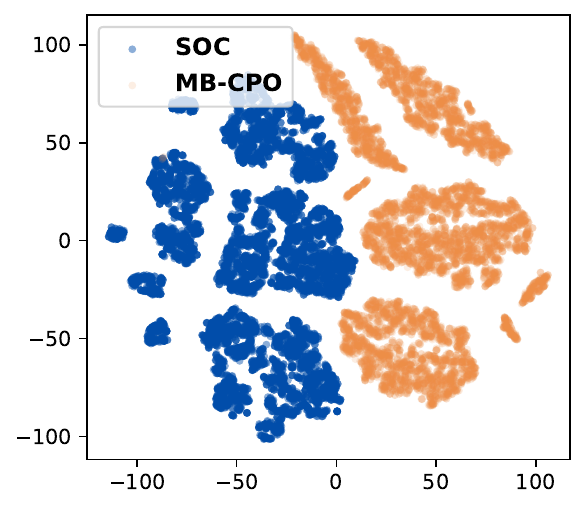}
    \caption{MB-CPO}
    \label{fig:ood_cpo_1}
\end{subfigure}
\vspace{0.05em} % Optional vertical space between rows

% --- Second row
\begin{subfigure}[b]{0.248\textwidth}
    \includegraphics[width=\textwidth]{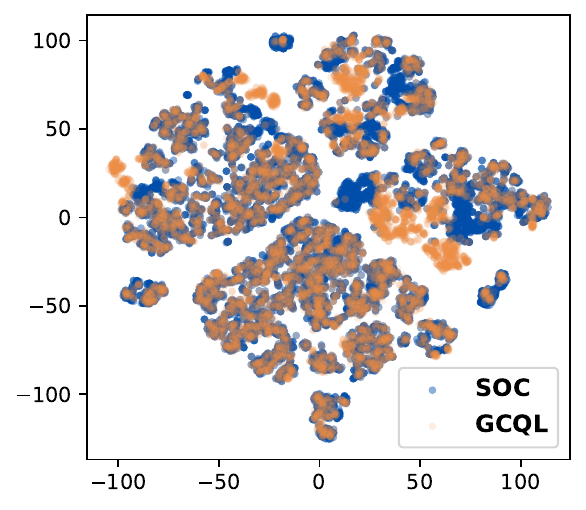}
    \caption{GCQL}
    \label{fig:ood_cql_guard_1}
\end{subfigure}%
\begin{subfigure}[b]{0.248\textwidth}
    \includegraphics[width=\textwidth]{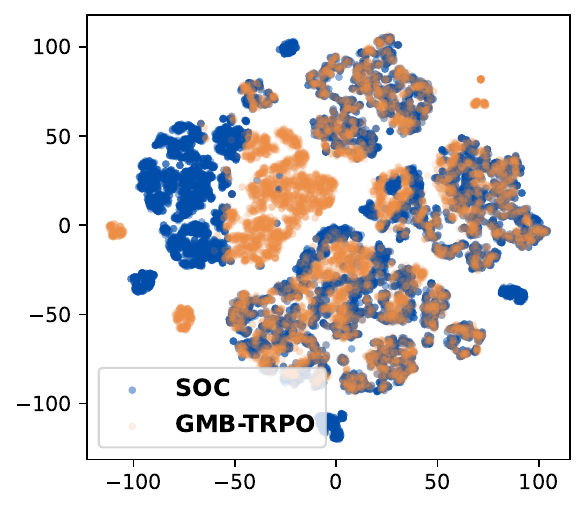}
    \caption{GMB-TRPO}
    \label{fig:ood_trpo_guard_1}
\end{subfigure}%
\begin{subfigure}[b]{0.248\textwidth}
    \includegraphics[width=\textwidth]{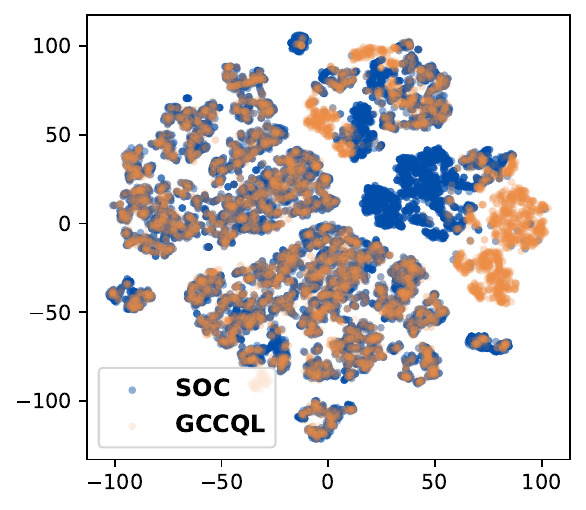}
    \caption{GCCQL}
    \label{fig:ood_cql_guard_ts_1}
\end{subfigure}%
\begin{subfigure}[b]{0.248\textwidth}
    \includegraphics[width=\textwidth]{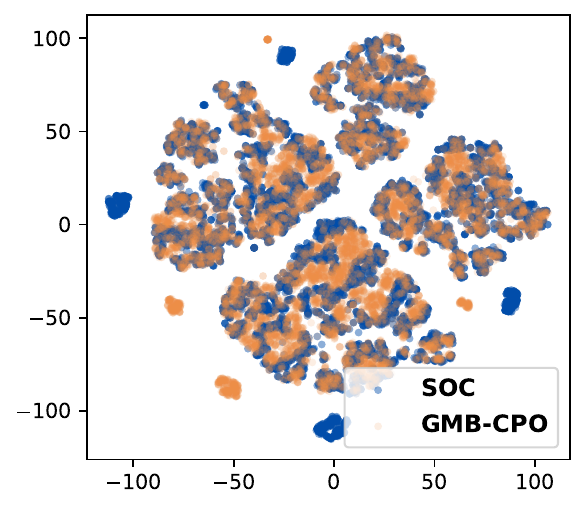}
    \caption{GMB-CPO}
    \label{fig:ood_cpo_guard_1}
\end{subfigure}
\caption{Results on state distributions generated using the second seed. The patterns observed here are consistent with those shown in Figure~\ref{fig:OOD_results_0}.}
\label{fig:OOD_results_1}
\end{figure}

\begin{figure}[!ht]
\centering
% --- First row
\begin{subfigure}[b]{0.248\textwidth}
    \includegraphics[width=\textwidth]{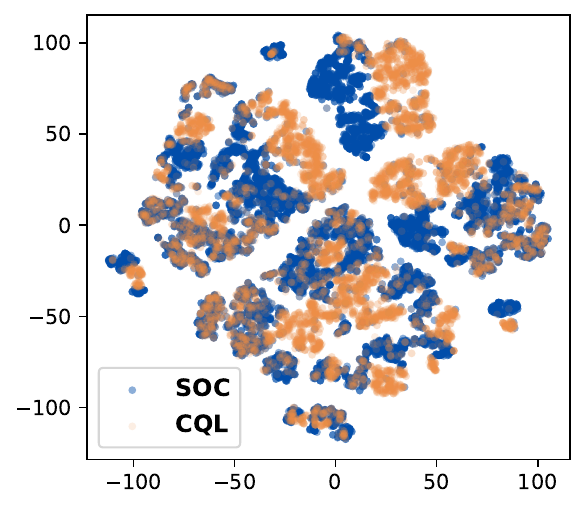}
    \caption{CQL}
    \label{fig:ood_cql_2}
\end{subfigure}%
\begin{subfigure}[b]{0.248\textwidth}
    \includegraphics[width=\textwidth]{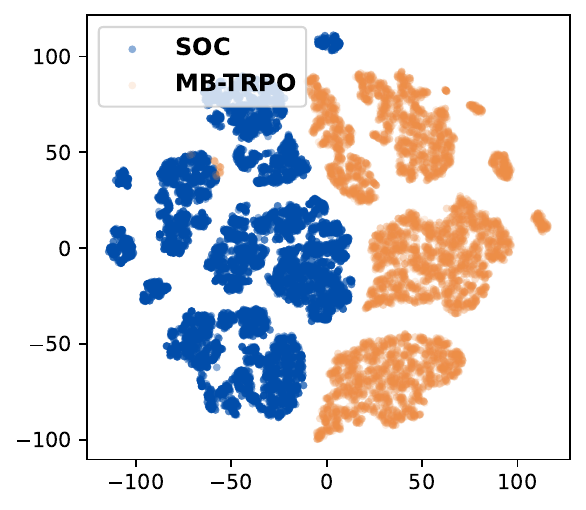}
    \caption{MB-TRPO}
    \label{fig:ood_trop_2}
\end{subfigure}%
\begin{subfigure}[b]{0.248\textwidth}
    \includegraphics[width=\textwidth]{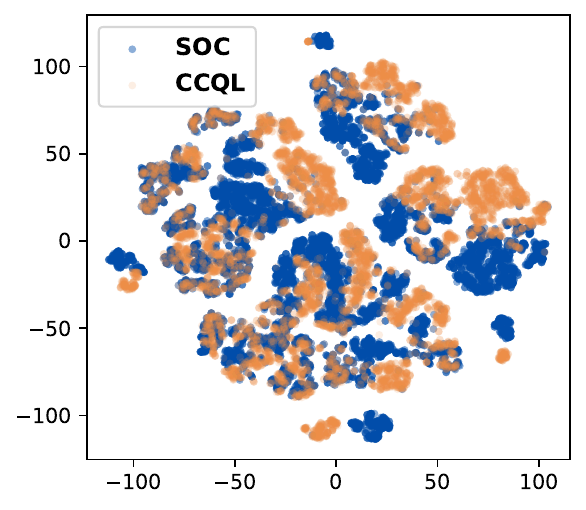}
    \caption{CCQL}
    \label{fig:ood_cql_ts_2}
\end{subfigure}%
\begin{subfigure}[b]{0.248\textwidth}
    \includegraphics[width=\textwidth]{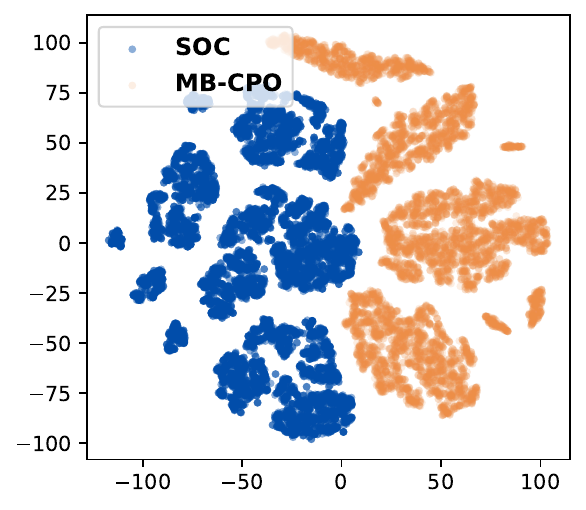}
    \caption{MB-CPO}
    \label{fig:ood_cpo_2}
\end{subfigure}
\vspace{0.05em} % Optional vertical space between rows

% --- Second row
\begin{subfigure}[b]{0.248\textwidth}
    \includegraphics[width=\textwidth]{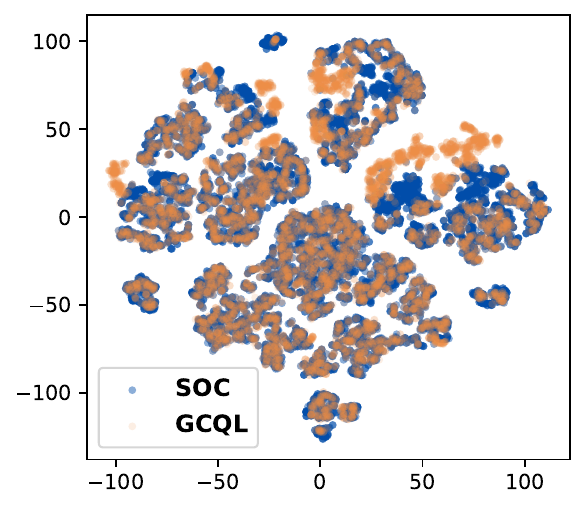}
    \caption{GCQL}
    \label{fig:ood_cql_guard_2}
\end{subfigure}%
\begin{subfigure}[b]{0.248\textwidth}
    \includegraphics[width=\textwidth]{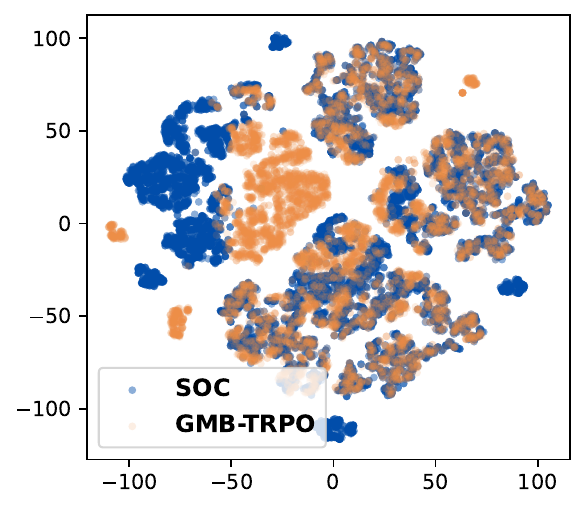}
    \caption{GMB-TRPO}
    \label{fig:ood_trpo_guard_2}
\end{subfigure}%
\begin{subfigure}[b]{0.248\textwidth}
    \includegraphics[width=\textwidth]{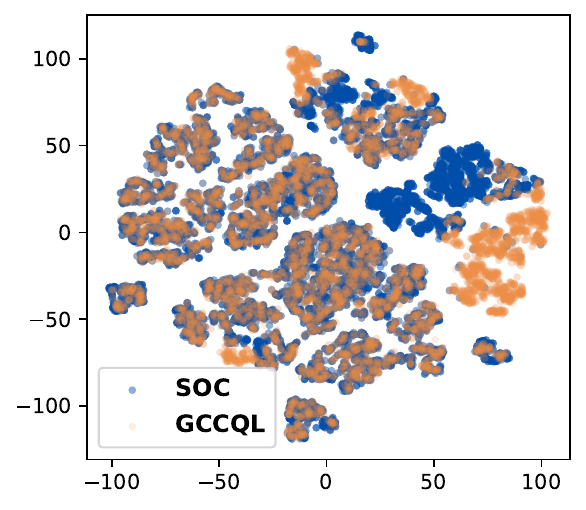}
    \caption{GCCQL}
    \label{fig:ood_cql_guard_ts_2}
\end{subfigure}%
\begin{subfigure}[b]{0.248\textwidth}
    \includegraphics[width=\textwidth]{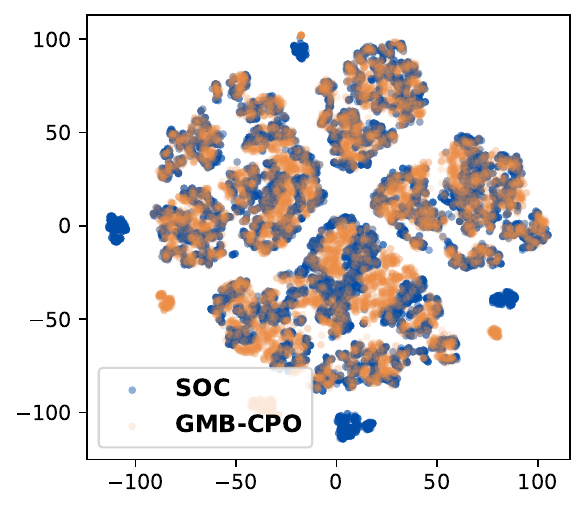}
    \caption{GMB-CPO}
    \label{fig:ood_cpo_guard_2}
\end{subfigure}
\caption{Results on state distributions generated using the third seed. The patterns observed here are consistent with those shown in Figure~\ref{fig:OOD_results_0}.}
\label{fig:OOD_results_2}
\end{figure}

\begin{figure}[!ht]
\centering
% --- First row
\begin{subfigure}[b]{0.248\textwidth}
    \includegraphics[width=\textwidth]{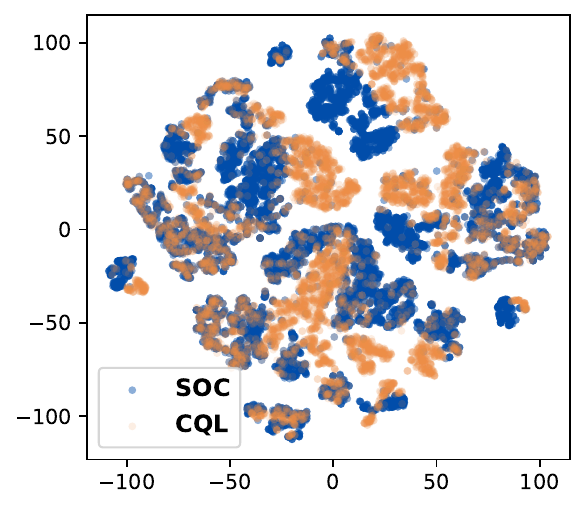}
    \caption{CQL}
    \label{fig:ood_cql_3}
\end{subfigure}%
\begin{subfigure}[b]{0.248\textwidth}
    \includegraphics[width=\textwidth]{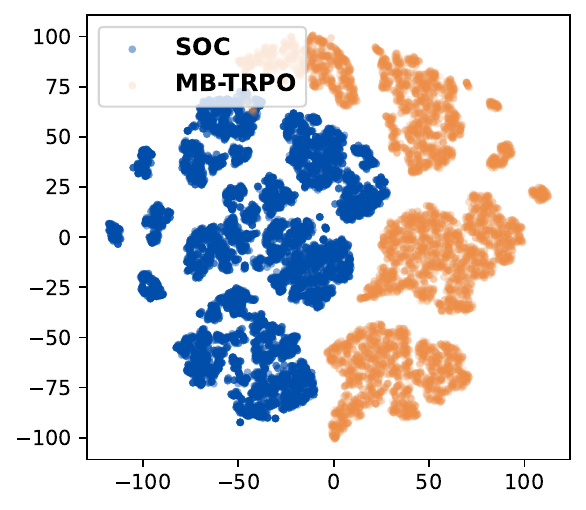}
    \caption{MB-TRPO}
    \label{fig:ood_trop_3}
\end{subfigure}%
\begin{subfigure}[b]{0.248\textwidth}
    \includegraphics[width=\textwidth]{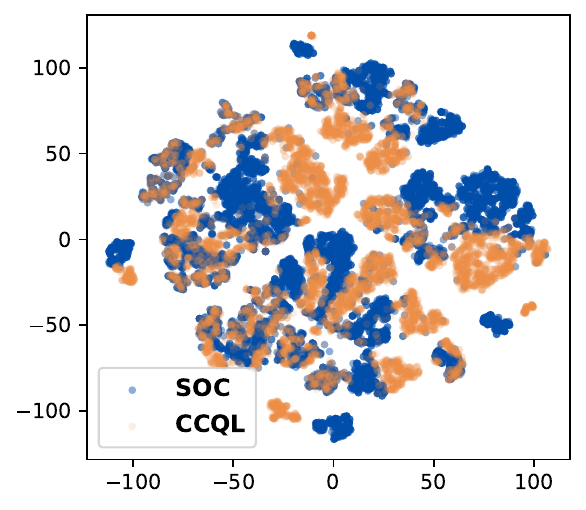}
    \caption{CCQL}
    \label{fig:ood_cql_ts_3}
\end{subfigure}%
\begin{subfigure}[b]{0.248\textwidth}
    \includegraphics[width=\textwidth]{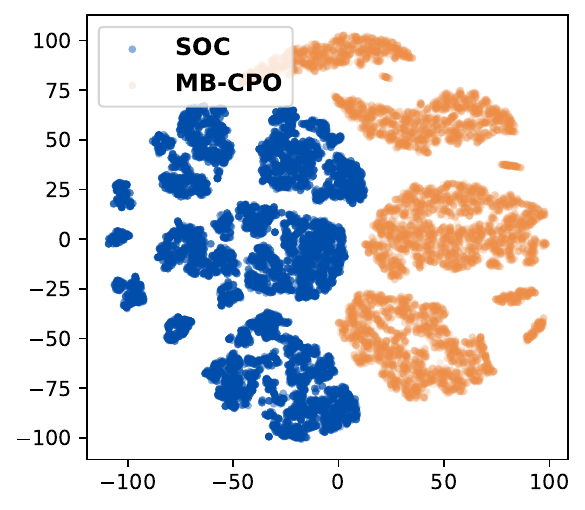}
    \caption{MB-CPO}
    \label{fig:ood_cpo_3}
\end{subfigure}
\vspace{0.05em} % Optional vertical space between rows

% --- Second row
\begin{subfigure}[b]{0.248\textwidth}
    \includegraphics[width=\textwidth]{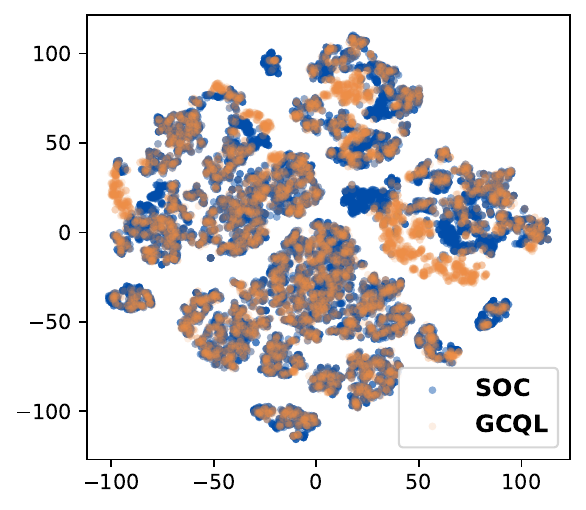}
    \caption{GCQL}
    \label{fig:ood_cql_guard_3}
\end{subfigure}%
\begin{subfigure}[b]{0.248\textwidth}
    \includegraphics[width=\textwidth]{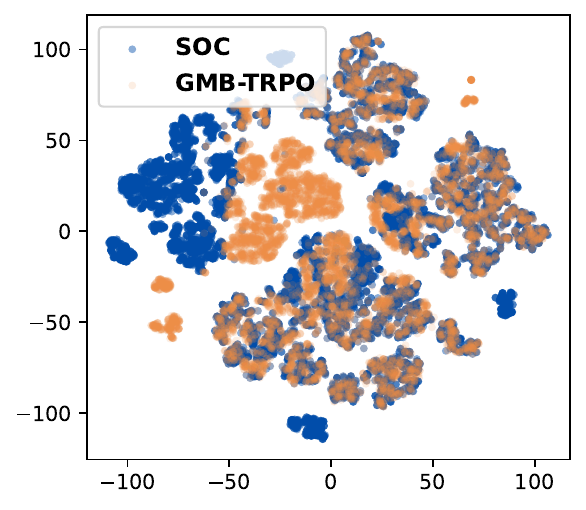}
    \caption{GMB-TRPO}
    \label{fig:ood_trpo_guard_3}
\end{subfigure}%
\begin{subfigure}[b]{0.248\textwidth}
    \includegraphics[width=\textwidth]{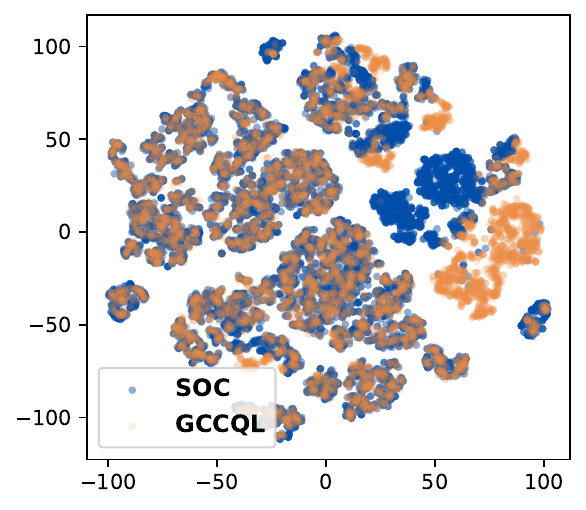}
    \caption{GCCQL}
    \label{fig:ood_cql_guard_ts_3}
\end{subfigure}%
\begin{subfigure}[b]{0.248\textwidth}
    \includegraphics[width=\textwidth]{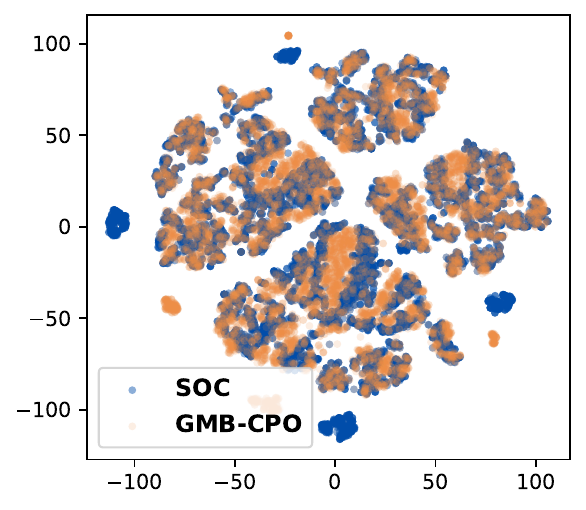}
    \caption{GMB-CPO}
    \label{fig:ood_cpo_guard_3}
\end{subfigure}
\caption{Results on state distributions generated using the fourth seed. The patterns observed here are consistent with those shown in Figure~\ref{fig:OOD_results_0}.}
\label{fig:OOD_results_3}
\end{figure}

\begin{figure}[!ht]
\centering
% --- First row
\begin{subfigure}[b]{0.248\textwidth}
    \includegraphics[width=\textwidth]{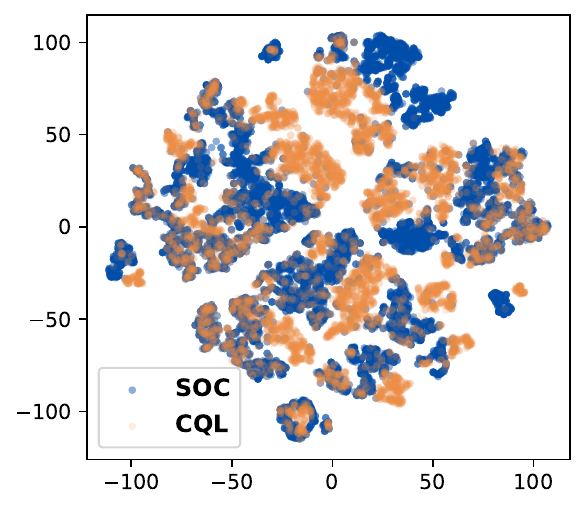}
    \caption{CQL}
    \label{fig:ood_cql_4}
\end{subfigure}%
\begin{subfigure}[b]{0.248\textwidth}
    \includegraphics[width=\textwidth]{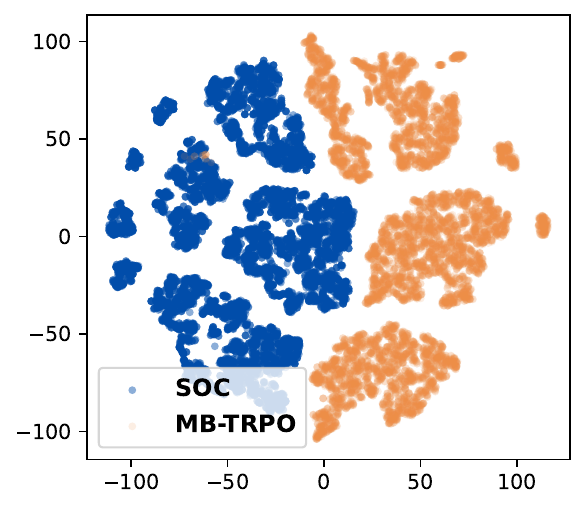}
    \caption{MB-TRPO}
    \label{fig:ood_trop_4}
\end{subfigure}%
\begin{subfigure}[b]{0.248\textwidth}
    \includegraphics[width=\textwidth]{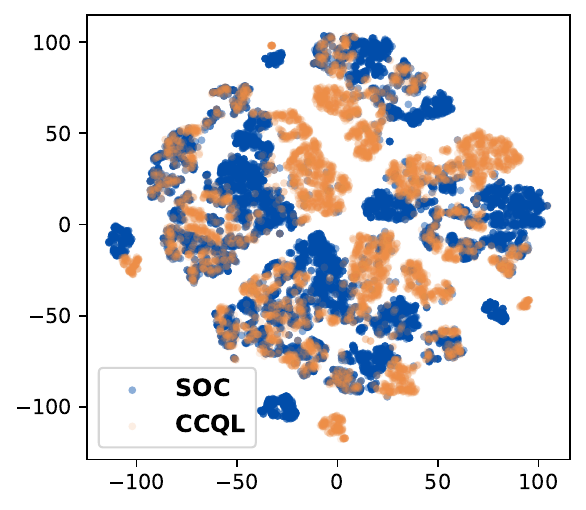}
    \caption{CCQL}
    \label{fig:ood_cql_ts_4}
\end{subfigure}%
\begin{subfigure}[b]{0.248\textwidth}
    \includegraphics[width=\textwidth]{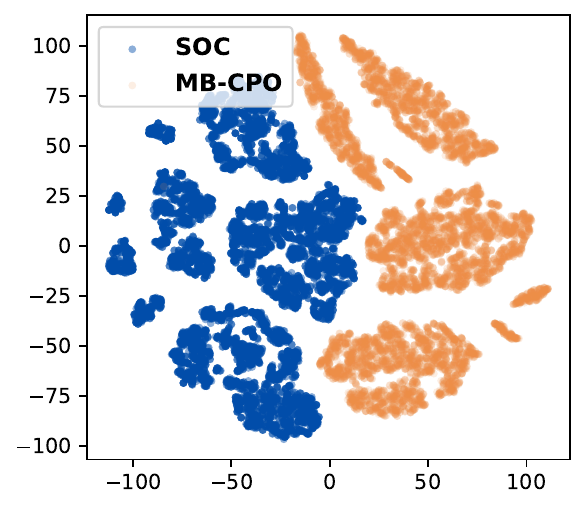}
    \caption{MB-CPO}
    \label{fig:ood_cpo_4}
\end{subfigure}
\vspace{0.05em} % Optional vertical space between rows

% --- Second row
\begin{subfigure}[b]{0.248\textwidth}
    \includegraphics[width=\textwidth]{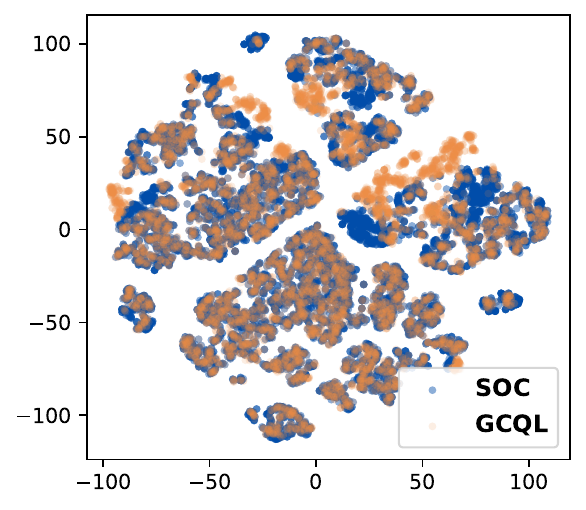}
    \caption{GCQL}
    \label{fig:ood_cql_guard_4}
\end{subfigure}%
\begin{subfigure}[b]{0.248\textwidth}
    \includegraphics[width=\textwidth]{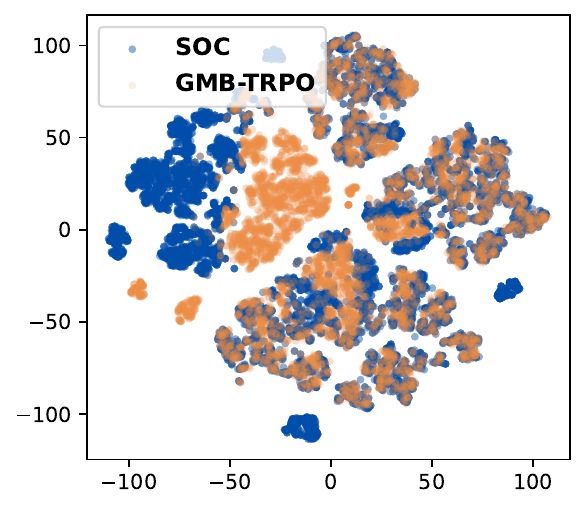}
    \caption{GMB-TRPO}
    \label{fig:ood_trpo_guard_4}
\end{subfigure}%
\begin{subfigure}[b]{0.248\textwidth}
    \includegraphics[width=\textwidth]{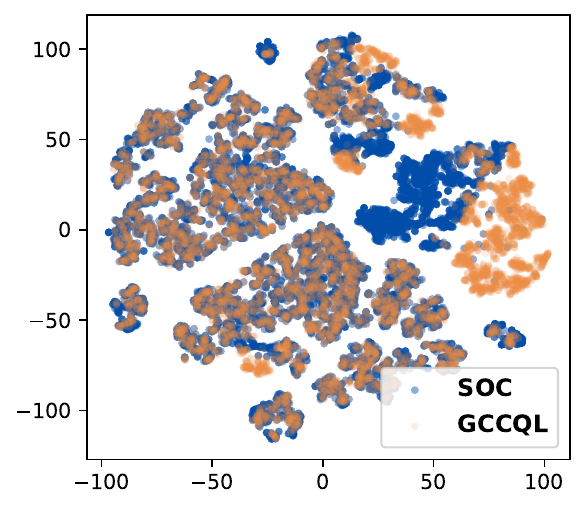}
    \caption{GCCQL}
    \label{fig:ood_cql_guard_ts_4}
\end{subfigure}%
\begin{subfigure}[b]{0.248\textwidth}
    \includegraphics[width=\textwidth]{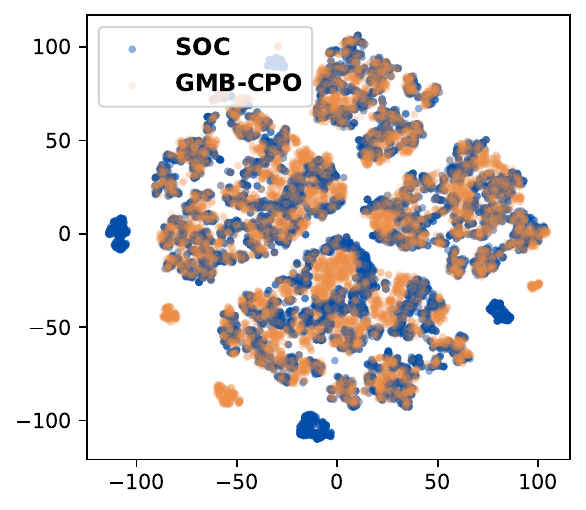}
    \caption{GMB-CPO}
    \label{fig:ood_cpo_guard_4}
\end{subfigure}
\caption{Results on state distributions generated using the fifth seed. The patterns observed here are consistent with those shown in Figure~\ref{fig:OOD_results_0}.}
\label{fig:OOD_results_4}
\end{figure}

\subsubsection{Comparison of Cumulative Reward Distributions with Different Seeds}

\begin{figure*}[!ht]
\centering
\begin{subfigure}[b]{0.24\textwidth}
    \includegraphics[width=\textwidth]{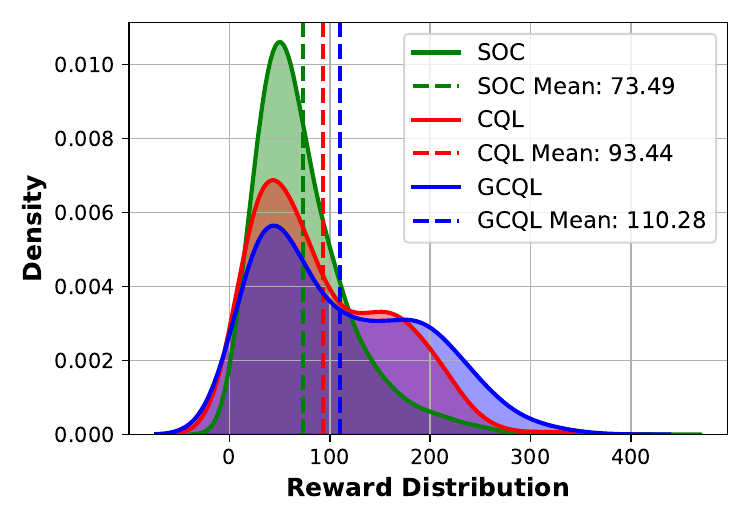}
    \caption{ }
    \label{fig:rewards_cql_guard_1}
\end{subfigure}%
\begin{subfigure}[b]{0.24\textwidth}
    \includegraphics[width=\textwidth]{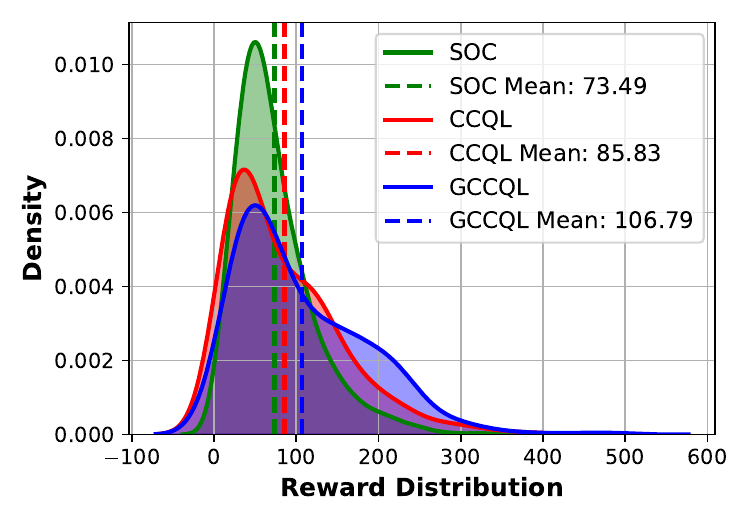}
    \caption{ }
    \label{fig:rewards_cql_sampling_guard_1}
\end{subfigure}
\begin{subfigure}[b]{0.24\textwidth}
    \includegraphics[width=\textwidth]{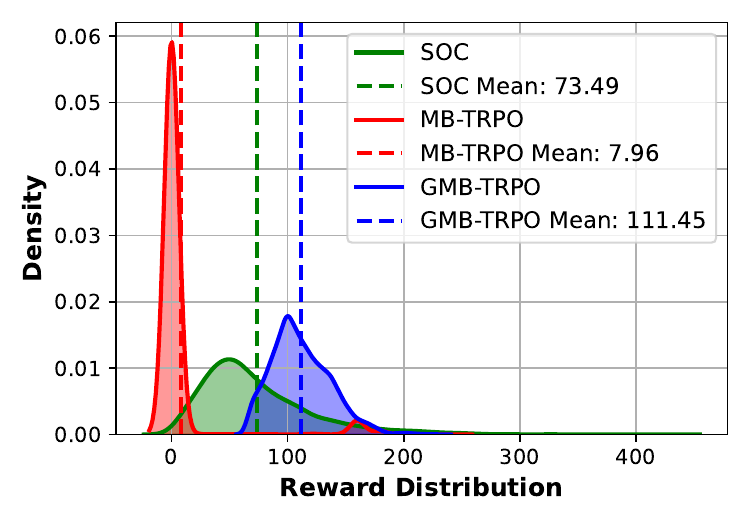}
    \caption{ }
    \label{fig:rewards_trpo_guard_1}
\end{subfigure}%
\begin{subfigure}[b]{0.24\textwidth}
    \includegraphics[width=\textwidth]{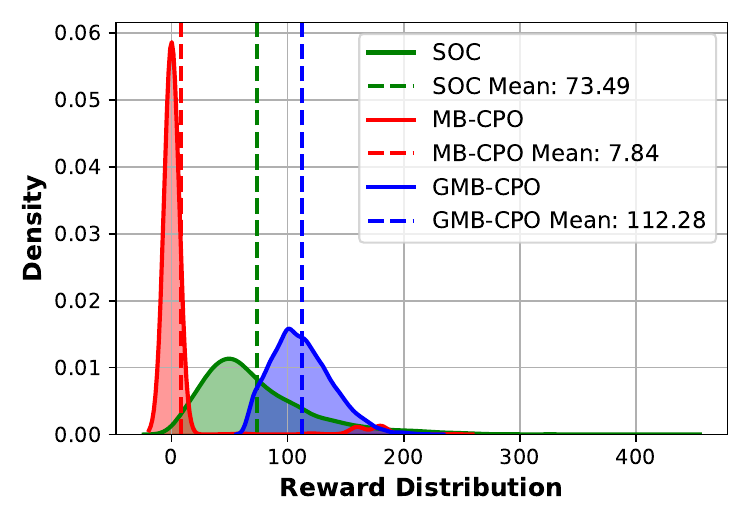}
    \caption{ }
    \label{fig:rewards_cpo_guard_1}
\end{subfigure}
\caption{Comparison of cumulative reward distributions between the SOC (green) and various RL policies with guard mechanisms (blue) across different algorithms using the second seed. The patterns observed here are consistent with those shown in Figure~\ref{fig:rewards_dis}. 
(a) $\mathsf{CQL}$ vs. $\mathsf{GCQL}$;(b) $\mathsf{CCQL}$ vs. $\mathsf{GCCQL}$; (c) $\mathsf{MB\text{-}TRPO}$ vs. $\mathsf{GMB\text{-}TRPO}$; (d) $\mathsf{MB\text{-}CPO}$ vs. $\mathsf{GMB\text{-}CPO}$.
}
\label{fig:rewards_dis_1}
\end{figure*}

\begin{figure}[!ht]
\centering
\begin{subfigure}[b]{0.24\textwidth}
    \includegraphics[width=\textwidth]{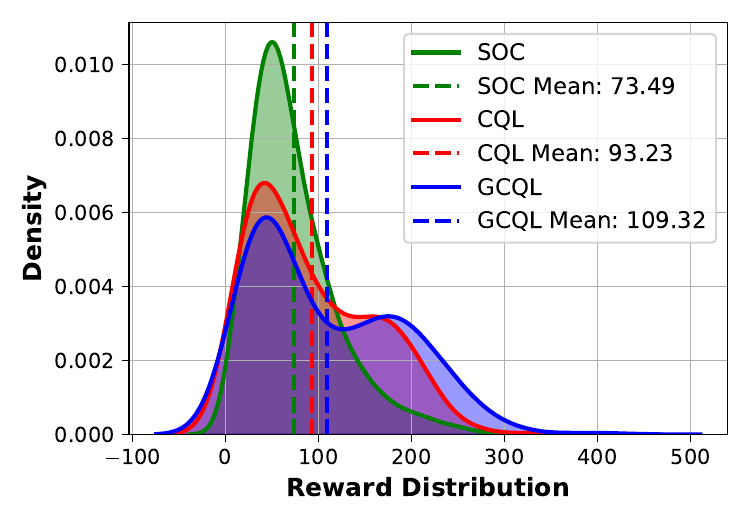}
    \caption{ }
    \label{fig:rewards_cql_guard_2}
\end{subfigure}%
\begin{subfigure}[b]{0.24\textwidth}
    \includegraphics[width=\textwidth]{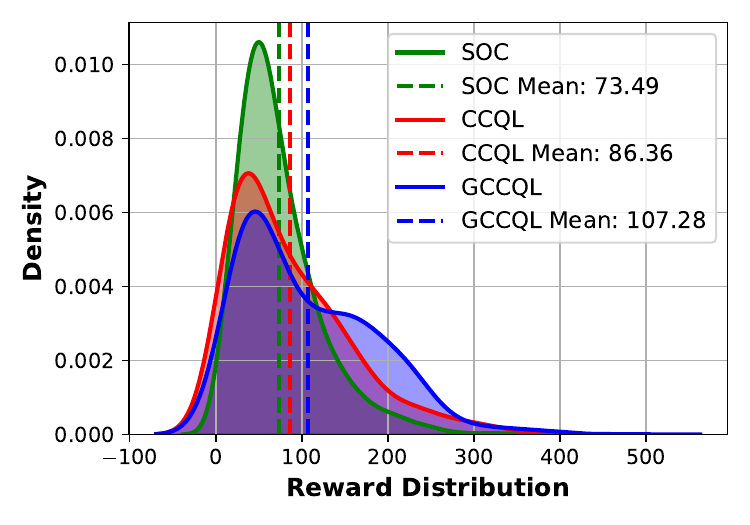}
    \caption{ }
    \label{fig:rewards_cql_sampling_guard_2}
\end{subfigure}
\begin{subfigure}[b]{0.24\textwidth}
    \includegraphics[width=\textwidth]{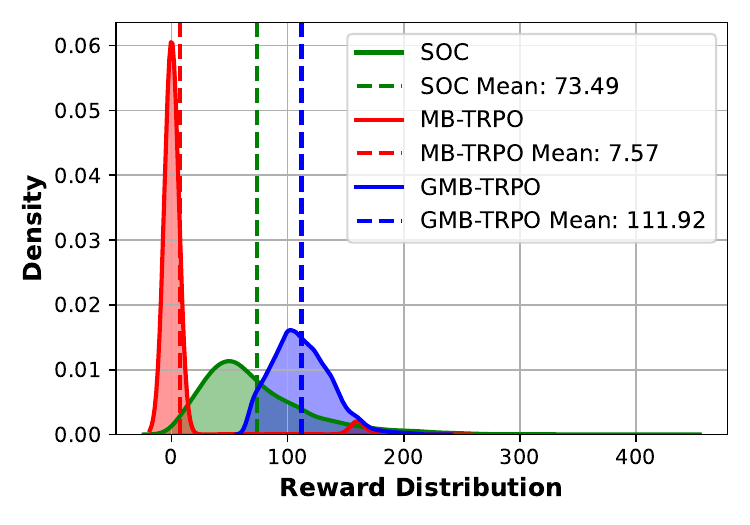}
    \caption{ }
    \label{fig:rewards_trpo_guard_2}
\end{subfigure}%
\begin{subfigure}[b]{0.24\textwidth}
    \includegraphics[width=\textwidth]{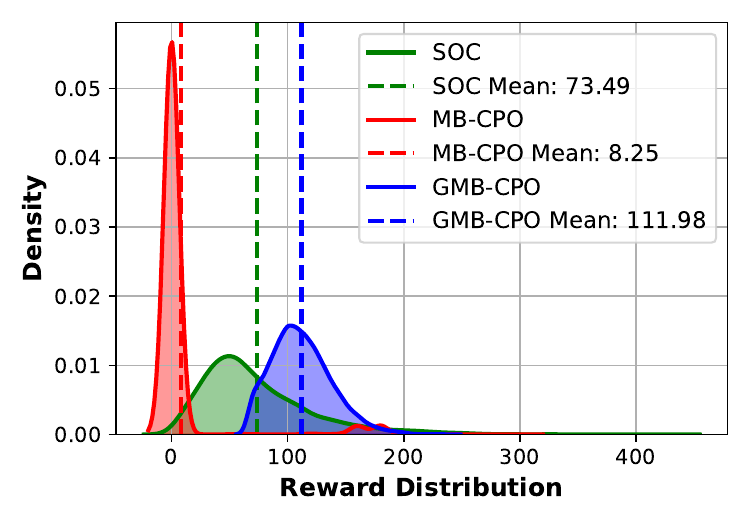}
    \caption{ }
    \label{fig:rewards_cpo_guard_2}
\end{subfigure}
\caption{Comparison of cumulative reward distributions between the SOC (green) and various RL policies with guard mechanisms (blue) across different algorithms using the third seed. The patterns observed here are consistent with those shown in Figure~\ref{fig:rewards_dis}. 
(a) $\mathsf{CQL}$ vs. $\mathsf{GCQL}$;(b) $\mathsf{CCQL}$ vs. $\mathsf{GCCQL}$; (c) $\mathsf{MB\text{-}TRPO}$ vs. $\mathsf{GMB\text{-}TRPO}$; (d) $\mathsf{MB\text{-}CPO}$ vs. $\mathsf{GMB\text{-}CPO}$.
}
\label{fig:rewards_dis_2}
\end{figure}

\begin{figure*}[!ht]
\centering
\begin{subfigure}[b]{0.24\textwidth}
    \includegraphics[width=\textwidth]{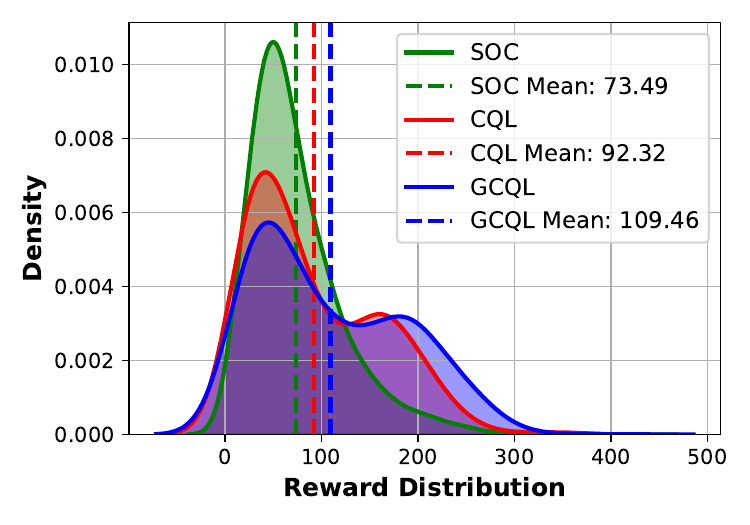}
    \caption{ }
    \label{fig:rewards_cql_guard_3}
\end{subfigure}%
\begin{subfigure}[b]{0.24\textwidth}
    \includegraphics[width=\textwidth]{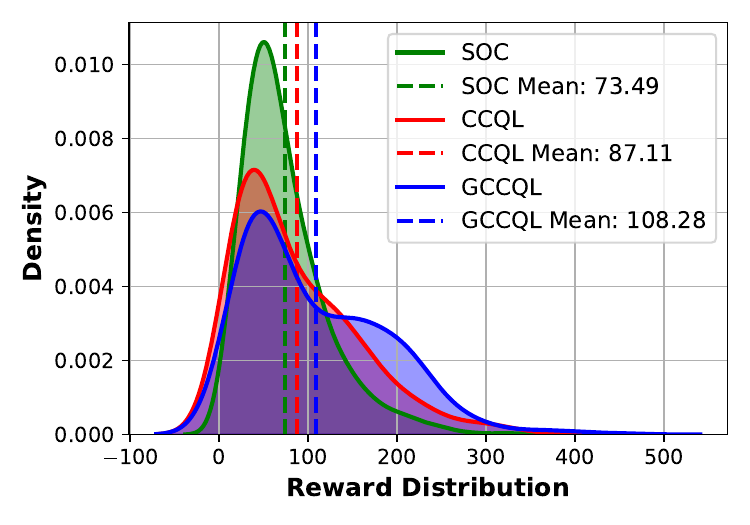}
    \caption{ }
    \label{fig:rewards_cql_sampling_guard_3}
\end{subfigure}
\begin{subfigure}[b]{0.24\textwidth}
    \includegraphics[width=\textwidth]{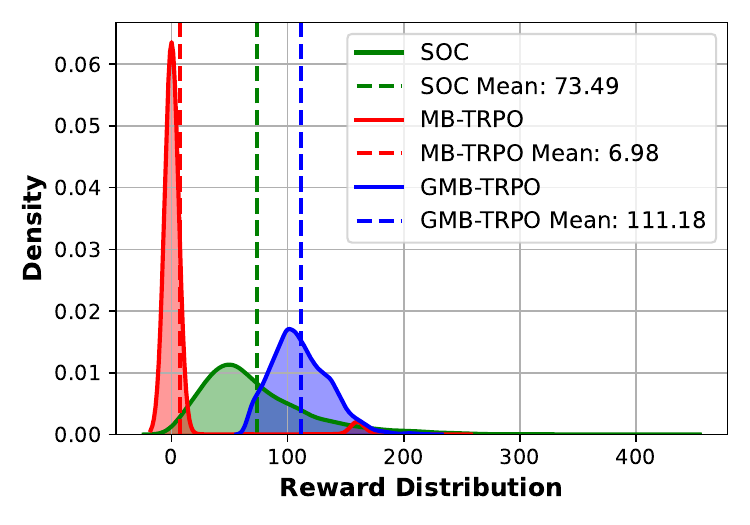}
    \caption{ }
    \label{fig:rewards_trpo_guard_3}
\end{subfigure}%
\begin{subfigure}[b]{0.24\textwidth}
    \includegraphics[width=\textwidth]{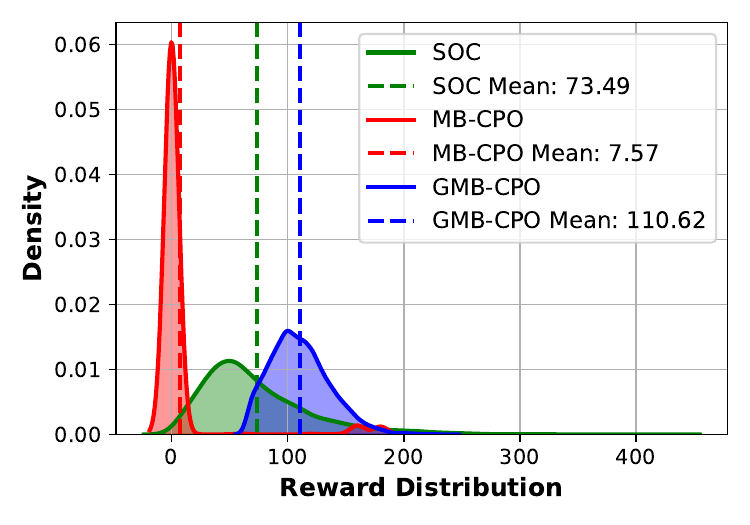}
    \caption{ }
    \label{fig:rewards_cpo_guard_3}
\end{subfigure}
\caption{Comparison of cumulative reward distributions between the SOC (green) and various RL policies with guard mechanisms (blue) across different algorithms using the fourth seed. The patterns observed here are consistent with those shown in Figure~\ref{fig:rewards_dis}. 
(a) $\mathsf{CQL}$ vs. $\mathsf{GCQL}$;(b) $\mathsf{CCQL}$ vs. $\mathsf{GCCQL}$; (c) $\mathsf{MB\text{-}TRPO}$ vs. $\mathsf{GMB\text{-}TRPO}$; (d) $\mathsf{MB\text{-}CPO}$ vs. $\mathsf{GMB\text{-}CPO}$.
}
\label{fig:rewards_dis_3}
\end{figure*}

\begin{figure*}[!ht]
\centering
\begin{subfigure}[b]{0.24\textwidth}
    \includegraphics[width=\textwidth]{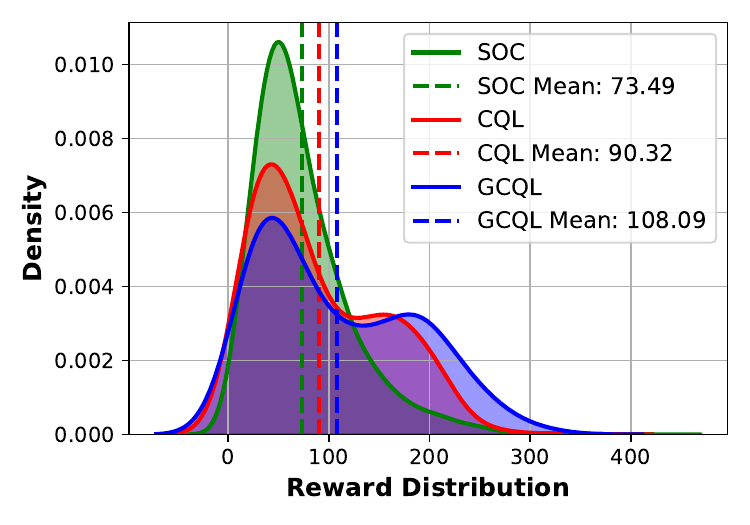}
    \caption{ }
    \label{fig:rewards_cql_guard_4}
\end{subfigure}%
\begin{subfigure}[b]{0.24\textwidth}
    \includegraphics[width=\textwidth]{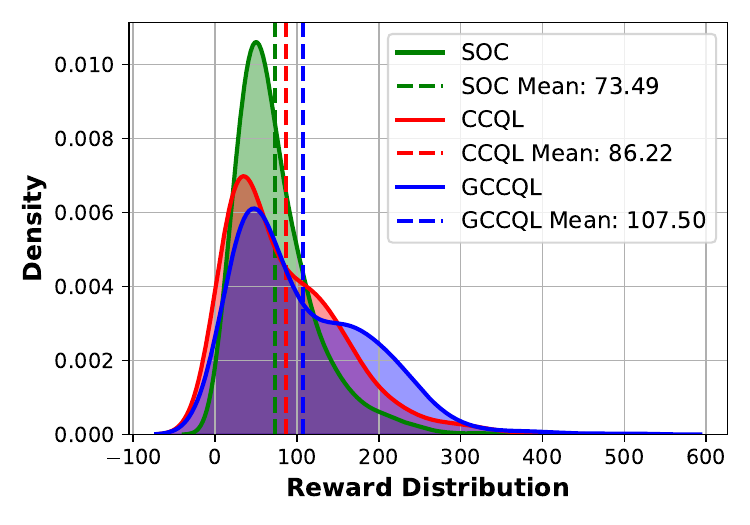}
    \caption{ }
    % \label{fig:rewards_cql_sampling_guard_1}
\end{subfigure}
\begin{subfigure}[b]{0.24\textwidth}
    \includegraphics[width=\textwidth]{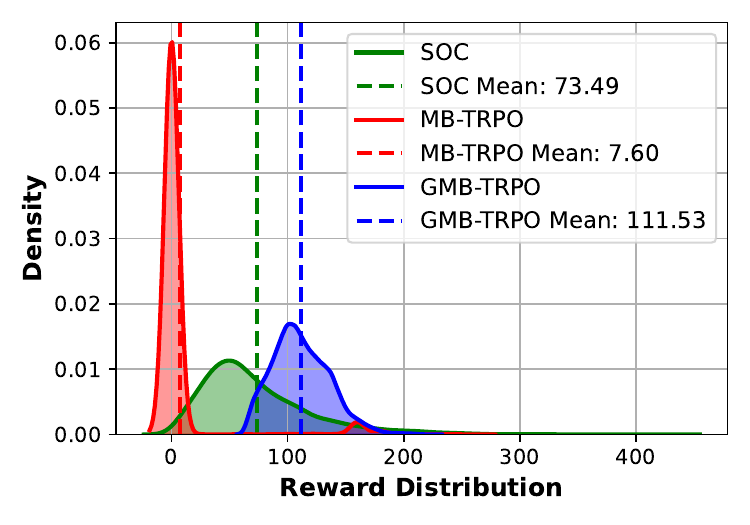}
    \caption{ }
    \label{fig:rewards_trpo_guard_4}
\end{subfigure}%
\begin{subfigure}[b]{0.24\textwidth}
    \includegraphics[width=\textwidth]{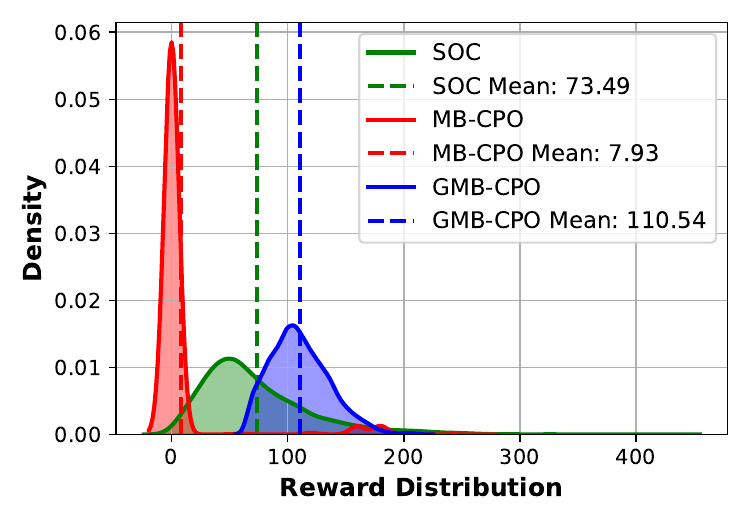}
    \caption{ }
    \label{fig:rewards_cpo_guard_4}
\end{subfigure}
\caption{Comparison of cumulative reward distributions between the SOC (green) and various RL policies with guard mechanisms (blue) across different algorithms using the fifth seed. The patterns observed here are consistent with those shown in Figure~\ref{fig:rewards_dis}. 
(a) $\mathsf{CQL}$ vs. $\mathsf{GCQL}$;(b) $\mathsf{CCQL}$ vs. $\mathsf{GCCQL}$; (c) $\mathsf{MB\text{-}TRPO}$ vs. $\mathsf{GMB\text{-}TRPO}$; (d) $\mathsf{MB\text{-}CPO}$ vs. $\mathsf{GMB\text{-}CPO}$.
}
\label{fig:rewards_dis_4}
\end{figure*}

\subsection{Comprehensive Clinical Efficacy Analysis}
%Quantitative results summarizing OOD violation rates are reported in Table~\ref{table:OOD_violation_results}, further supporting the visual evidence from Figure~\ref{fig:OOD_results}. 
% Guardian-augmented methods consistently achieve much lower OOD violation rates compared to their non-guarded counterparts, demonstrating that the guardian effectively restricts learned policies within the support of the offline dataset.
\textbf{Clinician Policy Alignment}

Table~\ref{table:MCR_AIR_clinical_results} presents a quantitative comparison of clinical alignment across methods using four metrics: MCR, AIR, ME, and ACP.

\textbf{MCR.} The alignment between policy recommendations and clinician decisions is reflected by MCR. Higher MCR values indicate stronger behavioral mimicry, which enhances clinical interpretability and acceptance. Model-free approaches such as CQL and CCQL demonstrated moderate concordance ($0.789$ and $0.827$, respectively), while model-based methods including $\mathsf{MB\text{-}TRPO}$ and $\mathsf{MB\text{-}CPO}$ showed near-zero concordance, highlighting instability in unconstrained model-based training. The integration of OOD guardians significantly improved all model variants. GCQL achieved the highest concordance ($0.909$), while $\mathsf{GMB\text{-}TRPO}$ and $\mathsf{GMB\text{-}CPO}$ substantially outperformed their baseline counterparts, demonstrating the guardian's efficacy in promoting clinically familiar behavior.

\textbf{AIR.} AIR measures a learned policy's ability to intensify treatment when physiological deterioration occurs (e.g., decreased SpO$_{2}$ or urine output). Policies with high AIR values respond appropriately to emerging risks. Without guardian augmentation, AIR remained low across all base methods, $\mathsf{CQL}$ ($0.130$), $\mathsf{CCQL}$ ($0.039$), and MB methods ($<0.05$)—indicating under-responsive policies. Guardian-augmented approaches substantially improved AIR, with $\mathsf{GMB\text{-}CPO}$ ($0.448$) and $\mathsf{MB\text{-}CPO}$ ($0.496$) demonstrating the greatest responsiveness, suggesting more adaptive and clinically aligned escalation behavior.

\textbf{ME.} ME predicts expected mortality outcomes, with lower values indicating improved survival rates. Compared to standard of care (SOC: 0.0632), most baseline methods showed slight improvements ($\mathsf{CQL}$: 0.0486, $\mathsf{CCQL}$: 0.0481). Guardian-enhanced policies further reduced mortality estimates, with $\mathsf{GMB\text{-}CPO}$ achieving the lowest value (0.0138), followed by $\mathsf{GMB\text{-}TRPO}$ (0.0232). This suggests that guardian integration not only enhances safety but also improves health outcomes. Model-based methods without guardians produced missing or undefined mortality estimates due to trajectory instability.

\textbf{ACP.} ACP quantifies the magnitude of change in policy recommendations between consecutive time points. Lower values indicate smoother, more stable treatment suggestions—a critical property for clinical implementation, as abrupt changes in medication dosage or fluid administration can cause physiological disruption or compromise patient safety. For MVD and IFA, $\mathsf{CQL}$ and $\mathsf{CCQL}$ produced relatively stable policies (ACP of 4.18 and 3.74; ACP of $543$ and 460, respectively). 
In contrast, $\mathsf{MB\text{-}TRPO}$ and $\mathsf{MB\text{-}CPO}$ exhibited substantially higher ACP values—$48.1$ to $50.5$ for MVD and up to $4.92e^4$ for IFA—indicating erratic treatment recommendations unsuitable for clinical application.

Guardian integration dramatically improved action smoothness. $\mathsf{GMB\text{-}CPO}$ (ACP:MVD: $4.34$, ACP:IFA: $647$) closely matched standard-of-care values ($4.34$ and $648$), while $\mathsf{GCQL}$ and $\mathsf{GCCQL}$ maintained consistently low ACP values. These findings suggest that OOD guardians effectively regularize learned policies by discouraging unstable transitions in treatment trajectories, resulting in smoother, safer interventions that better align with clinical expectations and practices.

In summary, guardian-enhanced methodologies outperform their baseline counterparts across all metrics. Model-based approaches without guardian constraints prove unreliable due to excessive generalization, while guardian integration restores alignment with clinical norms. Among all evaluated methods, $\mathsf{GMB\text{-}CPO}$ delivers the most balanced performance, demonstrating strong MCR, AIR, ME, and treatment prescriptions closely resembling real-world clinical practice (ACP). These findings validate the proposed OOD guardian as an effective mechanism for ensuring safe, effective, and trustworthy policy learning in offline medical reinforcement learning.

Moreover, implicit policy (sequence of actions) adopted by clinicians might not be optimal, but individual decisions are the least safe. This is because even human experts have limited ability to integrate a patient's full historical state into consideration. Individual action is safe, but it is derived from a rigid rule or greedy fashion to achieve a short-term goal. What a capable and safe offline RL learns from observational data is a "dynamic" policy that considers the full state history of a patient with safe individual actions.

%\begin{table}[ht]
%\label{table:clinical_outcomes}
%\centering
%\caption{Clinical Outcomes and Treatment Controlled by RL Models (mean $\pm$ SD).}
%\begin{tabular}{lccc}
%\toprule
%\textbf{Method} &
%\textbf{Mortality} &
%\textbf{ACP: Max Vaso. Dose} &
%\textbf{ACP: IV Fl. Amo.} \\
%\midrule
%$\mathsf{CQL}$ & 4.86e-2 $\pm$ 5.40e-3 & 4.18e0 $\pm$ 1.29e-1 & 5.43e2 $\pm$ 8.30e0 \\
%$\mathsf{GCQL}$ & 5.53e-2 $\pm$ 2.14e-3 & 3.13e0 $\pm$ 3.27e-2 & 1.51e2 $\pm$ 3.35e0 \\
%$\mathsf{CCQL}$ & 4.81e-2 $\pm$ 3.39e-3 & 3.74e0 $\pm$ %6.60e-2 & 4.60e2 $\pm$ 2.73e0 \\
%$\mathsf{GCCQL}$ & 5.17e-2 $\pm$ 1.42e-3 & 3.23e0 $\pm$ 1.10e-1 & 2.73e2 $\pm$ 1.06e1 \\
%$\mathsf{MB\text{-}TRPO}$ & NA $\pm$ -- & 4.81e1 $\pm$ %1.21e-1 & 1.67e5 $\pm$ 1.78e2 \\
%$\mathsf{GMB\text{-}TRPO}$ & 2.32e-2 $\pm$ 4.91e-3 & 1.24e0 %$\pm$ 2.61e-2 & 9.85e2 $\pm$ 6.29e0 \\
%$\mathsf{MB\text{-}CPO}$ & NA $\pm$ -- & 5.05e1 $\pm$ 1.21e-1 & 4.92e4 $\pm$ 1.12e2 \\
%$\mathsf{GMB\text{-}CPO}$ & 1.38e-2 $\pm$ 4.82e-3 & 4.34e0 $\pm$ 5.15e-2 & 6.47e2 $\pm$ 1.75e0 \\
%SOC & 6.32e-2 $\pm$ -- & 4.34e0 $\pm$ -- & 6.48e2 $\pm$ -- \\
%\bottomrule
%\end{tabular}
%\end{table}

\noindent
\textbf{Treatment Effectiveness and Reward Distribution.} 

Figure~\ref{fig:rewards_dis} compares the cumulative reward distributions of policies learned by different RL algorithms with guardian (blue) against algorithms without guardian (red) and the standard clinical policy (green). Across all algorithms, policies trained with the guardian achieve substantially higher mean cumulative rewards than the policies trained without the guardian, highlighting the potential of the guardian to improve RL-based treatment outcomes. 
Notably, model-based methods such as $\mathsf{GMB\text{-}TRPO}$ and $\mathsf{GMB\text{-}CPO}$ not only yield higher reward means but also exhibit more concentrated reward distributions compared to model-free approaches ($\mathsf{GCQL}$ and $\mathsf{GCCQL}$), demonstrating improved robustness.
Furthermore, while $\mathsf{GMB\text{-}TRPO}$ and $\mathsf{GMB\text{-}CPO}$ exhibit similar performance in reward, the latter achieves lower safety costs (see Figures~\ref{fig:key_state_space_results_1} and \ref{fig:key_state_space_results_2}), confirming its superior balance between reward optimization and safety compliance. Besides, for the safety cost, compared to the model-free methods $\mathsf{GCQL}$ and $\mathsf{GCCQL},$ $\mathsf{GM-CPO}$ shows a better similarity with the standard of care (see Figure~\ref{fig:key_state_space_safety}). Combined with the MCR and AIR results (Tables~\ref{table:MCR_AIR_clinical_results}), which show stronger alignment between guardian-enhanced policies and clinician decisions, these findings suggest that the OOD guardian improves not only consistency with expert behavior but also outcome quality. Among all methods, $\mathsf{GMB\text{-}CPO}$ achieves the best trade-off between safety and reward, addresses both OOD action and state issues, and produces robust, high-quality policies aligned with clinical practices.

\textbf{Physiological Safety.}

As shown in Figure~\ref{fig:key_state_space_safety}, we evaluate the physiological safety of our learned treatment policies by analyzing SpO$_2$ and urine output-specifically chosen because they serve as our explicitly constrained physiological safety states in the $\mathsf{OGSRL}$ framework. The results clearly demonstrate that our proposed guardian consistently reduced unsafe states compared to their non-guardian counterparts. For SpO$_2$, $\mathsf{GCQL}$ achieved the most substantial improvement (59.4\% reduction in unsafe states), while $\mathsf{GCCQL}$ and $\mathsf{GMB-CPO}$ demonstrated strong performance with 28.8\% and 49.8\% reductions, respectively. Only $\mathsf{MB-TRPO}$ significantly worsened respiratory safety with a 90.7\% increase in unsafe states, highlighting the danger of unconstrained exploration in high-stakes domains. For urine output, guardian-based methods again outperformed their counterparts, with $\mathsf{GMB\text{-}TRPO}$ and $\mathsf{GMB\text{-}CPO}$ achieving 19.0\% and 19.6\% reductions in unsafe states. The dual-safety mechanism in $\mathsf{GMB\text{-}CPO}$, combining explicit physiological constraints with the OOD guardian, demonstrated balanced performance across both measures. Notably, even $\mathsf{GMB\text{-}TRPO}$, which lacks explicit safety constraints, significantly improved safety through guardian-based restriction of OOD regions. This highlights how the guardian mechanism indirectly preserves physiological safety by constraining policies to clinically validated regions. Interestingly, we observe that model-free methods ($\mathsf{CQL}$, $\mathsf{GCQL}$, $\mathsf{CCQL}$, $\mathsf{GGCQL}$) improve SpO$_2$ safety but increase unsafe states for urine output. This pattern likely originates from SpO$_2$ responding quickly to interventions, while urine output depends on complex, delayed effects of fluid management and hemodynamic stability. Without explicit modeling of physiological dynamics, model-free methods struggle to capture these delayed treatment effects, despite successfully constraining actions to clinically observed patterns through the guardian mechanism.

\section{Limitations}
\label{appendix:limitations}
The $\mathsf{OGSRL}$ framework exhibits several constraints that merit consideration. Its conservative approach, while ensuring safety, potentially restricts the discovery of innovative treatment strategies beyond observed clinical practices—particularly relevant in evolving sepsis management. Despite advancing toward continuous representation, the implementation still simplifies the multifaceted nature of sepsis interventions, which typically encompass antibiotics, ventilation adjustments, and nutritional support beyond the modeled fluid and vasopressor dimensions. The fixed 4-hour discretization window fails to capture the rapid physiological fluctuations that might necessitate more frequent clinical interventions. Generalizability concerns arise from the MIMIC-III dataset's limited institutional scope, as treatment patterns from a single hospital system may not translate across diverse healthcare settings with varying protocols and patient demographics. Moreover, the guardian mechanism sacrifices interpretability for statistical robustness, creating potential barriers to clinical trust since its safety boundaries emerge from complex statistical properties rather than transparent medical reasoning. All these limitations are practical challenges in applications, and appropriate adaptations to the proposed $\mathsf{OGSRL}$ framework will be implemented for real-world clinical deployment.

\section{Experiments Compute Resources}
\label{appendix:experiments_compute_resources}
All experiments were conducted on a high performance computing (HPC) cluster equipped with
NVIDIA A100 and V100 GPUs.

\section{Broader Impact}
\label{appendix:broader}
The proposed framework, \textit{Offline Guarded Safe Reinforcement Learning} ($\mathsf{OGSRL}$), aims to improve treatment decision-making in high-stakes clinical settings using offline reinforcement learning. By introducing an OOD guardian and explicit safety cost constraints, $\mathsf{OGSRL}$ enables the development of safe and reliable treatment policies that remain grounded in observed clinical data. This is particularly impactful in domains such as ICU treatment, where policy optimization must adhere to strict safety boundaries due to patient risk.

The primary benefit of this work lies in its ability to learn treatment strategies that outperform clinician policies while preserving safety and trustworthiness. Since our method constrains policy learning within the support of historical clinician decisions, it ensures that learned interventions do not extrapolate dangerously beyond medical expertise. Furthermore, including theoretical safety guarantees makes our framework more suitable for deployment in clinical decision-support tools than prior offline RL approaches that lack such safeguards.

However, like all machine learning methods applied to healthcare, there are risks. Improper interpretation or deployment of learned policies without proper clinical oversight could lead to misuse. We strongly emphasize that $\mathsf{OGSRL}$ is designed as a decision-support tool, not a substitute for human medical judgment.

To mitigate potential negative impacts, we advocate for responsible deployment in collaboration with healthcare professionals, rigorous post-hoc evaluation in simulated environments, and continuous monitoring in real-world applications. By combining domain knowledge with safe offline learning, we believe our framework contributes positively to the development of transparent, interpretable, and trustworthy AI systems for healthcare.

%%%%%%%%%%%%%%%%%%%%%%%%%%%%%%%%%%%%%%%%%%%%%%%%%%%%%%%%%%%%

\end{document}